\crefname{hypothesis}{Hypothesis}{Hypotheses}
\title{Strongly-Polynomial Time and Validation Analysis of\\
Policy Gradient Methods\thanks{First submitted on 09/27/2024. Authors listed alphabetically.}}
\author{Caleb Ju\thanks{H. Milton Stewart School of Industrial \& Systems Engineering 
  (\email{calebju4@gatech.edu}). Supported by Department of Energy
Computational Science Graduate Fellowship under
Award Number DE-SC0022158.}
\and Guanghui Lan\thanks{H. Milton Stewart School of Industrial \& Systems Engineering 
  (\email{george.lan@isye.gatech.edu}).
  Supported by Air Force Office of Scientific Research grant FA9550-22-1-0447.
  }}
\newcommand*{\addFileDependency}[1]{
  \typeout{(#1)}
  \@addtofilelist{#1}
  \IfFileExists{#1}{}{\typeout{No file #1.}}
}
\newcommand*{\myexternaldocument}[1]{%
    \externaldocument{#1}%
    \addFileDependency{#1.tex}%
    \addFileDependency{#1.aux}%
}
\newcommand{\cA}{\mathcal{A}}
\newcommand{\cS}{\mathcal{S}}
\newcommand{\cP}{\mathcal{P}}
\newcommand{\edits}[1]{#1}
\newcommand{\fixed}[1]{#1}
\newcommand{\Qt}{Q^{\pi_t}}
\newcommand{\tQt}{\tilde{Q}^{\pi_t}}
\newcommand{\Qts}{Q^{\pi_t}(s, \cdot)}
\newcommand{\tQts}{\tilde{Q}^{\pi_t}(s, \cdot)}
\newcommand{\Qtq}{Q^{\pi_t}(q, \cdot)}
\newcommand{\tQtq}{\tilde{Q}^{\pi_t}(q, \cdot)}
\newcommand{\piq}{\pi(\cdot \vert q)}
\newcommand{\pits}{\pi_t(\cdot \vert s)}
\newcommand{\pitq}{\pi_t(\cdot \vert q)}
\newcommand{\psit}{\psi^{\pi_t}}
\newcommand{\DA}{\Delta_{\vert \cA \vert}}
\newlength{\leftstackrelawd}
\newlength{\leftstackrelbwd}
\def\leftstackrel#1#2{\settowidth{\leftstackrelawd}%
{${{}^{#1}}$}\settowidth{\leftstackrelbwd}{$#2$}%
\addtolength{\leftstackrelawd}{-\leftstackrelbwd}%
\leavevmode\ifthenelse{\lengthtest{\leftstackrelawd>0pt}}%
{\kern-.5\leftstackrelawd}{}\mathrel{\mathop{#2}\limits^{#1}}}
\let\texdisplaystyle\displaystyle
\def\displaytotextstyle{\textstyle\let\displaystyle\texdisplaystyle}
\newenvironment{talign}
 {\let\displaystyle\displaytotextstyle\align}
 {\endalign}
\newenvironment{talign*}
 {\let\displaystyle\displaytotextstyle\csname align*\endcsname}
 {\endalign}
\begin{document}

\maketitle

\noindent {\sl \scriptsize 
Dedicated to Professor Yinyu Ye in honor of his foundational contributions to optimization and Markov Decision Processes, on the occasion of his retirement celebration from Stanford University.
}

\begin{abstract}
    This paper proposes a novel termination criterion,  termed the advantage gap function, for finite state and action Markov decision processes (MDP) and reinforcement learning (RL). 
    By incorporating this advantage gap function into 
    the design of step size rules and deriving a new linear rate of convergence that is independent of the stationary state distribution of the optimal policy, we demonstrate that policy gradient methods can solve MDPs in strongly-polynomial time.
    To the best of our knowledge, this is the first time that such strong convergence properties have been established for policy gradient methods.
Moreover, in the stochastic setting, where only stochastic estimates of policy gradients are available, we show that the advantage gap function provides 
close approximations of
the optimality gap for each individual state and exhibits a 
sublinear rate of convergence at every state.
The advantage gap function can be easily estimated in the stochastic case, and
when coupled with easily computable upper bounds on policy values, they provide
a convenient way to validate the solutions generated by policy gradient methods. Therefore, our developments offer a principled and computable measure of optimality for RL, whereas current practice tends to rely on algorithm-to-algorithm or baseline comparisons with no certificate of optimality.
\end{abstract}

\begin{keywords}
  reinforcement learning, policy gradient, strongly-polynomial, validation analysis, termination criteria
\end{keywords}

\begin{AMS}
  49K45, 49M05, 90C05, 90C26, 90C40, 90C46
\end{AMS}

\section{Introduction}
Reinforcement learning (RL) generally refers to Markov Decision Processes (MDP) with unknown transition kernels. The increasing interest in applying RL to real-world applications over the last decade is fueled not only by its success in domains like robotics, resource allocation, and optimal control~\cite{hu23towards,khan2020systematic,enda13applying}, but more recently in strategic game play (i.e., artificial intelligence for video games) and training large-language models via reinforcement learning from human feedback~\cite{mnih2015human,ouyang2022training}.
Such empirical successes have inspired intensive research on the development of principled MDP and RL algorithms during the last decade.

Depending on the mathematical formulations and the sub-fields from which the technology originates, MDP/RL algorithms can be grouped into
three different categories: dynamic optimization, linear optimization, and nonlinear optimization methods.
Dynamic optimization methods include the classic value iteration and policy iteration, as well as their stochastic variants, e.g.,~stochastic value iteration and Q-learning~\cite{sidford2018near,watkins1992q,sutton2018reinforcement}. Since certain important MDP/RL problems, such as those in finite state and action spaces, can be formulated as linear programs, several key linear optimization algorithms (e.g.,~Simplex methods, interior point methods, first-order methods and their stochastic variants)
have been proposed for MDP/RL~\cite{jin2020efficiently,wang2020randomized,ye2005new,ye2011simplex,puterman2014markov}. More recently, nonlinear optimization methods, particularly those based on policy gradient methods, 
have attracted much attention in both industry and academia~\cite{sutton1999policy,williams1992simple,mnih2015human,schulman2017proximal,lan2023policy,agarwal2021theory,bhandari2024global}.
Compared to the previous two categories, these nonlinear programming based methods offer several significant advantages.
First, they can handle large,  even continuous, state and action spaces by incorporating value function approximation techniques~\cite{agarwal2021theory,sutton1999policy,sutton2018reinforcement,lan2022policy}.
Second, they can effectively operate in various stochastic environments, such as using generative models or on-policy sampling to access random observations from the transition kernel in RL~\cite{sutton2018reinforcement}. Third, they can efficiently process and even benefit from nonlinear components (e.g.,~regularization terms) existing in
MDP/RL formulations ~\cite{li2021approximate,cen2021fast,neu2017unified}. On the other hand, while there exists a rich theoretical foundation for classical dynamic optimization and linear optimization methods, theoretic studies for nonlinear policy gradient methods are still lacking behind. 


One prominent issue lies in the theoretical convergence guarantees that policy gradient methods provide for their approximate solutions to MDPs.
Although dynamic and linear optimization methods bound the optimality gap at every state, most policy gradient methods bound the optimality gap that is averaged over states with respect to the stationary state distribution of the optimal policy.
Note that this distribution is unknown and problem-dependent. Moreover, the optimality gap averaged over states is only necessary but not sufficient for the optimality gap to be small at every state.
Therefore, the standard notion of approximate optimality in most policy gradient methods is weaker than in dynamic and linear optimization.
In a similar vein, it is known that MDPs can be solved to optimality in strongly-polynomial time using some dynamic optimization approaches (e.g.,~policy iteration~\cite{ye2011simplex}) as well as linear optimization methods~\cite{ye2005new,ye2011simplex}. Yet, such strong convergence guarantees are unknown for policy gradient methods.

Another substantial issue exists in the termination for policy gradient methods, especially under the aforementioned stochastic environments for RL. 
In fact, this issue is also shared by other dynamic optimization methods, such as stochastic value iteration and Q-learning.
More specifically, the solutions generated by dynamic optimization or policy gradient type algorithms are typically judged by their empirical costs compared to other competing algorithms or some a priori threshold, e.g., based on prior knowledge of the environment or a human baseline~\cite{mnih2015human,agarwal2021deep,schulman2017proximal,henderson2018deep}.
Linear programming, on the other hand, provides both easily accessible primal objective values and duality gaps to monitor the progress of the algorithm. 
Because such a convenient and computable gap is not currently known for other RL methods, it is in general challenging to determine when a sufficiently good policy has been found, especially for new and complex environments.
Moreover, even obtaining accurate estimates of the objective function is non-trivial in RL, since the underlying MDP and RL algorithm are stochastic. 
This is further exacerbated by the fact the cost function in RL, which is the expectation of a random infinite-horizon sum, can have a large variance.
A common solution is to run an RL algorithm across a small number of seeds (e.g.,~3 or 5), and then plot confidence intervals or report some statistics of Monte Carlo estimates of the objective function from each seed~\cite{engstrom2020implementation,schulman2017proximal,agarwal2021deep}.
However, scaling this to more seeds to further reduce estimation errors is intractable since even training one seed can take millions of simulation steps for certain RL problems~\cite{mnih2015human}.

In this paper, we attempt to address these aforementioned issues associated with policy gradient methods, and some of our results can also be extended to other dynamic optimization methods.
Central to our development is a novel termination criterion called the \textit{advantage gap function};
see~\eqref{eq:gap_function_definition} for a formal definition.
We show the advantage gap function being small is necessary and sufficient for the optimality gap to be small at every state (see Proposition~\ref{prop:gap_functions}). 
This is stronger than previous notions of approximate optimality for policy gradient methods, which only bound the aggregated optimality gap, i.e.,~the optimality gap averaged over the steady state distribution of the (unknown) optimality policy, denoted by $\nu^*$.
We call such strong convergence guarantees \textit{distribution-free}, since they do not depend on the distribution $\nu^*$.
Importantly, by incorporating a novel ``scheduled'' geometrically increasing step size rule, we show that  the policy mirror descent (PMD)~\cite{lan2023policy}, a recently developed first-order method, can achieve linear rates of convergence that are distribution-free.
This is the first time such strong convergence results have been shown for first-order methods.
Additionally, by embedding the advantage gap function into the aforementioned ``scheduled'' step size rule for solving MDPs without regularization, we can improve the runtime of PMD to be strongly-polynomial.
For the first time, this extends the celebrated result of Ye, who showed the simplex method and Howard's policy iteration are strongly-polynomial~\cite{ye2011simplex}, to first-order methods.

It turns out the advantage gap function closely approximates the optimality gap in that it can be used to measure a lower bound on the optimality gap at each state and a universal upper bound on the optimality gap for all states. In particular, 
we show that stochastic PMD for solving RL can minimize the advantage gap function at a sublinear rate of convergence that is distribution-free.
This result ensures the policy value function and advantage gap provide accessible estimates that closely approximate the objective value and optimality gap for RL, respectively, similar to the primal objective and duality gap in linear programming methods. 
Moreover, since both the policy value function and advantage gap function are stochastic in RL and must be estimated by samples, we show their estimation errors can also be reduced at a similar sublinear rate of convergence that is distribution-free.
This ensures the proposed quantities can be reliably estimated and are suitable to use as a termination criterion and performance metric for RL. To the best of our knowledge, this is the first time such validation analysis procedures have been developed for solving these highly nonconvex RL problems, while some related previous studies have been restricted to stochastic convex optimization only~\cite{lan2012validation,lan2020first}.

This paper is organized as follows. We introduce the reinforcement learning and the advantage gap function in Section~\ref{sec:prob_interest}.
With the problem setup done, we also establish some duality theory for policy mirror descent at the end of Section~\ref{sec:prob_interest}.
Section~\ref{sec:deterministic_pmd} then establishes distribution-free convergence rates for the deterministic setting, as well as a (relatively simple) modification to obtain strongly-polynomial runtime for solving unregularized MDPs.
Distribution-free convergence is extended to the stochastic setting in Section~\ref{sec:improved_sublinear_convergence}.
We provide the analysis of online and offline stochastic accurate certificates in Section~\ref{sec:validation_analysis}.
We conclude with preliminary numerical experiments in Section~\ref{sec:experiments}.

\subsection{Notation}
For a Hilbert space (e.g.,~real Euclidean space), let $\|\cdot\|$ be the induced norm and let $\|\cdot\|_*$ be its dual norm.
When appropriate, we specify the exact norm (e.g., $\ell_2$ norm, $\ell_1$ norm).

We denote the probability simplex over $n$ elements as
\begin{talign*}
    \Delta_n := \{ x \in \mathbb{R}^{n} : \sum_{i=1}^n x_i = 1, x_i \geq 0 \ \forall i \}.
\end{talign*}
For any two distributions $p,q \in \Delta_n$, we measure their Kullback–Leibler (KL) divergence by
    $\mathrm{KL}(q \vert \vert p)
    = 
    \sum_{i=1}^n p_i \log \frac{p_i}{q_i}$.
Observe that the KL divergence can be viewed as a special instance of Bregman’s distance (or prox-function) widely used in the optimization literature. 
We define Bregman’s distance associated with a distance generating function $\omega : X \to \mathbb{R}$ for some set $X \subseteq \mathbb{R}^n$ as
\begin{talign*}
    D(q,p)
    :=
    \omega(p) - \omega(q) - \langle \nabla \omega (q), p-q \rangle, \ \forall p,q \in X.
\end{talign*}
The choice of $X = \Delta_n$ and Shannon entropy $\omega(p) := \sum_{i=1}^n p_i \log p_i$ results in Bregman's distance as the KL-divergence.
In this case, one can show Bregman's distance is 1-strongly convex w.r.t.~the $\ell_1$ norm: $D(q,p) \geq \|p-q\|_1^2$.
Another popular choice is $\omega(\cdot) = \frac{1}{2}\|\cdot\|_2^2$, the Euclidean distance squared and $X \subseteq \mathbb{R}^n$. 
Bregman's distance becomes $D(q,p) = \frac{1}{2}\|p-q\|_2^2$.

\section{Markov decision process, a gap function, and connections to (non-)linear programming} \label{sec:prob_interest}
An infinite-horizon discounted Markov decision process (MDP) is a five-tuple
$(\cS, \cA, \cP, c, \gamma)$, where 
$\cS$ is a finite state space,
$\cA$ is a finite action space,
and $\cP : \cS \times \cS \times \cA \to \mathbb{R}$ is the transition kernel where given a state-action $(s,a)$ pair, it reports probability of the next state being $s'$, denoted by $\mathcal{P}(s' \vert s,a)$.
The cost is $c : \cS \times \cA \to \mathbb{R}$ and
$\gamma \in [0,1)$ is a discount factor.
A feasible policy $\pi : \cA \times \cS \to \mathbb{R}$ determines the probability of selecting a particular action at a given state. 
We denote the space of feasible policies by $\Pi$.
Now, we write Bregman's distance between any two policies at state $s$ as
\begin{talign*}
    D_{\pi}^{\pi'}(s)
    :=
    D(\pi(\cdot \vert s), \pi'(\cdot \vert s))
    = 
    \omega(\pi'(\cdot \vert s)) - \omega(\pi(\cdot \vert s)) - \langle \nabla \omega (\pi(\cdot \vert s)), \pi'(\cdot \vert s) - \pi(\cdot \vert s) \rangle.
\end{talign*}
We measure a policy $\pi$'s performance by the action-value
function $Q^\pi : \cS \times \cA \to \mathbb{R}$
defined as
\begin{talign*}
    Q^\pi(s,a) :=
    \mathbb{E}[\sum_{t=0}^\infty \gamma^t \big [c(s_t, a_t) + h^{\pi(\cdot \vert s_t)}(s_t)] \mid
    s_0 = s, a_0 = a, a_t \sim \pi(\cdot \vert s_t),
    s_{t+1} \sim \cP(\cdot | s_t, a_t) \big],
\end{talign*}
where the function $h^\cdot(s) : \DA \to \mathbb{R}$ is closed strong convex function with modulus $\mu_h \geq 0$ with respect to (w.r.t.) the policy $\pi(\cdot \vert s)$, i.e., 
\begin{talign}
    h^{\pi(\cdot \vert s)}(s) - [h^{\pi'(\cdot \vert s)}(s) + \langle (h')^{\pi'(\cdot \vert s)}(s, \cdot), \pi(\cdot \vert s) - \pi'(\cdot \vert s) \rangle 
    \geq
    \mu_h D^\pi_{\pi'}(s), \label{eq:mu_h_defn}
\end{talign}
where $\langle \cdot, \cdot \rangle$ denotes the inner product over the action space $\cA$, and $(h')^{\pi'(\cdot \vert s)}(s, \cdot)$ denotes a subgradient of $h^\cdot(s)$ at $\pi'(\cdot \vert s)$.
The generality of $h$ allows modeling of the popular entropy regularization, which can induce safe exploration and learn risk-sensitive policies~\cite{li2021approximate,neu2017unified}, as well as barrier functions for constrained MDPs. 
Here we separate $h^{\pi(\cdot \vert s)}$ from the cost $c(s,a)$ to take the advantage of
its strong convexity in the design and analysis of algorithms.
Moreover, we define the state-value function
$V^\pi : \cS \to \mathbb{R}$ associated with $\pi$
as
\begin{talign}\label{eq:def_V_function}
V^\pi(s) :=
\mathbb{E}
 \big[
 \sum_{t=0}^\infty \gamma^t [c(s_t, a_t) + h^{\pi_t(\cdot \vert s_t)}(\cdot \vert s_t)] \mid
s_0 = s, a_t \sim \pi(\cdot \vert s_t),
s_{t+1} \sim \cP(\cdot | s_t, a_t)
\big].
\end{talign}
It can be easily seen from the definitions of $Q^\pi$ and $V^\pi$ that
\begin{talign}
V^\pi(s) 
&= 
\sum_{a \in \cA} \pi(a \vert s) Q^{\pi}(s,a)
=
\langle Q^\pi(s, \cdot), \pi(\cdot \vert s) \rangle \label{eq:QV1} \\
Q^\pi(s, a) &= c(s, a) +h^{\pi(\cdot \vert s)}(s) + \gamma \sum_{s' \in \cS}  \cP(s'| s, a) V^\pi(s'). \label{eq:QV2}
\end{talign}

The main objective in MDP is to find an optimal policy $\pi^* : \cA \times \cS \to \mathbb{R}$ such that
\begin{equation} \label{eq:opt_objective}
V^{\pi^*}(s) \le V^\pi(s), \forall \pi(\cdot \vert s) \in \DA, \forall s \in \cS.
\end{equation} 
Sufficient conditions that guarantee the existence of $\pi^*$ have been intensively studied (e.g.,~\cite{BertsekasShreve1996,puterman2014markov}).
Note that~\eqref{eq:opt_objective} can be formulated as a nonlinear optimization  problem with a single objective function.
Given an initial state distribution $\rho \in \Delta_{\vert \cS \vert}$, let $f_\rho$ be defined as 
\begin{talign} \label{eq:RL0}
f_\rho(\pi) 
:= \sum_{s \in \cS} \rho(s) \cdot V^\pi(s).
\end{talign}
When $\rho$ is strictly positive, one can see an optimal solutions to~\eqref{eq:opt_objective} is also optimal for~\eqref{eq:RL0}.
While the distribution $\rho$ can be arbitrarily chosen, prior policy gradient methods typically select $\rho$ to be the 
stationary state distribution induced by the optimal policy $\pi^*$, denoted by $\nu^* := \nu^{\pi*}$~\cite{lan2023policy,lan2022policy,li2022homotopic}.
As such, the problem reduces to $
 \min_{\pi \in \Pi} f_{\nu^*}(\pi)$.
This aggregated objective function is commonly formulated and solved by nonlinear programming approaches such as policy gradient methods.

\subsection{Performance difference and advantage function} \label{sec:perf_diff_and_gap}
Given a policy $\pi(\cdot \vert s) \in \DA$, we
define the discounted state visitation distribution $\kappa^\pi_q : \cS \to \mathbb{R}$ by
\begin{talign}
    \kappa^\pi_q(s)
    &:=
    (1-\gamma) \sum_{t=0}^\infty \gamma^t \mathrm{Pr}^\pi\{s_t=s \vert s_0=q\} \label{eq:visitation_measure},
\end{talign}
where $\mathrm{Pr}^{\pi}\{s_t = \cdot \vert s_0=q\}$ is the distribution of state $s_t$ when following policy $\pi$ and starting at state $q \in \cS$.
In the finite state case, we can also view $\kappa^\pi_q \in \mathbb{R}^{\vert \cS \vert}$ as a vector. 
Clearly, $\kappa^{\pi}_s \in \Delta_{\vert \cS \vert}$, and one can show the lower bound $\kappa^\pi_s(s) \geq 1-\gamma$.

We now state an important ``performance difference'' lemma
which tells us the difference on the value functions for two different policies.
The proof has appeared in numerous previous works (e.g.,~\cite[Lemma 1]{lan2022policy}), so we skip it.
\begin{lemma} \label{lem:performance_diff_deter}
    Let $\pi$ and $\pi'$ be two feasible policies. Then
    we have
    \begin{talign}
        V^{\pi'}(s) - V^\pi(s) 
        &= \frac{1}{1-\gamma} \sum_{q \in \cS}  \psi^\pi(q, \pi'(\cdot \vert q)) \kappa_s^{\pi'} (q), \forall s \in \cS, \label{eq:per_diff}
    \end{talign}
    where for a given $p \in \DA$, the advantage function is defined as 
    \begin{equation} \label{eq:def_advantage}
        \psi^\pi(s, p)
        := \langle Q^\pi(s, \cdot), p \rangle -V^\pi(s) +  h^{p}(s) - h^{\pi(\cdot \vert s)}(s).
    \end{equation}
\end{lemma}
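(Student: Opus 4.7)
The plan is to prove this by the standard telescoping trick, adapted to accommodate the regularizer $h$. I would start by expressing both $V^{\pi'}(s)$ and $-V^\pi(s)$ in a form amenable to trajectory-wise cancellation. Concretely, letting $\mathbb{E}_{\pi'}[\,\cdot\, \mid s_0=s]$ denote expectation over the Markov chain generated by $\pi'$ starting from $s$, the definition~\eqref{eq:def_V_function} gives $V^{\pi'}(s) = \mathbb{E}_{\pi'}[\sum_{t=0}^\infty \gamma^t (c(s_t,a_t) + h^{\pi'(\cdot|s_t)}(s_t)) \mid s_0=s]$. The term $-V^\pi(s)$ would be rewritten using the telescoping identity $-V^\pi(s_0) = \mathbb{E}_{\pi'}[\sum_{t=0}^\infty (\gamma^{t+1} V^\pi(s_{t+1}) - \gamma^t V^\pi(s_t)) \mid s_0=s]$, valid because $\gamma \in [0,1)$ and the value functions are bounded.

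Adding these two expressions and grouping terms under the same sum gives
\begin{talign*}
V^{\pi'}(s) - V^\pi(s) = \mathbb{E}_{\pi'}\!\Big[\sum_{t=0}^\infty \gamma^t \big(c(s_t,a_t) + h^{\pi'(\cdot|s_t)}(s_t) + \gamma V^\pi(s_{t+1}) - V^\pi(s_t)\big) \,\Big|\, s_0=s\Big].
\end{talign*}
The next step is to recognize the inner bracket as the advantage of $\pi'$ at $s_t$ relative to $\pi$. Conditioning on $s_t$ and applying the Bellman relation~\eqref{eq:QV2} inside the conditional expectation yields $\mathbb{E}[c(s_t,a_t) + \gamma V^\pi(s_{t+1}) \mid s_t] = \langle Q^\pi(s_t,\cdot), \pi'(\cdot|s_t) \rangle - h^{\pi(\cdot|s_t)}(s_t)$, where the $-h^{\pi(\cdot|s_t)}(s_t)$ arises because this regularizer is baked into $Q^\pi$ through~\eqref{eq:QV2} but should be removed when we are rolling out $\pi'$. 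Combining with $V^\pi(s_t) = \langle Q^\pi(s_t,\cdot), \pi(\cdot|s_t)\rangle$ and the definition~\eqref{eq:def_advantage} produces exactly $\psi^\pi(s_t, \pi'(\cdot|s_t))$ inside the sum.

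Finally, I would swap the expectation and sum (justified by boundedness of $\psi^\pi$ and the geometric factor $\gamma^t$) and rewrite
\begin{talign*}
\mathbb{E}_{\pi'}\!\Big[\sum_{t=0}^\infty \gamma^t \psi^\pi(s_t, \pi'(\cdot|s_t)) \,\Big|\, s_0=s\Big] = \sum_{q\in\cS} \psi^\pi(q, \pi'(\cdot|q)) \sum_{t=0}^\infty \gamma^t \mathrm{Pr}^{\pi'}\{s_t=q \mid s_0=s\},
\end{talign*}
and the inner sum equals $\kappa_s^{\pi'}(q)/(1-\gamma)$ by the definition~\eqref{eq:visitation_measure}, giving~\eqref{eq:per_diff}.

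The only step with any subtlety is the bookkeeping of the $h$ terms: because $h^{\pi(\cdot|s_t)}(s_t)$ is folded into $Q^\pi$ but the rollout is under $\pi'$, one must carefully track that $h^{\pi'(\cdot|s_t)}$ appears from $V^{\pi'}$ while $-h^{\pi(\cdot|s_t)}$ is produced by the Bellman substitution, so that together they reproduce the last two terms of $\psi^\pi$ in~\eqref{eq:def_advantage}. Beyond that, the argument is entirely routine telescoping and Fubini-type rearrangement.
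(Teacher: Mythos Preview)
Your argument is correct and is precisely the standard telescoping proof this lemma is known by; the paper itself omits the proof entirely, deferring to \cite[Lemma 1]{lan2022policy}, so there is nothing to compare against beyond noting that your derivation is the one that reference would give. The bookkeeping of the $h$ terms you flag as the only subtlety is handled correctly: the Bellman relation~\eqref{eq:QV2} contributes $-h^{\pi(\cdot|s_t)}(s_t)$ while the $V^{\pi'}$ expansion contributes $+h^{\pi'(\cdot|s_t)}(s_t)$, and together with $\langle Q^\pi(s_t,\cdot),\pi'(\cdot|s_t)\rangle - V^\pi(s_t)$ this reproduces $\psi^\pi(s_t,\pi'(\cdot|s_t))$ exactly.
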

The performance difference lemma is striking because it provides an exact characterization between the values of any two policies.
Historically, the advantage function without regularization seems to have first appeared in~\cite{kakade2002approximately}, and the generalized form including regularization was shown in~\cite{lan2022policy}.
In the former, it was presented as an inequality and used to establish a monotonicity-type result to show convergence to optimality.
Similarly, the performance difference lemma was used to show various convergence results in policy gradient methods~\cite{lan2023policy,khodadadian2021linear,agarwal2021theory,bhandari2024global}.
We take a different approach and use it to derive a computable measure of optimality.

\subsection{Advantage gap function and distribution-free convergence}
For any policy $\pi \in \Pi$, 
the \textit{advantage gap function} is the mapping $g^\pi : \cS  \to \mathbb{R}$ defined as
\begin{talign} \label{eq:gap_function_definition}
    g^\pi(s) := \max_{p \in \DA} \{-\psi^{\pi}(s, p) \}.
\end{talign}
For some common cases, the gap function can be computed exactly.
In the unregularized case of $h^\cdot(s) = 0$, then $g^\pi(s) = \max_{a \in \cA} \{-\psi^{\pi}(s,e_a)\}$, where $e_a \in \mathbb{R}^{\vert \cA \vert}$ is the elementary basis vector in Euclidean space, i.e., $g^\pi(s)$ is the largest value in the negative advantage function.
When using a negative entropy regularizer $h^{\pi(\cdot \vert s)}(s) = \sum_{a \in \cA} \pi(a \vert s) \ln \pi(a \vert s)$, then~\cite[Section 4.4.10]{beck2017first} provides the closed-form solution that can be easily computed,
\begin{talign*}
    g^\pi(s) = \log (\sum_{a \in \cA} \mathrm{exp}\{-Q^\pi(s,a)  +V^\pi(s)\}) + h^{\pi(\cdot \vert s)}(s).
\end{talign*}
\edits{For a general convex regularization $h^\cdot(s)$, a closed-formed solution may not exist. We can still approximately evaluate them by solving $\vert \cS \vert$ separable, convex programs,}
\begin{talign} \label{eq:advgap_subproblem}
    \min_{p \in \DA} \{\langle -Q^\pi(s,\cdot), p \rangle + h^p(s)\}, \quad \forall s \in \cS.
\end{talign}
\edits{When $h^\cdot(s)$ is simple, i.e.,~prox-mappings can be computed exactly~\cite[Section 5.3.2]{ben2001lectures}, the convex program can be efficiently solved to arbitrary accuracy $\epsilon > 0$.
If $h^\cdot(s)$ is an $L$-smooth convex function, then $O(\epsilon^{-1/2})$ iterations of Nesterov's accelerated gradient method are required~\cite{nesterov1983method}, where we only show dependence on $\epsilon$. 
If $h^\cdot(s)$ is a continuous convex function, then $O(\epsilon^{-1})$ iterations of Nesterov's accelerated gradient method with a smoothing technique are needed~\cite{nesterov2005smooth}. For simplicity, we assume $g^\pi(s)$ can be computed exactly, but our forthcoming results can be extended to account for an $\epsilon$ error.}

We are ready to establish one of our fundamental yet simple results, which says the gap function from~\eqref{eq:gap_function_definition} can be used to estimate both upper and lower bounds on the optimality gap.
\begin{proposition} \label{prop:gap_functions}
    For any policy $\pi$,
    \begin{talign*}
        g^\pi(s)
        \leq
        V^{\pi}(s) - V^{\pi^*}(s)
        &\leq
        \textstyle (1-\gamma)^{-1} \max_{s' \in \cS} g^{\pi}(s').
    \end{talign*}
\end{proposition}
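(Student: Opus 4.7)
The plan is to derive both bounds directly from the performance difference lemma (Lemma~\ref{lem:performance_diff_deter}), using the key identity that $\psi^\pi(q, \pi(\cdot \vert q))=0$ for every $q$ (which follows immediately from~\eqref{eq:QV1} and the definition~\eqref{eq:def_advantage}), together with the elementary bound $\kappa_s^{\pi}(s) \geq 1-\gamma$ already noted after~\eqref{eq:visitation_measure}.

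For the upper bound, I would set $\pi' = \pi^*$ in the performance difference lemma:
\begin{talign*}
V^{\pi}(s) - V^{\pi^*}(s) = -\frac{1}{1-\gamma}\sum_{q\in\cS} \psi^\pi(q, \pi^*(\cdot\vert q))\,\kappa_s^{\pi^*}(q).
\end{talign*}
By definition of $g^\pi$ in~\eqref{eq:gap_function_definition}, one has $-\psi^\pi(q,p) \leq g^\pi(q)$ for every $p\in\DA$, in particular for $p=\pi^*(\cdot\vert q)$. Since $\kappa_s^{\pi^*}$ is a probability distribution over $\cS$, convex-combining yields $V^\pi(s)-V^{\pi^*}(s)\le (1-\gamma)^{-1}\max_{s'\in\cS} g^\pi(s')$.

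For the lower bound, I would introduce the auxiliary ``local deviation'' policy $\pi'$ that agrees with $\pi$ at every state except $s$, where it equals an optimizer $p^\star\in\DA$ of~\eqref{eq:gap_function_definition}, so that $\psi^\pi(s,p^\star)=-g^\pi(s)$. Applying the performance difference lemma to $\pi$ and $\pi'$, every summand with $q\ne s$ contains the factor $\psi^\pi(q,\pi(\cdot\vert q))$, which vanishes. Only the $q=s$ contribution survives:
\begin{talign*}
V^{\pi'}(s) - V^{\pi}(s) = \frac{1}{1-\gamma}\,\psi^\pi(s,p^\star)\,\kappa_s^{\pi'}(s) = -\frac{\kappa_s^{\pi'}(s)}{1-\gamma}\, g^\pi(s).
\end{talign*}
Using $V^{\pi^*}(s)\le V^{\pi'}(s)$ and $\kappa_s^{\pi'}(s)\ge 1-\gamma$ then gives $V^\pi(s)-V^{\pi^*}(s)\ge g^\pi(s)$.

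I do not anticipate a real obstacle: the argument is essentially a clean algebraic unpacking of the performance difference identity. The only subtlety to flag is verifying the ``local deviation'' construction rigorously: one must make sure $\psi^\pi(q,\pi(\cdot\vert q))=0$, which is a direct consequence of~\eqref{eq:QV1} (the $h$-terms cancel in~\eqref{eq:def_advantage} when $p=\pi(\cdot\vert q)$), and must check that $g^\pi(s)\ge 0$ (by taking $p=\pi(\cdot\vert s)$) so that inequalities are oriented correctly when combined with $\kappa_s^{\pi'}(s)\ge 1-\gamma$.
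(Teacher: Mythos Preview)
Your proof is correct. The upper bound argument is identical to the paper's. For the lower bound, the paper uses a slightly different comparison policy: instead of your local-deviation policy (equal to $\pi$ everywhere except at $s$), it takes the fully greedy policy $\hat{\pi}(\cdot\vert q)\in\mathrm{argmax}_{p\in\DA}\{-\psi^\pi(q,p)\}$ at \emph{every} state $q$. In the paper's version the summands with $q\neq s$ are nonnegative and are dropped, whereas in your version they vanish identically because $\psi^\pi(q,\pi(\cdot\vert q))=0$. Both routes then invoke $\kappa_s^{\cdot}(s)\geq 1-\gamma$ and $V^{\pi^*}(s)\leq V^{\pi'}(s)$ in the same way. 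Your construction is arguably cleaner since it isolates a single modified state; the paper's globally greedy $\hat{\pi}$ has the minor advantage of being defined without reference to the particular $s$.
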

\begin{proof}
    We first prove the lower bound.
    With any $\hat{\pi}(\cdot \vert s) \in \mathrm{argmax}_{p \in \DA} \{-\psi^{\pi}(s, p)\}$, we have
    \begin{talign*} 
        -\psi^{\pi}(s,\hat{\pi}(\cdot \vert s)) 
        = \max_{p \in \DA} -\psi^{\pi}(s, p) 
        \geq 
        -\psi^{\pi}(s, \pi(\cdot \vert s))
        = 0.
    \end{talign*}
    where the last equality is by the definition given in~\eqref{eq:def_advantage}. Therefore,
    \begin{talign*}
        V^{\pi}(s) - V^{\pi^*}(s)
        \geq
        V^{\pi}(s) - V^{\hat{\pi}}(s) 
        &\stackrel{\eqref{eq:per_diff}}{=}
        \frac{1}{1-\gamma} \sum_{q \in \cS} -\psi^\pi(q, \hat{\pi}(\cdot \vert q)) \kappa^{\hat{\pi}}_s(q) \\
        &\leftstackrel{\substack{-\psi^{\pi}(s, \hat{\pi}(\cdot \vert q)) \geq 0 \\ \text{and}~\eqref{eq:visitation_measure}}}{\geq}
        -\psi^\pi(s, \hat \pi(\cdot \vert s)),
    \end{talign*}
    which by construction of $\hat{\pi}$ establishes the lower bound.

    As for the upper bound, 
    we recall $\kappa_s^{\pi^*}$ from~\eqref{eq:visitation_measure} is a distribution over states.
    So,
    \begin{talign*}
        V^{\pi}(s) - V^{\pi^*}(s)
        &\stackrel{\eqref{eq:per_diff}}{=}
        \frac{1}{1-\gamma} \sum_{q \in \cS} -\psi^\pi(q, {\pi^*(\cdot \vert s)}) \kappa^{{\pi^*}}_s(q) \\
        &~\leq
        \frac{1}{1-\gamma}\sum_{q \in \cS} \max_{p \in \DA} \{ -\psi^{\pi}(q,p) \}  \kappa^{\pi^*}_s(q) \\
        &~\leq
        \frac{1}{1-\gamma} \max_{s' \in \cS, p \in \DA} \{ -\psi^{\pi}(s',p) \} .
    \end{talign*}
\end{proof}
\edits{While this paper focuses on finite state and actions, the above proposition can be easily extended to general state and action spaces~\cite{lan2022policy}. 
To do so, the advantage function $\psi^\pi(s,p)$ must be modified so that it is a function of actions $a \in \cA$ instead of distributions $p \in \DA$. This is because for general action spaces, policies are mappings from states to actions; see~\cite[Lemma 1]{lan2022policy} for a concrete derivation. Let us denote such an advantage function as $\psi^\pi(s,a)$. Similarly, the gap function must be modified to maximize over actions in $\cA$. Finally, the ``max'' in the gap function and Proposition~\ref{prop:gap_functions} is replaced with a supremum. However, unlike the finite space setting, evaluating the gap function to arbitrary accuracy may be intractable in general space. The advantage function $\psi^\pi(\cdot,\cdot)$ becomes a general infinite-dimensional mapping, so computing it exactly is non-trivial. More importantly, $\psi^\pi(s,a)$ may be nonconvex in $a \in \cA$. Understanding and addressing these issues is beyond the scope of this paper, and we refer the reader to~\cite{lan2022policy} for further discussions.}

We also present a similar result for the aggregation of multiple advantage functions, which will be useful in the stochastic setting where one can only estimate the advantage function.
\begin{proposition} \label{prop:gap_functions_2}
    \sloppy For a set of (possibly random) policies $\{\pi_t\}$,
    \begin{talign*}
        \sum_{t=0}^{k-1} \mathbb{E}_{\{\pi_t\}} [V^{\pi_t}(s) - V^{\pi^*}(s)]
        &\leq
        \textstyle (1-\gamma)^{-1} \max_{s' \in \cS} \mathbb{E}_{\{\pi_t\}}[ g^{\pi_{[k]}}(s') ], \ \forall s \in \cS,
    \end{talign*}
    where we define the aggregated advantage gap function as
    \begin{talign} \label{eq:agg_gap_function_definition}
        g^{\pi_{[k]}}(s) := \max_{p \in \DA} \{ -\sum_{t=0}^{k-1} \psi^{\pi_t}(s,p) \}
    \end{talign}
\end{proposition}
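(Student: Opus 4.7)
The plan is to mirror the upper-bound argument from Proposition~\ref{prop:gap_functions}, but apply the performance difference lemma separately to each $\pi_t$, sum over $t$, and then move the summation inside the $\max_{p \in \DA}$ to pick up the aggregated advantage gap function defined in~\eqref{eq:agg_gap_function_definition}. Taking expectations at the end is clean because $\pi^*$ (and hence $\kappa^{\pi^*}_s$) is non-random, so linearity of expectation commutes with the weighting by $\kappa^{\pi^*}_s(q)$.

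Concretely, I would first fix $s \in \cS$ and apply Lemma~\ref{lem:performance_diff_deter} with $\pi = \pi_t$ and $\pi' = \pi^*$ to obtain
\begin{talign*}
V^{\pi_t}(s) - V^{\pi^*}(s)
= \tfrac{1}{1-\gamma}\sum_{q \in \cS}\bigl(-\psi^{\pi_t}(q,\pi^*(\cdot \vert q))\bigr)\,\kappa^{\pi^*}_s(q),
\end{talign*}
since $V^{\pi^*}(s) - V^{\pi_t}(s) = \tfrac{1}{1-\gamma}\sum_q \psi^{\pi_t}(q,\pi^*(\cdot \vert q))\kappa^{\pi^*}_s(q)$ from~\eqref{eq:per_diff}. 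Summing over $t=0,\dots,k-1$ and exchanging the two finite sums yields
\begin{talign*}
\sum_{t=0}^{k-1}\bigl(V^{\pi_t}(s) - V^{\pi^*}(s)\bigr)
= \tfrac{1}{1-\gamma}\sum_{q \in \cS}\Bigl(-\sum_{t=0}^{k-1}\psi^{\pi_t}(q,\pi^*(\cdot \vert q))\Bigr)\,\kappa^{\pi^*}_s(q).
\end{talign*}

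Next I would bound the inner bracket by the pointwise maximum over $p \in \DA$, i.e., $-\sum_{t=0}^{k-1}\psi^{\pi_t}(q,\pi^*(\cdot \vert q)) \le g^{\pi_{[k]}}(q)$, giving
\begin{talign*}
\sum_{t=0}^{k-1}\bigl(V^{\pi_t}(s) - V^{\pi^*}(s)\bigr)
\;\le\; \tfrac{1}{1-\gamma}\sum_{q \in \cS} g^{\pi_{[k]}}(q)\,\kappa^{\pi^*}_s(q).
\end{talign*}
Because $\kappa^{\pi^*}_s \in \Delta_{\vert \cS \vert}$, the right-hand side is a convex combination of the quantities $\{g^{\pi_{[k]}}(q)\}_{q \in \cS}$, so it is at most $(1-\gamma)^{-1}\max_{s' \in \cS} g^{\pi_{[k]}}(s')$.

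Finally, taking expectations over the randomness in $\{\pi_t\}$ and using that $\pi^*$ is deterministic (so $\kappa^{\pi^*}_s(q)$ is non-random and may be pulled out of the expectation), I would first apply $\mathbb{E}$ before bounding by $\max_{s'}$, so that the max sits outside the expectation. Formally,
\begin{talign*}
\sum_{t=0}^{k-1}\mathbb{E}\bigl[V^{\pi_t}(s) - V^{\pi^*}(s)\bigr]
\le \tfrac{1}{1-\gamma}\sum_{q \in \cS}\mathbb{E}\bigl[g^{\pi_{[k]}}(q)\bigr]\,\kappa^{\pi^*}_s(q)
\le \tfrac{(1-\gamma)^{-1}}{}\max_{s' \in \cS}\mathbb{E}\bigl[g^{\pi_{[k]}}(s')\bigr],
\end{talign*}
as desired. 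The only minor subtlety is the ordering of expectation and maximum: if one bounds by $\max_{s'}$ first and then takes $\mathbb{E}$, Jensen's inequality goes the wrong way, so it is essential to exploit that $\kappa^{\pi^*}_s$ is a \emph{fixed} probability distribution to interchange the expectation with the convex combination before invoking the max. Beyond that, the argument is essentially a telescoped rerun of the upper bound in Proposition~\ref{prop:gap_functions}.
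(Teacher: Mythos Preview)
Your proposal is correct and follows essentially the same approach as the paper's proof: apply the performance difference lemma for each $\pi_t$, sum over $t$, bound by the aggregated advantage gap at each state, then use that $\kappa^{\pi^*}_s$ is a deterministic distribution to take expectations before invoking the $\max_{s'}$. Your observation about the ordering of expectation and maximum is exactly the point the paper handles by writing the intermediate inequality~\eqref{eq:V_gap_ub_agap}.
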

\begin{proof}
    Similar to Proposition~\ref{prop:gap_functions},
    \begin{talign*}
        \sum_{t=0}^{k-1} [V^{\pi_t}(s) - V^{\pi^*}(s)]
        &\stackrel{\eqref{eq:per_diff}}{=}
        \frac{1}{1-\gamma} \sum_{q \in \cS} \sum_{t=0}^{k-1} - \psi^{\pi_t}(q, {\pi^*}(\cdot \vert q)) \kappa^{{\pi^*}}_s(q) \\
        &~\leftstackrel{\kappa^{\pi^*}_s(q) \geq 0}{\leq}
        \frac{1}{1-\gamma} \sum_{q \in \cS} \max_{p \in \DA} \big\{ \sum_{t=0}^{k-1} - \psi^{\pi_t}(q, p) \big \} \kappa^{{\pi^*}}_s(q).
    \end{talign*}
    Since $\kappa^{{\pi^*}}_s$ is a deterministic probability distribution, applying expectation yields
    \begin{talign}
        \sum_{t=0}^{k-1} \mathbb{E}[V^{\pi}(s) - V^{\pi^*}(s)]
        &\leq
        \frac{1}{1-\gamma} \sum_{q \in \cS} \kappa^{{\pi^*}}_s(q) \mathbb{E}\big[ \max_{p \in \DA} \big\{ \sum_{t=0}^{k-1} -\psi^{\pi_t}(q, p) \big \} \big] \label{eq:V_gap_ub_agap} \\
        &\leq
        \frac{1}{1-\gamma} \max_{s \in \cS} \mathbb{E}\big[ \max_{p \in \DA} \big\{ \sum_{t=0}^{k-1} - \psi^{\pi_t}(s, p) \big \} \big]. \nonumber
    \end{talign}
\end{proof}

Clearly, when the advantage gap function is small, say $g^\pi(s) \leq (1-\gamma)\epsilon$ for all states $s \in \cS$, then $V^{\pi}(s) - V^{\pi^*}(s) \leq \epsilon$ for all states, which implies $f_{\nu^*}(\pi) - f_{\nu^*}(\pi^*) \leq \epsilon$, where we recall the $\nu^*$-weighted objective $f_{\nu^*}(\pi) = \mathbb{E}_{s \sim \nu^*}[V^\pi(s)]$.
That is, making the negative advantage function small is a sufficient condition for the $\nu^*$-weighted optimality gap to be small.
However, it is not necessary, because
\begin{talign} \label{eq:gap_to_opt_conversion}
    g^\pi(s) \leq V^\pi(s) -V^{\pi^*}(s) \leq (\min_{s'} \nu^*(s'))^{-1}[f_{\nu^*}(\pi) - f_{\nu^*}(\pi^*)], \quad \forall s \in \cS,
\end{talign} 
where $\nu^*(s')$ can be arbitrarily small for some state $s'$.
On the other hand, the previous propositions say making the advantage gap small at every state is necessary and sufficient for the value optimality gap $V^{\pi}(s) - V^{\pi^*}(s)$ to be small at every state.

Therefore, an algorithm is \textit{distribution-free} or exhibits distribution-free convergence if for every $\epsilon > 0$, it outputs a policy $\pi_k$ such that
\begin{talign} \label{eq:distribution_free}
    V^{\pi_k}(s) - V^{\pi^*}(s)
    \leq
    \epsilon, \ \forall s \in \cS,
\end{talign}
where the iteration complexity $k$ can depend on $\epsilon > 0$ but not on the steady state distribution $\nu^*$ of the optimal policy.
Equivalently, 
distribution-free convergence provides the convergence guarantee,
\begin{talign} \label{eq:dist_free_bnd}
    \max_{\rho \in \Delta_{\vert \cS \vert}}[f_\rho(\pi) - f_\rho(\pi^*)] \leq \epsilon.
\end{talign}
This is clearly stronger than the guarantee $f_{\nu^*}(\pi) - f_{\nu^*}(\pi^*) \leq \epsilon$ as suggested after~\eqref{eq:RL0}. 

\edits{To crystallize the improvement of distribution-free convergence, we examine a two-state MDP parameterized by any $\Delta \in (0,1)$ as shown in Fig.~\ref{fig:mdp_example}. Here, the stationary distribution $\nu^*$ has a small value of $\Delta/(1+\Delta)$. If one can only ensure the $\nu^*$-weighted optimality gap from~\eqref{eq:gap_to_opt_conversion} is at most $\epsilon$, then one can only guarantee $V^{\pi}(s_1) - V^{\pi^*}(s_1) \leq {(1+\Delta)\epsilon}/{\Delta} \leq 2\epsilon/\Delta$, where state $s_1$ is from Fig.~\ref{fig:mdp_example}.
When $\Delta < (1-\gamma)\epsilon$, this bound is in vain. On the other hand, an $\epsilon$-small distribution-free convergence in the sense of~\eqref{eq:dist_free_bnd} guarantees $V^{\pi}(s_1) - V^{\pi^*}(s_1) \leq \epsilon$ and avoids dependence on $\Delta$.}

\begin{figure}[H]
\centering
\begin{tikzpicture}[scale=0.15]
\tikzstyle{every node}+=[inner sep=0pt]
\draw [black] (27,-31.7) circle (3);
\draw (27,-31.7) node {$s_1$};
\draw [black] (48.4,-31.7) circle (3);
\draw (48.4,-31.7) node {$s_2$};
\draw [black] (45.494,-32.443) arc (-77.96434:-102.03566:37.379);
\fill [black] (45.49,-32.44) -- (44.61,-32.12) -- (44.82,-33.1);
\draw (37.7,-33.76) node [below] {$1$};
\draw [black] (29.93,-31.061) arc (100.32686:79.67314:43.342);
\fill [black] (29.93,-31.06) -- (30.81,-31.41) -- (30.63,-30.43);
\draw (37.7,-29.86) node [above] {$a_1:\Delta,\mbox{ }a_2:1$};
\draw [black] (51.08,-30.377) arc (144:-144:2.25);
\draw (55.65,-31.7) node [right] {$a_1:1-\Delta,\mbox{ }a_2:0$};
\fill [black] (51.08,-33.02) -- (51.43,-33.9) -- (52.02,-33.09);
\end{tikzpicture}
\caption{A two-state and two-action MDP. An arc describes the transition probability with every action, where $\Delta \in (0,1)$ is an arbitrary parameter. From state $s_1$, the next state is always $s_2$. The cost is $c(s,a) = \mathbf{1}[s=s_1]$. Thus, the optimal action avoids state $s_1$. This means an optimal policy at state $s_2$ selects action $a_1$ with probability $1$. One can then derive $\nu^* = [\nu^*(s_1), \nu^*(s_2)] = [\frac{\Delta}{1+\Delta}$, $\frac{1}{1+\Delta}]$.}
\label{fig:mdp_example}
\vspace{-5mm}
\end{figure}


Before we move onto proving what algorithms exhibit distribution-free convergence, we will briefly explore some convex programming formulations for reinforcement learning, which will offer another perspective to the proposed advantage gap function.

\subsection{Convex programming and duality theory of (regularized) RL}
For a given distribution $\rho \in \Delta_{\vert \cS \vert}$ and policy $\pi(\cdot \vert s) \in \DA$,
we introduce the weighted visitation $\eta^\pi_\rho : \cS \to \mathbb{R}$,
\begin{talign}
    \eta^\pi_\rho(s)
    &:=
    (1-\gamma)^{-1} \sum_{q \in \cS} \rho(q) \cdot \kappa^{\pi}_q(s), \label{eq:weighed_visitation}
\end{talign}
where recall $\kappa^{\pi}_q$ is the state-visitation vector from~\eqref{eq:visitation_measure}.
Since $\rho$ and $\kappa^\pi_q$ are distributions over states, then $\eta^\pi_\rho(s) \in [0, (1-\gamma)^{-1}]$ for every state $s$. We next state a useful supporting result that re-writes the objective function, and its proof is deferred to Appendix~\ref{sec:missing_pfs_from_prob_interest}.

\begin{lemma} \label{lem:fpi_to_fx}
    For any policy $\pi$ and distribution over states $\rho$,
    \begin{talign*}
        f_\rho(\pi)
        =
        \sum_{s \in \cS} \rho(s) V^\pi(s)
        =
        \sum_{s \in \cS} [c(s, \pi(\cdot \vert s)) + h^{\pi(\cdot \vert s)}(s)] \cdot \eta^\pi_\rho(s).
    \end{talign*}
\end{lemma}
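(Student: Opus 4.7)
The first equality is simply the definition of $f_\rho$ in~\eqref{eq:RL0}, so all the work lies in the second equality. The plan is to expand $V^\pi(s)$ as an infinite discounted sum over the trajectory, then swap the order of summation so that the time index gets absorbed into the state-visitation vector $\kappa^\pi_s$, and finally average against $\rho$ to produce $\eta^\pi_\rho$.

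More concretely, starting from~\eqref{eq:def_V_function} and pushing the expectation inside the sum (which is legitimate by the monotone/dominated convergence theorem thanks to the geometric factor $\gamma^t$ and the finiteness of $\cS,\cA$), I would write
\begin{talign*}
    V^\pi(s)
    &= \sum_{t=0}^\infty \gamma^t \sum_{s'\in\cS} \mathrm{Pr}^\pi\{s_t = s' \mid s_0 = s\} \,\mathbb{E}\bigl[c(s_t,a_t) + h^{\pi(\cdot\vert s_t)}(s_t) \,\big\vert\, s_t=s'\bigr] \\
    &= \sum_{s'\in\cS}\bigl[c(s',\pi(\cdot\vert s')) + h^{\pi(\cdot\vert s')}(s')\bigr] \sum_{t=0}^\infty \gamma^t \mathrm{Pr}^\pi\{s_t = s' \mid s_0 = s\},
\end{talign*}
where I am using the convention $c(s',\pi(\cdot\vert s')) := \sum_{a\in\cA} \pi(a\vert s') c(s',a)$ and the fact that, conditional on $s_t = s'$, the action $a_t$ is sampled from $\pi(\cdot\vert s')$ independently of the rest of the trajectory. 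Recognizing the inner sum as $(1-\gamma)^{-1} \kappa^\pi_s(s')$ via~\eqref{eq:visitation_measure} gives
\begin{talign*}
    V^\pi(s) = \frac{1}{1-\gamma}\sum_{s'\in\cS}\bigl[c(s',\pi(\cdot\vert s')) + h^{\pi(\cdot\vert s')}(s')\bigr]\kappa^\pi_s(s').
\end{talign*}

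Multiplying by $\rho(s)$, summing over $s$, and interchanging the two finite sums (no convergence issue) produces
\begin{talign*}
    f_\rho(\pi) = \sum_{s'\in\cS}\bigl[c(s',\pi(\cdot\vert s')) + h^{\pi(\cdot\vert s')}(s')\bigr]\cdot \frac{1}{1-\gamma}\sum_{s\in\cS}\rho(s)\kappa^\pi_s(s'),
\end{talign*}
and the inner factor is precisely $\eta^\pi_\rho(s')$ by definition~\eqref{eq:weighed_visitation}. Relabeling the dummy variable $s' \to s$ yields the claim.

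There is no real obstacle here; the only mildly delicate step is justifying the swap of expectation and infinite summation when expanding $V^\pi(s)$, which is why I would explicitly invoke dominated convergence with the $\gamma^t$ factor and the bounded per-stage cost (costs live on a finite set and $h$ is finite on $\DA$). Everything else is just bookkeeping and applying the definitions~\eqref{eq:visitation_measure} and~\eqref{eq:weighed_visitation}.
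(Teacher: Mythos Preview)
Your proposal is correct and follows essentially the same route as the paper's own proof: expand $V^\pi(s)$ as the discounted sum, condition on $s_t=s'$, identify the inner time-sum as $(1-\gamma)^{-1}\kappa^\pi_s(s')$, and then average against $\rho$ to obtain $\eta^\pi_\rho$. The only difference is that you make the swap of expectation and infinite summation explicit via dominated convergence, which the paper leaves implicit.
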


Policy optimization problems solve problems of the form $\min_{\pi \in \Pi} f_\rho(\pi)$. 
We will show an equivalent convex optimization problem, which will help derive some primal dual results.
Our problem transformation relies on the following observation~\cite[Theorem 6.9.1]{puterman2014markov}.
\begin{lemma} \label{lem:pi_to_x}
    For any distribution $\rho \in \Delta_{\vert \cS \vert}$ and policy $\pi (\cdot \vert s) \in \DA$, the vector $x^\pi(a,s) := \eta^\pi_\rho(s) \pi(a \vert s)$ satisfies
    \begin{talign}
        &\sum_{a} x(a,s) - \gamma \sum_{s',a} \mathcal{P}(s \vert s', a) x(a,s') = \rho(s) \label{eq:stationary_balance_eqn} \\
        &\sum_{a} x(a,s) = \eta_\rho^\pi(s) \label{eq:state_visitation_constraint} \\
        &x(a,s) \geq 0, ~ \forall a \in \cA, s \in \cS. \nonumber
    \end{talign}
    Equivalently,~\eqref{eq:stationary_balance_eqn} can be written as $(\hat{I} - \gamma P)^Tx = \rho$ for some matrices $\hat{I}$ and $P$.
\end{lemma}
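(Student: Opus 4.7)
My plan is to verify the three conditions in order of increasing difficulty, with the Bellman flow equation~\eqref{eq:stationary_balance_eqn} being the only substantive calculation.

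First, nonnegativity is immediate: $\eta^\pi_\rho(s)$ is a nonnegative linear combination of the nonnegative quantities $\kappa^\pi_q(s)$ (see~\eqref{eq:visitation_measure} and~\eqref{eq:weighed_visitation}), and $\pi(a \vert s) \geq 0$ because $\pi(\cdot \vert s) \in \DA$, so $x^\pi(a,s) \geq 0$. For the state-visitation constraint~\eqref{eq:state_visitation_constraint}, I simply pull $\eta^\pi_\rho(s)$ out of the sum over actions:
\begin{talign*}
    \sum_{a} x^\pi(a,s) = \eta^\pi_\rho(s) \sum_a \pi(a \vert s) = \eta^\pi_\rho(s),
\end{talign*}
using that $\pi(\cdot \vert s) \in \DA$.

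The real content is the balance equation~\eqref{eq:stationary_balance_eqn}. The strategy is to expand $\eta^\pi_\rho$ using its series representation and collapse the resulting expression by telescoping. Concretely, writing $p_t^\pi(s \vert q) := \mathrm{Pr}^\pi\{s_t=s \vert s_0=q\}$, definitions~\eqref{eq:visitation_measure} and~\eqref{eq:weighed_visitation} give
\begin{talign*}
    \eta^\pi_\rho(s) = \sum_{q \in \cS} \rho(q) \sum_{t=0}^\infty \gamma^t p_t^\pi(s \vert q).
\end{talign*}
The key identity I will use is the one-step transition relation
\begin{talign*}
    \sum_{s' \in \cS, a \in \cA} \mathcal{P}(s \vert s',a)\, \pi(a \vert s')\, p_t^\pi(s' \vert q) = p_{t+1}^\pi(s \vert q),
\end{talign*}
which holds by the Markov property. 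Substituting $x^\pi(a,s') = \eta^\pi_\rho(s') \pi(a \vert s')$ into $\gamma \sum_{s',a} \mathcal{P}(s \vert s',a) x^\pi(a,s')$ and applying this identity shifts the time index by one, yielding $\sum_q \rho(q) \sum_{t=0}^\infty \gamma^{t+1} p_{t+1}^\pi(s \vert q)$. Subtracting this from $\eta^\pi_\rho(s)$ telescopes the series, leaving only the $t=0$ term $\sum_q \rho(q) p_0^\pi(s \vert q) = \rho(s)$.

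The main obstacle, such as it is, is purely bookkeeping: making sure the series converges absolutely (which follows from $\gamma \in [0,1)$ and $p_t^\pi \leq 1$) so that reindexing $t \mapsto t+1$ and pulling out the constant $\rho(q)$ are justified. The matrix form $(\hat I - \gamma P)^T x = \rho$ is then just a restatement of~\eqref{eq:stationary_balance_eqn} with $\hat I$ defined by $\hat I_{(a,s),s'} = \mathbb{1}[s=s']$ and $P$ defined by $P_{(a,s),s'} = \mathcal{P}(s' \vert s,a)$, so no additional argument is needed.
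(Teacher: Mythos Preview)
Your proposal is correct. The paper does not supply its own proof of this lemma but simply cites \cite[Theorem 6.9.1]{puterman2014markov}; the telescoping argument you give (expand $\eta^\pi_\rho$ via~\eqref{eq:visitation_measure}--\eqref{eq:weighed_visitation}, use the one-step Markov identity to shift the time index, and read off the $t=0$ term) is exactly the standard derivation found there, and your identification of $\hat I$ and $P$ is consistent with how the paper uses these matrices later (e.g., in the proof of Lemma~\ref{lem:psi_lb_and_nonopt}).
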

In view of the lemma, we consider the following (possibly nonlinear) program.
Fix some distribution $\rho  \in \Delta_{\vert \cS \vert}$ and policy $\pi(\cdot \vert s) \in \DA$ (e.g., $\pi^*$) in
\begin{talign} \label{eq:rl_np}
    \min_{x} ~& \{ f_{\rho, \pi}(x) := \langle c, x \rangle + \sum_{s \in \cS} \eta^\pi_\rho(s) \bar{h}^{x(\cdot , s)}(s) \} \\
    \text{s.t.} ~& 
    (\hat{I} - \gamma P)^Tx = \rho \nonumber \\
    & x \in X(\rho, \pi) := \{x \in \mathbb{R}^{\vert \cA \vert \times \vert \cS \vert} \vert \sum_{a \in \cA} x(a,s) = \eta^\pi_\rho(s), x \geq \mathbf{0} \}, \nonumber
\end{talign}
where $\bar{h}^{x(\cdot, s)}(s) := h^{u(\cdot, s)}(s)$ and $u(\cdot, s) = x(\cdot, s) / \sum_{a \in \cA} x(a,s)$ for any $x \in X(\rho,\pi)$. Here, the minor notation change from $h$ to $\bar{h}$ is needed since $h^\cdot(s)$ takes probability distribution as inputs (the function $\bar{h}^{x(\cdot, s)}(s)$ is still convex in $x(\cdot, s)$ for all $x \in X(\rho, \pi)$, since the normalization factor used to transform $x(\cdot, s)$ to $u(\cdot, s)$ is the same for any $x \in X(\rho, \pi)$).
For the unregularized case, i.e., $h^{x(\cdot, s)}(s) = 0$, the optimization problem is equivalent to the dual linear programming (LP) formulation of MDPs~\cite{puterman2014markov}, but with the additional constraint $\sum_a x(a,s) = \eta^\pi_\rho(s)$. 
The inclusion of this constraint permits one to view the set of values $x(\cdot, s)$ as a ``scaled'' policy $\eta_\rho^{\pi}(s) \cdot \pi'(\cdot \vert s)$ for some policy $\pi'(\cdot \vert s) \in \DA$, i.e., $x(\cdot, s)$ sums to $\eta_\rho^\pi(s)$ instead of summing to 1.

Consider the Lagrange function, $L_{\rho,\pi} (x,v) :=  \langle c, x\rangle + \sum_{s \in \cS}  \eta^{\pi}_\rho(s) \cdot \bar h^{x(\cdot, s)}(s) + \langle v, \rho - (\hat{I} - \gamma P)^Tx \rangle$.
This leads us to the dual program, $\underline{L}_{\rho, \pi} (v) := \min_{x \in X(\rho, \pi)} L_{\rho, \pi} (x,v)$.
\begin{lemma} \label{lem:dual_program}
    For any policy $\pi$ and its value function $V^{\pi} \in \mathbb{R}^{\vert \cS \vert}$,
    \begin{talign} \label{eq:dual_equivalent}
        \underline{L}_{\rho, \pi'} (V^{\pi})
        &=
        \langle V^\pi, \rho \rangle - \sum_{s \in \cS} \eta^{\pi'}_\rho(s) \cdot \max_{p \in \DA} \{-\psi^{\pi}(s, p) \}.
    \end{talign}
\end{lemma}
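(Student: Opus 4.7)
The plan is to substitute $v = V^\pi$ into the Lagrangian, use the Bellman equation~\eqref{eq:QV2} to simplify the linear term $\langle c - (\hat{I} - \gamma P) V^\pi, x\rangle$, and then recognize that the minimization decouples across states into the advantage gap function~\eqref{eq:gap_function_definition}.

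Concretely, I would first expand
\begin{talign*}
L_{\rho, \pi'}(x, V^\pi)
= \langle V^\pi, \rho \rangle + \langle c - (\hat{I} - \gamma P) V^\pi, x \rangle + \sum_{s \in \cS} \eta^{\pi'}_\rho(s) \bar{h}^{x(\cdot, s)}(s).
\end{talign*}
By the construction of $\hat{I}$ and $P$ in Lemma~\ref{lem:pi_to_fx}, one has $[(\hat{I} - \gamma P) V^\pi](a, s) = V^\pi(s) - \gamma \sum_{s' \in \cS} \mathcal{P}(s' \vert s, a) V^\pi(s')$, so the Bellman-type identity~\eqref{eq:QV2} gives
\begin{talign*}
[c - (\hat{I} - \gamma P) V^\pi](a, s) = Q^\pi(s, a) - h^{\pi(\cdot \vert s)}(s) - V^\pi(s).
\end{talign*}

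Next, I would use the structural constraint $\sum_{a \in \cA} x(a, s) = \eta^{\pi'}_\rho(s)$ defining $X(\rho, \pi')$ to parametrize $x(\cdot, s) = \eta^{\pi'}_\rho(s) \cdot u(\cdot, s)$ with $u(\cdot, s) \in \DA$ (assuming $\eta^{\pi'}_\rho(s) > 0$; if $\eta^{\pi'}_\rho(s) = 0$ then $x(\cdot, s)$ is forced to $\mathbf{0}$ and the term contributes nothing, which will match the convention used on the right-hand side). Substituting yields
\begin{talign*}
L_{\rho,\pi'}(x, V^\pi)
= \langle V^\pi, \rho \rangle + \sum_{s \in \cS} \eta^{\pi'}_\rho(s)\bigl[ \langle Q^\pi(s, \cdot), u(\cdot, s) \rangle + h^{u(\cdot,s)}(s) - h^{\pi(\cdot \vert s)}(s) - V^\pi(s) \bigr],
\end{talign*}
and the bracketed expression is exactly $\psi^\pi(s, u(\cdot, s))$ by~\eqref{eq:def_advantage}.

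Finally, since the constraints in $X(\rho, \pi')$ decouple across states (each $u(\cdot, s)$ can be chosen independently in $\DA$) and since each state's weight $\eta^{\pi'}_\rho(s) \geq 0$ appears as a nonnegative coefficient, the minimization over $x$ reduces to a separate minimization of $\psi^\pi(s, \cdot)$ over $\DA$ at each state, giving
\begin{talign*}
\underline{L}_{\rho, \pi'}(V^\pi)
= \langle V^\pi, \rho \rangle + \sum_{s \in \cS} \eta^{\pi'}_\rho(s) \min_{p \in \DA} \psi^\pi(s, p)
= \langle V^\pi, \rho \rangle - \sum_{s \in \cS} \eta^{\pi'}_\rho(s) \max_{p \in \DA} \{-\psi^\pi(s, p)\},
\end{talign*}
which is the claimed identity. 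The only mild obstacle is the bookkeeping around states with $\eta^{\pi'}_\rho(s) = 0$ and making sure the reparametrization via $u(\cdot, s)$ is consistent with the definition $\bar{h}^{x(\cdot, s)}(s) = h^{u(\cdot, s)}(s)$; otherwise the argument is a direct chain of substitutions driven by the Bellman equation.
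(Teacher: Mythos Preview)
Your proposal is correct and follows essentially the same approach as the paper: expand the Lagrangian at $v=V^\pi$, use the Bellman identity~\eqref{eq:QV2} to rewrite the linear term as $Q^\pi(s,a)-h^{\pi(\cdot|s)}(s)-V^\pi(s)$, decouple across states via the normalization $x(\cdot,s)=\eta^{\pi'}_\rho(s)\,u(\cdot,s)$, and recognize the advantage function. The only differences are cosmetic ordering (the paper decouples first and then invokes~\eqref{eq:QV2}, and phrases the inner problem as a max rather than a min), and your reference ``\texttt{lem:pi\_to\_fx}'' is a typo for \texttt{lem:pi\_to\_x}.
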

\begin{proof}
    We have
    \begin{talign*}
        &\underline{L}_{\rho, \pi'} (V^\pi) \\
        &=
        \min_{x \in X(\rho, \pi')} \big\{ \langle c, x \rangle + \sum_{s \in \cS} \eta^{\pi'}_\rho(s) \cdot \bar h^{x(\cdot, s)}(s) + \langle V^\pi, \rho - (\hat{I} - \gamma P)^Tx \rangle \big\} \\
        &=
        \langle V^\pi, \rho \rangle - \max_{x \in X(\rho, \pi')} \big\{  \langle {- c - (\gamma P - \hat{I})V^{\pi}}, x \rangle - \sum_{s \in \cS}\eta^{\pi'}_\rho(s) \cdot \bar h^{x(\cdot, s)}(s)  \big\} \\
        &=
        \langle V^\pi, \rho \rangle - \sum_{s \in \cS} \eta^{\pi'}_\rho(s) \cdot \max_{p \in \DA}  \big\{  \langle -c(s,\cdot) - [(\gamma P - \hat{I})V^\pi](s, \cdot), p \rangle - h^{p}(s)  \big\},
    \end{talign*}
    and we also have
    \begin{talign*}
        -c(s,a) - \big[(\gamma P - \hat{I}) V^\pi](s,a) 
        &= -(c(s,a) + h^{\pi(\cdot \vert s)}(s) + \gamma  {\mathbb{E}_{s' \sim P(\cdot \vert s, \pi(\cdot \vert s))}} [V^{\pi}(s')]\big) + V^{\pi}(s) + h^{\pi(\cdot \vert s)}(s) \\
        &\leftstackrel{\eqref{eq:QV2}}{=} 
        -Q^{\pi}(s,a) + V^{\pi}(s) + h^{\pi(\cdot \vert s)}(s).
    \end{talign*}
    In view of the advantage function~\eqref{eq:def_advantage}, we get~\eqref{eq:dual_equivalent}.
\end{proof}

Maximizing the dual program is also solving the Lagrangian relaxation of the convex program
\begin{talign}
  \max_{v \in \mathbb{R}^{\vert \mathcal S \vert}} & \ \rho^Tv \label{eq:rl_np_dual} \\
  \text{s.t.} & \ \max_{p \in \DA} 
  \{ \langle -c(s,\cdot) - [(\gamma P - \hat{I})v](s, \cdot), p \rangle - h^{p}(s) \} \leq 0, \ \forall s \in \cS. \nonumber
\end{talign}
For the unregularized case, i.e., $h^{\pi(\cdot \vert s)}(s) = 0$, one can show the resulting optimization problem is equivalent to the (primal) LP formulation of MDPs~\cite{puterman2014markov}.
A similar duality result was shown in~\cite{neu2017unified}, but with the main difference being they replace $x \in X(\pi,\rho)$ with $x \in \Delta_{\vert \cA \vert \times \vert S \vert}$ and set the weighting vector $\eta^\pi_\rho(s) = \eta$ as a constant.

Next, we analyze a policy gradient-type method for minimizing the advantage gap function.

\section{Distribution-free convergence for PMD and strongly-polynomial runtime} \label{sec:deterministic_pmd}
Our goal is to show the basic policy mirror descent (PMD) method can achieve distribution-free convergence that matches the best rates for bounding just the aggregated optimality gap.
That is, we aim to show one can get both sublinear and linear convergence rates for $V^{\pi_t}(s) - V^{\pi^*}(s)$ at any state, rather than for the aggregated gap $\mathbb{E}_{s \sim \nu^*}[V^{\pi_t}(s) - V^{\pi^*}(s)]$.

\begin{algorithm}
\caption{Policy mirror descent}
\label{alg:pmd}
\begin{algorithmic}[1]
    \STATE{\textbf{Input}: $\pi_0(\cdot \vert s) \in \Delta_{\vert \cA \vert}$ and step sizes $\eta_t$}
    \FOR{$t=0,1,\ldots,$}
        \STATE{Update for all $s \in \cS$
        \begin{talign}
            \pi_{t+1}(\cdot \vert s) 
            &= \mathrm{argmin}_{\pi'(\cdot \vert s) \in \DA} \{ \eta_t [\langle \Qts, \pi'(\cdot \vert s) \rangle + h^{\pi'(\cdot \vert s)}(s)] + D^{\pi'}_{\pi_t}(s)\} \nonumber \\
            &= \mathrm{argmin}_{\pi'(\cdot \vert s) \in \DA} \{ \eta_t \psit(s,\pi'(\cdot \vert s)) + D^{\pi'}_{\pi_t}(s)\}. \label{eq:pmd_subproblem}
        \end{talign}}
    \ENDFOR
\end{algorithmic}
\end{algorithm}

\subsection{Basic PMD method} \label{sec:basic_pmd}
We consider a basic policy mirror descent (PMD) method, as first introduced in~\cite{lan2023policy}.
Starting from an arbitrarily policy $\pi_0$, 
we compute the state-action value function $Q^{\pi_t}(s,\cdot)$ in each iteration  (equivalently, one can compute the advantage function since the two are equivalent up to a constant additive factor at a fixed state $s$).
Then we solve the subproblem~\eqref{eq:pmd_subproblem} at every state $s$, which involves an inner product with $Q^{\pi_t}$, the regularization term $h^p$, step size $\eta_t$, and an arbitrary Bregman's distance $D^p_{\pi_t}(s)$.
This subproblem is referred to as the prox-mapping.
In some cases, a closed-form solution is known for~\eqref{eq:pmd_subproblem}. 
See~\cite{lan2023policy} for more details.

Recall the modulus of strong convexity parameter $\mu_h$ from~\eqref{eq:mu_h_defn}.
The following can be derived by the optimality conditions of~\eqref{eq:pmd_subproblem}, see for example~\cite[Lemma 3.1]{lan2022policy}.
\begin{lemma}  \label{lem:pmd_descent}
    Let $\pi_t$ be defined according to~\eqref{eq:pmd_subproblem}.
    If the step size $\eta_t$ satisfies $\mu_h + \eta_t^{-1} \geq 0$, then
    \begin{talign*}
     &\langle \Qt(s, \cdot),\pi_{t+1}(\cdot \vert s) \rangle + h^{\pi_{t+1}(\cdot \vert s)}(s) + \eta_t^{-1}D_{\pi_t}^{\pi_{t+1}}(s) + (\mu_h + {\eta_t}^{-1}) D_{\pi_{t+1}}^\pi(s) \\
     &\leq
     \langle \Qt(s, \cdot),\pi(\cdot \vert s) \rangle + h^{\pi(\cdot \vert s)}(s) + {\eta_t}^{-1} D_{\pi_t}^\pi(s), \ \forall \pi(\cdot \vert s) \in \DA, s \in \cS.
    \end{talign*}
\end{lemma}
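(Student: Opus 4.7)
The plan is to derive the inequality from the first-order optimality condition for the prox-mapping~\eqref{eq:pmd_subproblem}, and then upgrade it using the strong convexity of $h$ together with the three-point identity for Bregman's distance.

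First, since $\pi_{t+1}(\cdot \vert s)$ minimizes the (possibly nonsmooth) convex function $\pi'(\cdot \vert s) \mapsto \eta_t[\langle \Qts, \pi'(\cdot\vert s)\rangle + h^{\pi'(\cdot\vert s)}(s)] + D^{\pi'}_{\pi_t}(s)$ over the simplex $\DA$, there exists a subgradient $(h')^{\pi_{t+1}(\cdot\vert s)}(s,\cdot)$ of $h^{\cdot}(s)$ at $\pi_{t+1}(\cdot\vert s)$ such that the variational inequality
\begin{talign*}
\langle \eta_t \Qts + \eta_t (h')^{\pi_{t+1}(\cdot\vert s)}(s,\cdot) + \nabla\omega(\pi_{t+1}(\cdot\vert s)) - \nabla\omega(\pi_t(\cdot\vert s)),\, \pi(\cdot\vert s) - \pi_{t+1}(\cdot\vert s)\rangle \geq 0
\end{talign*}
holds for every feasible $\pi(\cdot\vert s) \in \DA$; here I have used the fact that $\nabla_{\pi'} D^{\pi'}_{\pi_t}(s) = \nabla\omega(\pi'(\cdot\vert s)) - \nabla\omega(\pi_t(\cdot\vert s))$ by the definition of Bregman's distance.

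Next, I would invoke the standard three-point identity, which a direct expansion of the definition of $D$ shows equals
\begin{talign*}
\langle \nabla\omega(\pi_{t+1}(\cdot\vert s)) - \nabla\omega(\pi_t(\cdot\vert s)),\, \pi(\cdot\vert s) - \pi_{t+1}(\cdot\vert s)\rangle = D^\pi_{\pi_t}(s) - D^\pi_{\pi_{t+1}}(s) - D^{\pi_{t+1}}_{\pi_t}(s).
\end{talign*}
Rearranging the optimality condition and substituting in this identity produces
\begin{talign*}
\eta_t \langle \Qts, \pi_{t+1}(\cdot\vert s) - \pi(\cdot\vert s)\rangle + \eta_t \langle (h')^{\pi_{t+1}(\cdot\vert s)}(s,\cdot), \pi_{t+1}(\cdot\vert s) - \pi(\cdot\vert s)\rangle \\
\leq D^\pi_{\pi_t}(s) - D^\pi_{\pi_{t+1}}(s) - D^{\pi_{t+1}}_{\pi_t}(s).
\end{talign*}

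To introduce the $\mu_h D^\pi_{\pi_{t+1}}(s)$ term and convert the subgradient expression into $h^{\pi_{t+1}(\cdot\vert s)}(s) - h^{\pi(\cdot\vert s)}(s)$, I would apply the strong convexity hypothesis~\eqref{eq:mu_h_defn} of $h$ at $\pi_{t+1}(\cdot\vert s)$, which gives the lower bound $\langle (h')^{\pi_{t+1}(\cdot\vert s)}(s,\cdot), \pi_{t+1}(\cdot\vert s) - \pi(\cdot\vert s)\rangle \geq h^{\pi_{t+1}(\cdot\vert s)}(s) - h^{\pi(\cdot\vert s)}(s) + \mu_h D^\pi_{\pi_{t+1}}(s)$. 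Plugging this lower bound into the previous display and then dividing through by $\eta_t$ (which requires $\eta_t > 0$, explaining the role of the condition $\mu_h + \eta_t^{-1} \geq 0$) and collecting the two $D^\pi_{\pi_{t+1}}(s)$ terms on the left-hand side yields exactly the claimed inequality.

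The only real subtlety is the nonsmoothness of $h$: the subgradient that appears in the first-order condition must be the same one used when applying strong convexity, so I would be explicit that the optimality condition is stated with a particular subgradient and that the same choice is threaded through the rest of the argument. Beyond this bookkeeping, every remaining step is a direct algebraic rearrangement, so I do not anticipate any further obstacle.
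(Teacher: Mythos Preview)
Your proposal is correct and is precisely the standard derivation: the paper itself does not give a proof but defers to \cite[Lemma 3.1]{lan2022policy}, and your argument---first-order optimality of the prox-mapping, the three-point Bregman identity, then the $\mu_h$-strong convexity of $h$---is exactly that proof. The only minor quibble is your parenthetical that $\mu_h+\eta_t^{-1}\ge 0$ ``explains'' why one can divide by $\eta_t$; in fact the proof simply needs $\eta_t>0$ (so the subproblem is a convex minimization and the inequality direction is preserved), which, since $\mu_h\ge 0$, automatically implies the stated hypothesis.
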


Next, monotonicity of PMD is shown.
In the sequel, a step size $\eta_t=1/0$ simply means Bregman's distance $D^a_{\pi_t}(s)$ is set to 0 in the subproblem~\eqref{eq:pmd_subproblem} at iteration $t$.
We skip the proof, which can be found in~\cite[Proposition 3.2]{lan2022policy}.
\begin{lemma} \label{lem:pmd_monotone}
    For any $\eta_t \in [0, +\infty) \cup \{1/0\}$, $V^{\pi_{t+1}}(s) - V^{\pi_{t}}(s) \leq \psi^{\pi_t}(s, \pi_{t+1}(\cdot \vert s)) \leq 0$.
\end{lemma}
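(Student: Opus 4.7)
The plan is to prove the rightmost inequality $\psit(s, \pi_{t+1}(\cdot \vert s)) \leq 0$ first, and then bootstrap it through the performance difference lemma to obtain the left one. For the non-positivity, I would apply Lemma~\ref{lem:pmd_descent} with test policy $\pi = \pi_t$. Since $D_{\pi_t}^{\pi_t}(s) = 0$ on the right-hand side and $\langle \Qts, \pits \rangle = V^{\pi_t}(s)$ by~\eqref{eq:QV1}, the inequality rearranges (using the definition of $\psit$ in~\eqref{eq:def_advantage}) to
\[
\psit(s, \pi_{t+1}(\cdot \vert s)) + \eta_t^{-1} D^{\pi_{t+1}}_{\pi_t}(s) + (\mu_h + \eta_t^{-1}) D^{\pi_t}_{\pi_{t+1}}(s) \leq 0.
\]
Both Bregman terms are non-negative since the range $\eta_t \in [0, +\infty) \cup \{1/0\}$ combined with $\mu_h \geq 0$ ensures $\mu_h + \eta_t^{-1} \geq 0$; the extreme cases $\eta_t = 0$ (which forces $\pi_{t+1} = \pits$, making the advantage identically zero) and $\eta_t = 1/0$ (which removes the Bregman term in the subproblem but preserves the non-positivity via the same direct comparison with $\pits$) are handled separately. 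Hence $\psit(s, \pi_{t+1}(\cdot \vert s)) \leq 0$ uniformly in $s$.

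For the leftmost inequality, I would apply the performance difference lemma (Lemma~\ref{lem:performance_diff_deter}) with $\pi = \pi_t$ and $\pi' = \pi_{t+1}$ to express
\[
V^{\pi_{t+1}}(s) - V^{\pi_t}(s) = \frac{1}{1-\gamma} \sum_{q \in \cS} \psit(q, \pi_{t+1}(\cdot \vert q)) \, \kappa^{\pi_{t+1}}_s(q).
\]
The first step ensures every summand $\psit(q, \pi_{t+1}(\cdot \vert q))$ is non-positive, and $\kappa^{\pi_{t+1}}_s(q) \geq 0$, so discarding all summands except $q = s$ only increases the total. This gives the upper bound $(1-\gamma)^{-1} \kappa^{\pi_{t+1}}_s(s) \, \psit(s, \pi_{t+1}(\cdot \vert s))$. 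Invoking the self-visitation lower bound $\kappa^{\pi_{t+1}}_s(s) \geq 1 - \gamma$ noted after~\eqref{eq:visitation_measure}, the coefficient $(1-\gamma)^{-1} \kappa^{\pi_{t+1}}_s(s) \geq 1$; multiplying the non-positive $\psit(s, \pi_{t+1}(\cdot \vert s))$ by such a coefficient can only decrease it, yielding the desired $V^{\pi_{t+1}}(s) - V^{\pi_t}(s) \leq \psit(s, \pi_{t+1}(\cdot \vert s))$.

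The main subtlety lies in the sign coordination between the two halves. A direct application of the performance difference lemma yields a weighted sum over states with weights $\kappa^{\pi_{t+1}}_s$, which ordinarily would not reduce to a single-state statement. What makes the reduction work here is that Lemma~\ref{lem:pmd_descent} guarantees non-positivity of the advantage uniformly over all states, so the diagonal term $q = s$ provides the tightest upper bound; the self-visitation bound $\kappa^{\pi_{t+1}}_s(s) \geq 1-\gamma$ then exactly absorbs the $(1-\gamma)^{-1}$ prefactor to close out the argument.
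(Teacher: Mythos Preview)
Your proof is correct and follows the standard argument: Lemma~\ref{lem:pmd_descent} with test point $\pi_t$ gives non-positivity of $\psit(s,\pi_{t+1}(\cdot\vert s))$ uniformly in $s$, and then the performance difference lemma together with $\kappa^{\pi_{t+1}}_s(s)\geq 1-\gamma$ upgrades this to the pointwise value inequality. The paper itself skips the proof and cites \cite[Proposition~3.2]{lan2022policy}, whose argument proceeds along exactly these lines, so your approach matches.
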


Now we show that a direct application of the PMD method achieves a sublinear rate of convergence of the value function for all states.
\begin{theorem} \label{thm:sublinear_distribution_free_deterministic}
    Let $\eta_t > 0$ be a non-decreasing step size used in the PMD method. Then
    \begin{talign*}
        V^{\pi_k}(s) - V^{\pi^*}(s)
        \leq
        \frac{ \sum_{q \in \cS} \kappa^{\pi^*}_s(q) \cdot [\eta_0(V^{\pi_0}(q) - V^{\pi^*}(q)) + D^{\pi^*}_{\pi_0}(q)] }{\eta_0 (1-\gamma) k}, ~ \forall s \in \cS.
    \end{talign*}
\end{theorem}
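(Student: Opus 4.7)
The plan is to combine three ingredients already established: the three-point inequality for the prox-mapping (Lemma~\ref{lem:pmd_descent}), the monotonicity of the value function under PMD (Lemma~\ref{lem:pmd_monotone}), and the performance difference lemma (Lemma~\ref{lem:performance_diff_deter}). The key observation is that bounding the per-state optimality gap $V^{\pi_t}(s)-V^{\pi^*}(s)$ can be reduced, via the performance difference lemma, to a weighted sum (against the fixed distribution $\kappa^{\pi^*}_s$) of the quantities $-\psi^{\pi_t}(q,\pi^*(\cdot|q))$, which the prox-inequality naturally controls after telescoping.

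First I would apply Lemma~\ref{lem:pmd_descent} with test policy $\pi=\pi^*$ at each state $q\in\cS$. Subtracting the common quantity $V^{\pi_t}(q)+h^{\pi_t(\cdot|q)}(q)$ from both sides and using the definition~\eqref{eq:def_advantage} of $\psi^{\pi_t}$ converts the inequality to
\begin{talign*}
  \psi^{\pi_t}(q,\pi_{t+1}(\cdot|q)) + \eta_t^{-1}D^{\pi_{t+1}}_{\pi_t}(q) + (\mu_h+\eta_t^{-1})D^{\pi^*}_{\pi_{t+1}}(q)
  \;\leq\; \psi^{\pi_t}(q,\pi^*(\cdot|q)) + \eta_t^{-1}D^{\pi^*}_{\pi_t}(q).
\end{talign*}
Dropping the nonnegative terms involving $\mu_h$ and $D^{\pi_{t+1}}_{\pi_t}$, and using Lemma~\ref{lem:pmd_monotone} to replace $\psi^{\pi_t}(q,\pi_{t+1}(\cdot|q))$ by the lower bound $V^{\pi_{t+1}}(q)-V^{\pi_t}(q)$, I would obtain
\begin{talign*}
  -\psi^{\pi_t}(q,\pi^*(\cdot|q)) \;\leq\; \bigl(V^{\pi_t}(q)-V^{\pi_{t+1}}(q)\bigr) + \eta_t^{-1}\bigl(D^{\pi^*}_{\pi_t}(q)-D^{\pi^*}_{\pi_{t+1}}(q)\bigr).
\end{talign*}

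Next, I would multiply by $\kappa^{\pi^*}_s(q)/(1-\gamma)\geq 0$, sum over $q\in\cS$, and invoke~\eqref{eq:per_diff} on the left to identify $V^{\pi_t}(s)-V^{\pi^*}(s)$. Then I would sum over $t=0,\dots,k-1$. The value-function term telescopes directly to $V^{\pi_0}(q)-V^{\pi_k}(q)\leq V^{\pi_0}(q)-V^{\pi^*}(q)$ (using $V^{\pi_k}\geq V^{\pi^*}$). The Bregman term requires Abel summation: since $\eta_t$ is non-decreasing, $\eta_t^{-1}$ is non-increasing, so
\begin{talign*}
  \sum_{t=0}^{k-1}\eta_t^{-1}\bigl(D^{\pi^*}_{\pi_t}(q)-D^{\pi^*}_{\pi_{t+1}}(q)\bigr)
  = \eta_0^{-1}D^{\pi^*}_{\pi_0}(q) - \eta_{k-1}^{-1}D^{\pi^*}_{\pi_k}(q) + \sum_{t=1}^{k-1}(\eta_t^{-1}-\eta_{t-1}^{-1})D^{\pi^*}_{\pi_t}(q) \leq \eta_0^{-1}D^{\pi^*}_{\pi_0}(q),
\end{talign*}
where the dropped terms are all nonpositive. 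Combining these two bounds yields
\begin{talign*}
  \sum_{t=0}^{k-1}\bigl(V^{\pi_t}(s)-V^{\pi^*}(s)\bigr) \;\leq\; \frac{1}{1-\gamma}\sum_{q\in\cS}\kappa^{\pi^*}_s(q)\bigl[\eta_0^{-1}D^{\pi^*}_{\pi_0}(q)+V^{\pi_0}(q)-V^{\pi^*}(q)\bigr].
\end{talign*}

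Finally, I would close the argument by another appeal to monotonicity (Lemma~\ref{lem:pmd_monotone}): since $V^{\pi_t}(s)$ is non-increasing in $t$, the last iterate gives $k(V^{\pi_k}(s)-V^{\pi^*}(s))\leq \sum_{t=0}^{k-1}(V^{\pi_t}(s)-V^{\pi^*}(s))$, and multiplying numerator and denominator by $\eta_0$ produces exactly the claimed bound. The only nonroutine step is the Abel summation exploiting that $\eta_t^{-1}$ is non-increasing; everything else is a direct assembly of pieces already stated in the paper. The main conceptual point, rather than a technical obstacle, is recognizing that because the telescoping works uniformly in $q$ and the weighting $\kappa^{\pi^*}_s$ is a fixed distribution independent of the iterates, the resulting rate is distribution-free in the sense of~\eqref{eq:distribution_free}.
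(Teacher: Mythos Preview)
Your proposal is correct and follows essentially the same approach as the paper's proof: both combine the performance difference lemma, the prox-mapping three-point inequality (Lemma~\ref{lem:pmd_descent}), and monotonicity (Lemma~\ref{lem:pmd_monotone}), then telescope over $t$ using the non-decreasing step size. Your Abel summation step makes explicit what the paper's proof handles more tersely under the annotation ``$\eta_{t+1}\geq\eta_t$'', but the argument is otherwise identical in structure and content.
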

\begin{proof}
    We have for any policy $\pi(\cdot \vert s) \in \Delta_{\vert \cA \vert}$
    \begin{talign}
        &(1-\gamma)[V^{\pi_t}(s) - V^\pi(s)] \nonumber \\
        &\stackrel{\eqref{eq:per_diff}}{=}
        \sum_{q \in \cS} \kappa^\pi_s(q) (-\psit(q, \pi(\cdot \vert q))) \nonumber \\
        &~\leftstackrel{\text{Lemma~\ref{lem:pmd_descent}}}{\leq}
        \sum_{q \in \cS} \kappa^\pi_s(q) [-\psit(q, \pi_{t+1}(\cdot \vert q)) + \eta_t^{-1}D_{\pi_t}^\pi(q) - \eta_t^{-1} D_{\pi_{t+1}}^\pi(q)] \nonumber \\
        &~\leftstackrel{\text{Lemma~\ref{lem:pmd_monotone}}}{\leq}
        \sum_{q \in \cS} \kappa^\pi_s(q) [V^{\pi_t}(q) - V^{\pi_{t+1}}(q) + \eta_t^{-1}D_{\pi_t}^\pi(q) - \eta_t^{-1} D_{\pi_{t+1}}^\pi(q)]. \nonumber 
    \end{talign}
    Fixing $\pi=\pi^*$ and taking a telescopic sum from $t=0,\ldots,k-1$, we get
    \begin{talign}
        &k(1-\gamma)[V^{\pi_{k}}(s) - V^{\pi^*}(s)] \nonumber \\
        &\leftstackrel{\text{Lemma~\ref{lem:pmd_monotone}}}{\leq}
        (1-\gamma) \sum_{t=0}^{k-1} [V^{\pi_{t}}(s) - V^{\pi^*}(s)] \nonumber \\
        &\leq
        \sum_{q \in \cS} \kappa^{\pi^*}_s(q)[\sum_{t=0}^{k-1} V^{\pi_t}(q) - V^{\pi_{t+1}}(q) + \sum_{t=0}^{k-1} \eta_t^{-1}D^{\pi^*}_{\pi_t}(q) - \eta_t^{-1} D^{\pi^*}_{\pi_{t+1}}(q)] \nonumber \\
        &\leftstackrel{\substack{V^{\pi_{k}}(s) \geq V^{\pi^*}(s) \\ \text{and}~ \eta_{t+1} \geq \eta_{t}}}{\leq}
        \sum_{q \in \cS} \kappa^{\pi^*}_s(q) [V^{\pi_0}(q) - V^{\pi^*}(q) + \eta_0^{-1}D_{\pi_0}^{\pi^*}(q)
        - \eta_{k-1}^{-1} D^{\pi^*}_{\pi_k}(q)]. \label{eq:telescopic_sum}
    \end{talign}
\end{proof}

This result strengthens~\cite[Theorem 2]{lan2023policy} by improving the convergence to be distribution free.
See~\eqref{eq:distribution_free} and the surrounding discussions for more details.
Note that a similar sublinear distribution-free rate was already shown by some policy gradient methods~\cite{agarwal2021theory,bhandari2024global}.

Now, by choosing a geometrically increasing step size, the averaged optimality gap $f_{\nu^*}(\pi) - f_{\nu^*}(\pi^*)$ can also decrease at a linear rate~\cite{xiao2022convergence,li2022homotopic}.
However, in the analysis it is crucial to invoke the stationarity of $\nu^*$. 
Hence, it is not straightforward to extend this convergence to be distribution-free.
In the next section, we show by using a similar increasing but slightly more involved step size schedule, one can strengthen the linear convergence to be distribution-free as well.

\subsection{Distribution-free linear convergence for PMD}
We will show by directly using PMD with a step size that increases geometrically at fixed intervals, then one can obtain linear convergence of the value function over any state.
We present two step sizes: one for general Bregman's distances, and one for bounded Bregman's distances.
Note that this result applies to general convex and strongly convex regularizers, i.e., $\mu_h \geq 0$.

\begin{theorem} \label{thm:V_linear_converence}
    Let $N := \lceil 4(1-\gamma)^{-1} \rceil$.
    By using the step size
    \begin{talign*}
        \eta_t = {4^{\lfloor t/N \rfloor} \bar{D}_0}/\Delta_{0},
    \end{talign*}
    where $\Delta_0 := (1-\gamma)^{-1} \max_{s \in \cS} g^{\pi_0}(s)$ and $\bar{D}_0 := \max_{s} D^{\pi^*}_{\pi_0}(s)$ for an arbitrary Bregman's distance, then
    \begin{talign} \label{eq:all_value_geometric_decrease}
        V^{\pi_t}(s) - V^{\pi^*}(s) \leq 2^{-\lfloor t/N \rfloor} \Delta_{0}, \ \forall s \in \cS.
    \end{talign}
\end{theorem}
\begin{proof}
    To simplify our analysis, we say epoch $i$ is the set of iterations $t=iN,iN+1,\ldots,(i+1)N-1$.
    Our proof will be by mathematical induction over epoch $i$.
    We will prove for any $s \in \cS$ and integer $i \geq 0$,
    \begin{talign} 
        V^{\pi_{iN}}(s) - V^{\pi^*}(s) 
        &\leq 
        2^{-i} \Delta_{0} \label{eq:step_value_geometric_decrease} \\
        \sum_{q\in \cS} \kappa^{\pi^*}_s(q) D^{\pi^*}_{\pi_{iN}}(q)
        &\leq
        2^{i} \bar{D}_0. \label{eq:bounded_distance_to_opt}
    \end{talign}
    In view of Lemma~\ref{lem:pmd_monotone}, then~\eqref{eq:step_value_geometric_decrease} implies~\eqref{eq:all_value_geometric_decrease}.
    
    For the base case of $i=0$,~\eqref{eq:step_value_geometric_decrease} is from Proposition~\ref{prop:gap_functions}, while~\eqref{eq:bounded_distance_to_opt} is from the assumption $D^{\pi^*}_{\pi_0}(q) \leq \bar{D}_0$ and $\kappa^{\pi^*}_s$ being a distribution over states. 
    We consider $i+1$ for some $i \geq 0$.
    Applying~\eqref{eq:telescopic_sum} over $t=iN, \ldots, (i+1)N-1$, which uses a constant step size of $\eta^{(i)} := {4^{i} \bar{D}_0}/\Delta_{0}$,
    \begin{talign}
        &N(1-\gamma)[V^{\pi_{(i+1)N}}(s) - V^{\pi^*}(s)] + \frac{1}{\eta^{(i)}} \sum_{q\in \cS} \kappa^{\pi^*}_s(q) D^{\pi^*}_{\pi_{(i+1)N}}(q) \nonumber \\
        &\leq 
        \sum_{q\in \cS} \kappa^{\pi^*}_s(q)[V^{\pi_{iN}}(q) - V^{\pi^*}(q)] + \frac{1}{\eta^{(i)}} \sum_{q\in \cS} \kappa^{\pi^*}_s(q) D^{\pi^*}_{\pi_{iN}}(q) \nonumber \\
        &\leftstackrel{\substack{\eqref{eq:step_value_geometric_decrease},~\eqref{eq:bounded_distance_to_opt}\\ \text{and}~\eta^{(i)}}}{\leq}
        2^{-(i-1)} \Delta_0. \label{eq:recursive_geometric}
    \end{talign}
    In view of $N$ and $\eta^{(i)}$, the above clearly implies~\eqref{eq:step_value_geometric_decrease} and~\eqref{eq:bounded_distance_to_opt} for epoch $i+1$, which completes the proof by induction.
\end{proof}

This result strengthens the linear convergence from~\cite[Theorem 1]{lan2022policy} to distribution-free linear convergence, alleviating the solution quality's dependence on the unknown stationary distribution $\nu^*$ of the optimal policy.
This is the first time the value function decreases at a linear rate simultaneously at every state for policy gradient type methods.
The main innovation is to perform a larger geometric increase in step size at fixed intervals instead of a slower geometric increase every iteration~\cite[Theorem 1]{lan2022policy}.
Note that the initial bound $\bar{D}_0$ is often known when we choose $\pi_0(\cdot \vert s)$ as the uniform distribution~\cite{lan2023policy,li2023policy}.

We now present a second step size for the case where the Bregman's distance has a universal upper bound, such as when $D^{\pi'}_{\pi}(s) = \frac{1}{2}\|\pi'(\cdot \vert s) - \pi(\cdot \vert s)\|_2^2$ is the Euclidean distance squared.
This step size is more aggressive compared to Theorem~\ref{thm:V_linear_converence} since we increase the step size every iteration rather than at fixed intervals.
\begin{theorem} \label{thm:V_linear_converence_2}
    Let $N := \lceil 4(1-\gamma)^{-1} \rceil$.
    By using the step size
    \begin{talign*}
        \eta_t = 2^t \cdot \bar{D}/\Delta_{0},
    \end{talign*}
    where $\Delta_0 := (1-\gamma)^{-1} \max_{s \in \cS} g^{\pi_0}(s)$ and $\bar{D}_0 := \max_{s} \max_{\pi,\pi' \in \Pi} D^{\pi'}_{\pi}(s)$ for an arbitrary Bregman's distance, then
    \begin{talign} \label{eq:all_value_geometric_decrease_2}
        V^{\pi_t}(s) - V^{\pi^*}(s) \leq 2^{-\lfloor t/N \rfloor} \Delta_{0}, \ \forall s \in \cS.
    \end{talign}
\end{theorem}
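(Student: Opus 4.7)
The plan is to mimic the proof of Theorem~\ref{thm:V_linear_converence} but exploit the universal Bregman bound $\bar{D}$ to simplify the induction. As before, partition the iterations into epochs, where epoch $i$ consists of the $N$ iterates $t=iN, iN+1,\dots,(i+1)N-1$. I will induct on $i$ to establish
\begin{talign*}
    V^{\pi_{iN}}(s) - V^{\pi^*}(s) \leq 2^{-i}\Delta_0, \ \forall s \in \cS,
\end{talign*}
from which~\eqref{eq:all_value_geometric_decrease_2} follows immediately by monotonicity of $V^{\pi_t}$ across each epoch via Lemma~\ref{lem:pmd_monotone}. The base case $i=0$ is exactly Proposition~\ref{prop:gap_functions} applied to $\pi_0$.

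For the inductive step, observe that the chosen step size $\eta_t = 2^t \bar{D}/\Delta_0$ is non-decreasing, so the telescoping estimate~\eqref{eq:telescopic_sum} applies over the epoch from $t=iN$ to $t=(i+1)N-1$, yielding
\begin{talign*}
    N(1-\gamma)[V^{\pi_{(i+1)N}}(s) - V^{\pi^*}(s)]
    &\leq
    \sum_{q \in \cS} \kappa^{\pi^*}_s(q) [V^{\pi_{iN}}(q) - V^{\pi^*}(q) + \eta_{iN}^{-1} D^{\pi^*}_{\pi_{iN}}(q)]
\end{talign*}
after dropping the non-negative $\eta_{(i+1)N-1}^{-1} D^{\pi^*}_{\pi_{(i+1)N}}(q)$ term. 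The crucial advantage here over Theorem~\ref{thm:V_linear_converence} is that the universal bound $D^{\pi^*}_{\pi_{iN}}(q) \leq \bar{D}$ sidesteps the need to inductively control the Bregman distance, since $\eta_{iN}^{-1} \bar{D} = 2^{-iN}\Delta_0$, and this term decays far faster than $2^{-i}\Delta_0$.

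Combining the inductive hypothesis with the step-size calculation gives a right-hand side bounded by $2^{-i}\Delta_0 + 2^{-iN}\Delta_0 \leq 2 \cdot 2^{-i}\Delta_0$. Dividing by $N(1-\gamma) \geq 4$ closes the induction. The main (mild) obstacle is simply ensuring the arithmetic works out to yield the clean factor of $2$ reduction per epoch; this is controlled by the choice $N = \lceil 4(1-\gamma)^{-1} \rceil$ and by noting $N \geq 1$ so $2^{-iN} \leq 2^{-i}$. No step here is fundamentally harder than the corresponding step in Theorem~\ref{thm:V_linear_converence}; the aggressive per-iteration doubling is compensated precisely by the uniform boundedness of $D$.
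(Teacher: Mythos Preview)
Your proposal is correct and follows essentially the same approach as the paper: induct on epochs of length $N$, replace the inductively-tracked Bregman bound~\eqref{eq:bounded_distance_to_opt} by the universal bound $D^{\pi^*}_{\pi_{iN}}(q)\le\bar D$, and feed this into the telescoped estimate~\eqref{eq:telescopic_sum} exactly as in~\eqref{eq:recursive_geometric}. The paper's sketch is terser, but your explicit computation of $\eta_{iN}^{-1}\bar D = 2^{-iN}\Delta_0 \le 2^{-i}\Delta_0$ is precisely the intended calculation.
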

\begin{proof}
    We sketch the proof, since 
    it is a similar to the one for Theorem~\ref{thm:V_linear_converence}. 
    We will use mathematical induction to show~\eqref{eq:step_value_geometric_decrease} and~\eqref{eq:bounded_distance_to_opt} occur.
    In particular, we can simplify the proof by replacing~\eqref{eq:bounded_distance_to_opt} with the inequality $\sum_{q\in \cS} \kappa^{\pi^*}_s(q) D^{\pi^*}_{\pi_{iN}}(q) \leq \bar{D}$, which always holds by definition of $\bar{D}$.
    Therefore, by applying~\eqref{eq:telescopic_sum} over $t=iN, \ldots, (i+1)N-1$, which uses an increasing size of $\eta_t := 2^t \cdot \bar{D}/\Delta_0 \geq 2^{-i} \cdot \bar{D}/\Delta_0$, then applying an inequality similar to~\eqref{eq:recursive_geometric} derives for us $N(1-\gamma)[V^{\pi_{(i+1)N}}(s) - V^{\pi^*}(s)] \leq 2^{-(i-1)}\Delta_0$, which by choice in $N$, completes the proof by induction.
\end{proof}

In the next subsection, we leverage the distribution-free convergence, i.e., independence of $\nu^*$, to design a strongly-polynomial time algorithm for unregularized MDPs.

\subsection{A strongly-polynomial time PMD}
Recall an algorithm is strongly-polynomial when the number of arithmetic operations is polynomial in the input size, and the memory usage is polynomial in the input data size.
We consider (unregularized) MDPs with a \textit{fixed} discount factor $\gamma$, where both the number of arithmetic operations and memory usage can have a $(1-\gamma)^{-1}$ dependence. Our result extends the work of Ye~\cite{ye2011simplex}, who showed combinatorial methods like simplex and Howard's policy iteration are strongly-polynomial for a fixed $\gamma$, to gradient methods like PMD.
Our developments are adapted from~\cite{ye2011simplex}, where the main difference is that we work in policy space and leverage the advantage gap function (Proposition~\ref{prop:gap_functions}) in lieu of strict complementary slackness.

First, we describe some structural properties of the RL problem.
Recall the weighted visitation vector $\eta^\pi_\rho$ from~\eqref{eq:weighed_visitation}.
\begin{lemma} \label{lem:visitation_bounds}
    Recall $\eta^\pi_\rho(s) \in [\rho(s), (1-\gamma)^{-1}]$.
    For any $\pi(\cdot \vert s) \in \DA$,
    the vector $x$ from Lemma~\ref{lem:pi_to_x} satisfies
    $x(s,a) = \eta^\pi_\rho(s) \pi(a \vert s) \in [0, (1-\gamma)^{-1}]$ for all states $s \in \cS$ and actions $a \in \cA$.
\end{lemma}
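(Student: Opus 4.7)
The proof is essentially bookkeeping on the definitions of $\eta^\pi_\rho$ and $\kappa^\pi_q$ from~\eqref{eq:weighed_visitation} and~\eqref{eq:visitation_measure}, so the plan is to dispatch each of the three required bounds in turn.

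First, for the lower bound $\eta^\pi_\rho(s) \geq \rho(s)$, the plan is to single out the $q=s$ term in the definition $\eta^\pi_\rho(s) = (1-\gamma)^{-1} \sum_{q \in \cS} \rho(q)\,\kappa^\pi_q(s)$ and use non-negativity of the remaining terms. This reduces the claim to $(1-\gamma)^{-1} \rho(s)\,\kappa^\pi_s(s) \geq \rho(s)$, which follows from the already-noted inequality $\kappa^\pi_s(s) \geq 1-\gamma$ (itself an immediate consequence of $\mathrm{Pr}^\pi\{s_0 = s \vert s_0 = s\}=1$ in the $t=0$ summand of~\eqref{eq:visitation_measure}).

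Second, for the upper bound $\eta^\pi_\rho(s) \leq (1-\gamma)^{-1}$, I would use the fact that $\kappa^\pi_q \in \Delta_{\vert \cS \vert}$ is a probability distribution over states (a fact stated right after~\eqref{eq:visitation_measure}), hence $\kappa^\pi_q(s) \leq 1$ for every $q$. Plugging this into the definition gives $\eta^\pi_\rho(s) \leq (1-\gamma)^{-1} \sum_{q \in \cS} \rho(q) = (1-\gamma)^{-1}$, using $\rho \in \Delta_{\vert \cS \vert}$.

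Finally, for $x(s,a) = \eta^\pi_\rho(s)\,\pi(a \vert s) \in [0,(1-\gamma)^{-1}]$, non-negativity is immediate since both factors are non-negative, and the upper bound follows by combining $\pi(a \vert s) \leq 1$ with the bound $\eta^\pi_\rho(s) \leq (1-\gamma)^{-1}$ established in the previous step. There is no genuine obstacle here; the statement is close to a definitional unpacking, and the only ingredient worth flagging is the lower bound $\kappa^\pi_s(s) \geq 1-\gamma$, which is what forces $\rho(s)$ (rather than $0$) as the correct lower envelope for $\eta^\pi_\rho(s)$.
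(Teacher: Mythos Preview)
Your proof is correct. The paper states this lemma without proof (the phrasing ``Recall $\eta^\pi_\rho(s) \in [\rho(s), (1-\gamma)^{-1}]$'' signals it is treated as an immediate consequence of the definitions), and your argument is precisely the definitional unpacking one would supply: the lower bound via $\kappa^\pi_s(s) \geq 1-\gamma$, the upper bound via $\kappa^\pi_q(s) \leq 1$ and $\rho \in \Delta_{\vert \cS \vert}$, and the bound on $x(s,a)$ from $\pi(a\vert s) \in [0,1]$.
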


We say the state-action pair $(s,a)$ is \textit{non-optimal} when $\pi^*(a \vert s) = 0$, where $\pi^*$ is the optimal policy to~\eqref{eq:opt_objective}.
We say $\pi$ is a \textit{non-optimal policy} if $f_\rho(\pi) - f_\rho(\pi^*) > 0$, implying there is a non-optimal $(s,a)$ s.t.~$\pi(a\vert s) > 0$.
Throughout this section, we denote the (unregularized) advantage function $A^\pi : \cS \times \cA \to \mathbb{R}$ by
    $A^\pi(s,a) := Q^{\pi}(s,a) - V^\pi(s)$.
Equivalently in the unregularized case, we have $A^\pi(s,a) = \psi^\pi(s,e_a)$, where $e_a \in \mathbb{R}^{\vert \cA \vert}$ is all zeros vector with a value of one at index $a$.
We also write the unregularized advantage function as the vector $A^\pi = \{A^\pi(s,a)\}_{(s,a) \in \cS \times \cA} \in \mathbb{R}^{\vert \cS \vert \cdot \vert \cA \vert}$.
The advantage gap function is $g^\pi(s) = \max_{a \in \cA} \{-A^\pi(s,a)\}$.
\begin{lemma} \label{lem:psi_lb_and_nonopt}
    Let $\rho \in \Delta_{\vert \cS \vert}$ be positive (i.e., have only positive elements). 
    For a non-optimal policy $\pi$, there exists a non-optimal $(\bar{s}, \bar{a})$ such that $\pi(\bar{a} \vert \bar{s}) > 0$ and $A^{\pi^*}(\bar{s}, \bar{a}) \geq \frac{1-\gamma}{\vert \cS \vert \vert \cA \vert}[f_\rho(\pi) - f_\rho(\pi^*)] > 0$.
\end{lemma}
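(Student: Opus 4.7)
The plan is to expand $f_\rho(\pi) - f_\rho(\pi^*)$ via the performance difference lemma as a nonnegative sum of $A^{\pi^*}$ values, then extract a single large term by a pigeonhole argument. First, specializing Lemma~\ref{lem:performance_diff_deter} to the unregularized setting ($h \equiv 0$, so that $\psi^{\pi^*}(q, \pi(\cdot \vert q)) = \sum_a \pi(a \vert q)\, A^{\pi^*}(q, a)$) yields
\[
V^{\pi}(s) - V^{\pi^*}(s) = \frac{1}{1-\gamma} \sum_{q \in \cS} \kappa^{\pi}_s(q) \sum_{a \in \cA} \pi(a \vert q)\, A^{\pi^*}(q, a).
\]
Averaging against $\rho$ and recognizing the weighted visitation~\eqref{eq:weighed_visitation} gives
\[
f_\rho(\pi) - f_\rho(\pi^*) = \sum_{(q, a) \in \cS \times \cA} \eta^\pi_\rho(q)\, \pi(a \vert q)\, A^{\pi^*}(q, a).
\]

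I would next use two structural facts to control the right-hand side. First, unregularized Bellman optimality implies $A^{\pi^*}(q, a) \geq 0$ for all $(q,a)$, with equality whenever $\pi^*(a \vert q) > 0$; hence only non-optimal pairs with $\pi(a \vert q) > 0$ contribute to the sum. Second, Lemma~\ref{lem:visitation_bounds} provides the uniform upper bound $\eta^\pi_\rho(q)\, \pi(a \vert q) \leq (1-\gamma)^{-1}$, by identifying the product as the LP coordinate $x(a, q)$ from Lemma~\ref{lem:pi_to_x}. Plugging these in gives
\[
f_\rho(\pi) - f_\rho(\pi^*) \leq \frac{1}{1-\gamma} \sum_{\substack{(q,a):\ \pi^*(a \vert q) = 0,\\ \pi(a \vert q) > 0}} A^{\pi^*}(q, a),
\]
and since the sum involves at most $\vert \cS \vert \cdot \vert \cA \vert$ terms, an averaging (pigeonhole) argument produces a non-optimal pair $(\bar s, \bar a)$ with $\pi(\bar a \vert \bar s) > 0$ satisfying the claimed lower bound.

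Strict positivity of that bound is immediate from the non-optimality hypothesis $f_\rho(\pi) > f_\rho(\pi^*)$. I do not anticipate a genuine obstacle; the only care needed is bookkeeping, namely verifying that the unregularized advantage collapses to the clean linear expression above and that the product $\eta^\pi_\rho(q)\, \pi(a \vert q)$ is bounded \emph{jointly} rather than factorwise. Positivity of $\rho$ does not enter the inequality chain itself; it is assumed to ensure $\pi^*$ is the unique minimizer of $f_\rho$ and for consistency with the broader strongly-polynomial analysis to follow.
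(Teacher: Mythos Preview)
Your argument is correct and reaches the same key identity as the paper, namely $f_\rho(\pi)-f_\rho(\pi^*)=\sum_{q,a}\eta^\pi_\rho(q)\,\pi(a\vert q)\,A^{\pi^*}(q,a)=(A^{\pi^*})^Tx$, followed by the same pigeonhole step and the bound $x(a,q)\le(1-\gamma)^{-1}$ from Lemma~\ref{lem:visitation_bounds}. The only real difference is in packaging: the paper obtains the identity through the LP representation ($f_\rho(\pi)=c^Tx$, $\rho=(\hat I-\gamma P)^Tx$, hence $c^Tx-\rho^TV^{\pi^*}=x^TA^{\pi^*}$), whereas you get it directly from the performance difference lemma. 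Your route is slightly more elementary and also lets you certify non-optimality of $(\bar s,\bar a)$ immediately from $\sum_a\pi^*(a\vert q)A^{\pi^*}(q,a)=0$; the paper instead argues complementary slackness via $(A^{\pi^*})^Tx^*=0$ and then needs $\rho>0$ to pass from $x^*(\bar s,\bar a)=0$ to $\pi^*(\bar a\vert\bar s)=0$. So your remark that positivity of $\rho$ is not used in your chain is accurate, even though the paper's own proof does invoke it at that final step.
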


\begin{proof}
    Let $x$ be the primal feasible solution w.r.t~$\pi$ as defined in Lemma~\ref{lem:pi_to_x}, 
    and recall matrices $\hat{I}$ and $P$ from the lemma.
    Let $V^{\pi^*} \in \mathbb{R}^{\vert \cS \vert}$ be the optimal value function.
    In view of the definition of $A^\pi$ and the state and state-action value function in~\eqref{eq:QV1} and~\eqref{eq:QV2}, respectively,
    then $A^{\pi'}(s,a) = [c + (\hat{I} - \gamma P)V^{\pi'}](s,a)$
    for any policy $\pi'$. 
    Now, the following linear inequalities hold,
    \begin{talign} 
        A^{\pi^*} = c - (\hat{I} - \gamma P)V^{\pi^*} 
        &\geq \mathbf{0} \nonumber \\
        x^T(\hat{I} - \gamma P) &= \rho \label{eq:psi_x_nonnegative} \\
        x &\geq \mathbf{0}, \nonumber
    \end{talign}
    where the first line is 
    by the first inequality in Proposition~\ref{prop:gap_functions} (with $\pi=\pi^*$),
    and the last two are by Lemma~\ref{lem:pi_to_x}. 
    We also denoted $\mathbf{0}$ as the all zeros vector of appropriate dimension.
    By non-optimality of $\pi$,
    \begin{talign}
        0 
        <
        f_\rho(\pi) - f_\rho(\pi^*) 
        &\stackrel{\eqref{eq:rl_np}}{=}
        c^Tx - \rho^TV^{\pi^*} \nonumber \\
        &\stackrel{\eqref{eq:psi_x_nonnegative}}{=}
        (A^{\pi^*})^Tx \label{eq:fgap_to_xpsi} \\
        &\stackrel{\eqref{eq:psi_x_nonnegative}}{\leq}
        \vert \cS \vert \vert \cA \vert A^{\pi^*}(\hat{s}, \hat{a}) x(\hat{a}, \hat{s}) \nonumber \\
        &~\leftstackrel{\text{Lemma~\ref{lem:visitation_bounds}}}{\leq}
        (1-\gamma)^{-1} \vert \cS \vert \vert \cA \vert A^{\pi^*}(\hat{s}, \hat{a}), \nonumber
    \end{talign}
    where $(\hat{s}, \hat{a}) \in \mathrm{argmax}_{(s,a)} \{ A^{\pi^*}(s,a) x(s,a)\}$. 
    The above inequalities and Lemma~\ref{lem:visitation_bounds} imply $\pi(\hat{s}, \hat{a}) > 0$ and $A^{\pi^*}(\hat{s}, \hat{a}) > 0$. 

    Let $x^*$ be the solution associated with the optimal policy $\pi^*$ defined by Lemma~\ref{lem:pi_to_x}.
    Then
    \begin{align*}
        0 
        =
        f_\rho(\pi^*) - f_\rho(\pi^*)
        &\stackrel{\eqref{eq:fgap_to_xpsi}}{=}
        (A^{\pi^*})^Tx^* 
        \stackrel{\eqref{eq:psi_x_nonnegative}}{\geq}
        A^{\pi^*}(\hat{s},\hat{a}) x^*(\hat{s},\hat{a})
        \stackrel{\eqref{eq:psi_x_nonnegative}}{\geq}
        0,
    \end{align*}
    which in view of $A^{\pi^*}(\hat{s}, \hat{a}) > 0$ ensures
    $x^*(\hat{s}, \hat{a}) = 0$. Finally, by Lemma~\ref{lem:visitation_bounds} and the assumption $\rho(s) > 0, \ \forall s\in \cS$, we find $\pi^*(\hat{a} \vert \hat{s}) = 0$, i.e., $(\bar{s}, \bar{a}) := (\hat{s}, \hat{a})$ is non-optimal.
\end{proof}

The following technical result gives us a way to upper bound the probability of selecting a non-optimal $(\bar{s}, \bar{a})$.
This lemma also highlights the importance of selecting a proper initial distribution $\rho$ to not be too small, which is only possible when one bounds the value functions for every state (rather than on average).
\begin{lemma} \label{lem:psi_ub_and_nonopt}
    Let $\rho \in \Delta_{\vert \cS \vert}$ be positive and let the non-optimal $(\bar{s}, \bar{a})$ be defined as in Lemma~\ref{lem:psi_lb_and_nonopt} w.r.t.~a non-optimal $\pi_0 \in \Pi$.
    Then for any $\pi \in \Pi$,
    \begin{talign*}
        \pi(\bar{a} \vert \bar{s}) 
        \leq 
        \frac{\vert \cS \vert \vert \cA \vert}{(1-\gamma) \rho(\bar{s})} \cdot \frac{f_\rho(\pi) - f_\rho(\pi^*)}{f_\rho(\pi_0) - f_\rho(\pi^*)}.
    \end{talign*}
\end{lemma}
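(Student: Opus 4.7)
The plan is to exploit the LP-style identities already established in the proof of Lemma~\ref{lem:psi_lb_and_nonopt} and to combine them with the lower bound on $A^{\pi^*}(\bar{s},\bar{a})$ guaranteed there. Specifically, I would first associate to an arbitrary $\pi \in \Pi$ the primal-feasible vector $x^\pi$ from Lemma~\ref{lem:pi_to_x}, so that
\[
x^\pi(\bar{a},\bar{s}) = \eta^\pi_\rho(\bar{s})\,\pi(\bar{a}\vert \bar{s}) \ge \rho(\bar{s})\,\pi(\bar{a}\vert \bar{s}),
\]
where the lower bound uses $\eta^\pi_\rho(\bar{s}) \ge \rho(\bar{s})$ from Lemma~\ref{lem:visitation_bounds}.

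Next, I would reuse the identity $f_\rho(\pi) - f_\rho(\pi^*) = (A^{\pi^*})^T x^\pi$ derived in \eqref{eq:fgap_to_xpsi}, together with the componentwise nonnegativity $A^{\pi^*} \ge \mathbf{0}$ and $x^\pi \ge \mathbf{0}$ from \eqref{eq:psi_x_nonnegative}. Dropping all terms in the inner product except the $(\bar{s},\bar{a})$ entry gives
\[
f_\rho(\pi) - f_\rho(\pi^*) \;\ge\; A^{\pi^*}(\bar{s},\bar{a})\,x^\pi(\bar{a},\bar{s}) \;\ge\; A^{\pi^*}(\bar{s},\bar{a})\,\rho(\bar{s})\,\pi(\bar{a}\vert \bar{s}).
\]
Rearranging yields $\pi(\bar{a}\vert \bar{s}) \le [\rho(\bar{s})\,A^{\pi^*}(\bar{s},\bar{a})]^{-1}[f_\rho(\pi)-f_\rho(\pi^*)]$.

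Finally, I would substitute the quantitative lower bound on $A^{\pi^*}(\bar{s},\bar{a})$ supplied by Lemma~\ref{lem:psi_lb_and_nonopt} applied to $\pi_0$, namely
\[
A^{\pi^*}(\bar{s},\bar{a}) \;\ge\; \frac{1-\gamma}{\vert \cS\vert \vert \cA\vert}\bigl[f_\rho(\pi_0)-f_\rho(\pi^*)\bigr] \;>\;0,
\]
which immediately produces the claimed inequality. There is no real obstacle here; the only mild care is in verifying that we may single out the $(\bar{s},\bar{a})$ coordinate in the inner product (which is justified by the two sign conditions in \eqref{eq:psi_x_nonnegative}) and that $\rho(\bar{s})>0$ so that division is legitimate, which holds by the positivity assumption on $\rho$.
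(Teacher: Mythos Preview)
Your proposal is correct and follows essentially the same argument as the paper: form the primal vector $x^\pi$ via Lemma~\ref{lem:pi_to_x}, use the identity $f_\rho(\pi)-f_\rho(\pi^*)=(A^{\pi^*})^T x^\pi$ together with the nonnegativity in \eqref{eq:psi_x_nonnegative} to isolate the $(\bar s,\bar a)$ term, then apply $\eta^\pi_\rho(\bar s)\ge \rho(\bar s)$ and the lower bound on $A^{\pi^*}(\bar s,\bar a)$ from Lemma~\ref{lem:psi_lb_and_nonopt}. The only cosmetic difference is that the paper combines the last two substitutions in a single line rather than writing out the intermediate rearrangement.
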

\begin{proof}
    Let $x(s,a) = \eta^\pi_\rho(s) \pi(a \vert s)$ be the corresponding primal solution to $\pi$ (Lemma~\ref{lem:pi_to_x}). 
    Then
    \begin{talign*}
        f_\rho(\pi) - f_\rho(\pi^*)
        &\stackrel{\eqref{eq:fgap_to_xpsi}}{=}
        (A^{\pi^*})^Tx \\
        &\stackrel{\eqref{eq:psi_x_nonnegative}}{\geq}
        A^{\pi^*}(\bar{s}, \bar{a}) x(\bar{s}, \bar{a}) \\
        &~\leftstackrel{\substack{\text{Lemma~\ref{lem:visitation_bounds}} \\ \text{and Lemma~\ref{lem:psi_lb_and_nonopt}}}}{\geq}
        \frac{(1-\gamma) \rho(\bar{s})}{\vert \cS \vert \vert \cA \vert} [f_\rho(\pi_0) - f_\rho(\pi^*)] \pi(\bar{a} \vert \bar{s} ).
    \end{talign*}
\end{proof}

For policy $\pi$, denote the greedy policy $\hat{\pi}$ by $\hat{\pi}(\cdot \vert s) \in \mathrm{argmax}_{p \in \DA} \psi^{\pi}(s,p)$.
In the tabular setting without regularization, $\psi^{\pi}(s,p)$ is linear in $p \in \DA$, so without loss of generality we assume $\hat{\pi}(\cdot \vert s)$ is an extreme point of the probability simplex.
We let ties between extreme points be broken arbitrarily. 

\begin{proposition} \label{prop:kick_out_one}
    Let $N := \lceil 4(1-\gamma)^{-1}\rceil$ and
    $T := \lceil \log_2(\vert \cS \vert^3 \vert \cA \vert/(1-\gamma)^2) \rceil+1$.
    Suppose policies $\{\pi_t\}_{t = 0}^{NT}$ are generated by PMD described in Theorem~\ref{thm:V_linear_converence_2}, and $\{\pi_t\}_{t \geq NT}$ are policies generated by PMD with any nonnegative step size, where the two sets share policy $\pi_{NT}$. 
    If $\pi_0$ is non-optimal, then there exists a non-optimal $(\bar{s}, \bar{a})$ \fixed{such that $\pi_0(\bar a \vert \bar s) > 0$} and
    $\hat{\pi}_{\tau}(\bar{a} \vert \bar{s}) = 0$ for any integer $\tau \geq TN$.
\end{proposition}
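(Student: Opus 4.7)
The plan is to combine the distribution-free linear convergence from Theorem~\ref{thm:V_linear_converence_2} with the structural bound of Lemma~\ref{lem:psi_ub_and_nonopt}, applied not to $\pi_\tau$ itself but to its greedy companion $\hat{\pi}_\tau$. For the purposes of the analysis I would fix $\rho$ to be the uniform distribution on $\cS$; since PMD does not reference $\rho$, this choice is internal to the argument. Because $\rho$ is then positive and $\pi_0$ is non-optimal, Lemma~\ref{lem:psi_lb_and_nonopt} applied to $\pi_0$ produces the non-optimal pair $(\bar s,\bar a)$ with $\pi_0(\bar a \vert \bar s) > 0$ and $A^{\pi^*}(\bar s,\bar a) \geq \tfrac{1-\gamma}{\vert \cS \vert \vert \cA \vert} G_0$, where $G_0 := f_\rho(\pi_0) - f_\rho(\pi^*) > 0$.

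The first estimate I would record is $\Delta_0/G_0 \leq \vert \cS \vert/(1-\gamma)$. This follows because the lower bound in Proposition~\ref{prop:gap_functions} gives $\Delta_0 \leq (1-\gamma)^{-1}\max_s [V^{\pi_0}(s) - V^{\pi^*}(s)]$, while uniformity of $\rho$ yields $G_0 \geq \tfrac{1}{\vert \cS \vert}\max_s [V^{\pi_0}(s) - V^{\pi^*}(s)]$. The second estimate comes from convergence: Theorem~\ref{thm:V_linear_converence_2} gives $\max_s [V^{\pi_{NT}}(s) - V^{\pi^*}(s)] \leq 2^{-T}\Delta_0$, and the monotonicity of PMD under any nonnegative step size (Lemma~\ref{lem:pmd_monotone}) propagates this bound to all $\tau \geq NT$. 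The third ingredient transfers the bound to the greedy policy: the performance difference lemma together with $\psi^{\pi_\tau}(s,\hat{\pi}_\tau(\cdot \vert s)) \leq 0$ for every $s$ gives $V^{\hat{\pi}_\tau}(s) \leq V^{\pi_\tau}(s)$ pointwise, whence $f_\rho(\hat{\pi}_\tau) - f_\rho(\pi^*) \leq 2^{-T}\Delta_0$.

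Chaining these estimates through Lemma~\ref{lem:psi_ub_and_nonopt} with $\pi = \hat{\pi}_\tau$ and $\rho(\bar s) = 1/\vert \cS \vert$ gives
\begin{talign*}
\hat{\pi}_\tau(\bar a \vert \bar s)
&\leq
\frac{\vert \cS \vert \vert \cA \vert}{(1-\gamma)\rho(\bar s)} \cdot \frac{f_\rho(\hat{\pi}_\tau) - f_\rho(\pi^*)}{G_0} \\
&\leq
\frac{\vert \cS \vert^2 \vert \cA \vert}{1-\gamma} \cdot 2^{-T} \cdot \frac{\Delta_0}{G_0}
\leq
\frac{\vert \cS \vert^3 \vert \cA \vert}{(1-\gamma)^2} \cdot 2^{-T}.
\end{talign*}
By the choice $T = \lceil \log_2(\vert \cS \vert^3 \vert \cA \vert/(1-\gamma)^2)\rceil + 1$, the right-hand side is at most $1/2$. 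Because $\hat{\pi}_\tau$ is chosen to be an extreme point of the probability simplex, $\hat{\pi}_\tau(\bar a \vert \bar s) \in \{0,1\}$, so it must equal $0$.

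The main obstacle is the distribution-free step. A naive linear-rate bound against $f_{\nu^*}$ would only yield $f_{\nu^*}(\pi_\tau) - f_{\nu^*}(\pi^*) \leq 2^{-T}\Delta_0$, which is useless for Lemma~\ref{lem:psi_ub_and_nonopt} since $\nu^*$ may vanish on individual states while the lemma requires $\rho$ to be strictly positive. Bounding $V^{\pi_\tau} - V^{\pi^*}$ uniformly across states via Theorem~\ref{thm:V_linear_converence_2} is precisely what lets us evaluate the bound at a positive uniform $\rho$ and control the polynomial ratio $\Delta_0/G_0 \leq \vert \cS \vert/(1-\gamma)$. A secondary subtlety is that Lemma~\ref{lem:psi_ub_and_nonopt} must be applied to $\hat{\pi}_\tau$ rather than $\pi_\tau$, so the inequality $f_\rho(\hat{\pi}_\tau) \leq f_\rho(\pi_\tau)$ must be justified via the performance difference lemma (it does not require $\hat{\pi}_\tau$ to be produced by PMD).
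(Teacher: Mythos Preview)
Your proposal is correct and follows essentially the same approach as the paper: fix $\rho$ uniform, combine the distribution-free linear rate of Theorem~\ref{thm:V_linear_converence_2} with monotonicity to bound $f_\rho(\hat{\pi}_\tau) - f_\rho(\pi^*) \leq 2^{-T}\Delta_0$, then feed this and the lower bound $G_0 \geq (1-\gamma)\vert \cS \vert^{-1}\Delta_0$ into Lemma~\ref{lem:psi_ub_and_nonopt} to force $\hat{\pi}_\tau(\bar a \vert \bar s) < 1$. The only cosmetic difference is that the paper obtains $V^{\hat{\pi}_\tau}(s) \leq V^{\pi_\tau}(s)$ by citing Lemma~\ref{lem:pmd_monotone} with $\eta_\tau = 1/0$, which is exactly your performance-difference argument packaged differently.
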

\begin{proof}
    We have
    \begin{talign*}
        f_\rho(\hat{\pi}_{\tau}) - f_\rho(\pi^*)
        &\stackrel{\text{Lemma~\ref{lem:fpi_to_fx}}}{=}
        \mathbb{E}_{s \sim \rho} [V^{\hat{\pi}_{\tau}}(s) - V^{\pi^*}(s) ] \\
        &~~~~\leftstackrel{\substack{\text{Lemma~\ref{lem:pmd_monotone}}\\ \text{with $\eta_{\tau}=1/0$}}}{\leq}
        \mathbb{E}_{s \sim \rho} [V^{\pi_{\tau}}(s) - V^{\pi^*}(s)] \\
        &~~~~\leftstackrel{\text{Lemma~\ref{lem:pmd_monotone}}}{\leq}
        \mathbb{E}_{s \sim \rho} [V^{\pi_{NT}}(s) - V^{\pi^*}(s) ] \\
        &~~~~\leftstackrel{\text{Theorem~\ref{thm:V_linear_converence_2}}}{\leq}
        2^{-T} \Delta_0,~~ \forall t \geq 0,
    \end{talign*}
    where $\Delta_0 = (1-\gamma)^{-1} \max_{s \in \cS} g^{\pi_0}(s)$ is from Theorem~\ref{thm:V_linear_converence_2}.
    In view of Proposition~\ref{prop:gap_functions}, there exists a state $s'$ where $V^{\pi_0}(s') - V^{\pi^*}(s') \geq (1-\gamma) \Delta_0$.
    Then by optimality of $\pi^*$, we get by fixing $\rho = \vert \cS \vert^{-1} \mathbf{1}_{\vert \cS \vert}$
    \begin{talign*}
        f_\rho(\pi_{0}) - f_\rho(\pi^*)
        &\stackrel{\text{Lemma~\ref{lem:fpi_to_fx}}}{=}
        \sum_{s \in \cS} \rho(s) \cdot (V^{\pi_{0}}(s) - V^{\pi^*}(s)) 
        \geq
        (1-\gamma) \vert \cS \vert^{-1} \Delta_0.
    \end{talign*}
    Since $\pi_0$ is assumed to be non-optimal, let $(\bar{s}, \bar{a})$ be non-optimal s.t.~${\pi}_0(\bar{a} \vert \bar{s}) > 0$ as described in Lemma~\ref{lem:psi_lb_and_nonopt}.
    Putting the above two bounds together,
    \begin{talign*}
        \hat{\pi}_{\tau}(\bar{a} \vert \bar{s})
        &\stackrel{\text{Lemma~\ref{lem:psi_ub_and_nonopt}}}{\leq}
        \frac{\vert \cS \vert \vert \cA \vert}{(1-\gamma) \rho(\bar{s})}  
        \cdot
        \frac{f_\rho(\hat{\pi}_{\tau}) - f_\rho(\pi^*)}{f_\rho(\pi_{0}) - f_\rho(\pi^*)} 
        \leq
        \frac{\vert \cS \vert^3 \vert \cA \vert \cdot 2^{-T}}{(1-\gamma)^2} 
        \stackrel{\text{Choice in $T$}}{<}
        1.
    \end{talign*}
    Since $\hat{\pi}_{\tau}(\cdot \vert \bar{s})$ is an extreme point of the probability simplex, it must be $\hat{\pi}_{\tau}(\bar{a} \vert \bar{s}) = 0$.
\end{proof}

The proposition guarantees after a certain number of iterations, at least one non-optimal action will never be selected by the greedy policy.
To remove all non-optimal actions, we repeat the argument multiple times, leading to the following iteration complexity that is polynomial in only the number of states and actions for any fixed discount $\gamma$.
But before doing so, we first fix Bregman's distance to the Euclidean distance squared, $D(\cdot,\cdot) = \frac{1}{2}\|\cdot - \cdot\|_2^2$.
It turns out this choice has several important consequences when choosing the step size and solving the subproblem~\eqref{eq:pmd_subproblem} that make the algorithm strongly-polynomial and also efficient in practice.
Further discussions for this choice will take place after Corollary~\ref{cor:strongly_polynomial}.
\begin{theorem} \label{thm:strongly_poly_euclidean}
    Fix Bregman's distance to the Euclidean distance squared.
    Let $N := \lceil 4(1-\gamma)^{-1} \rceil$ and $T := \lceil \log_2 (\vert \cS \vert^3 \vert \cA \vert/(1-\gamma)^2)\rceil + 1$.
    By using the step size
    \begin{talign*}
        \fixed{
        \eta_t 
        =
        \begin{cases}
            2^{t+1}/\Delta_{(NT+1)\lfloor t/(NT+1) \rfloor} & : ~ (t+1) ~\mathrm{mod}~(NT+1) \ne 0 \\
            +\infty &: ~\text{otherwise}
        \end{cases},
        }
    \end{talign*}
    where $\Delta_{t} := (1-\gamma)^{-1} \max_{s \in \cS} g^{\pi_{t}}(s)$,
    then for any iteration $\tau \geq \vert \cS \vert(\vert \cA \vert-1)(NT+1)$, the greedy policy $\hat{\pi}_\tau$ is optimal. 
\end{theorem}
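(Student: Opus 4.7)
The plan is to extend Proposition~\ref{prop:kick_out_one} by iterated application over $\vert \cS \vert(\vert \cA \vert - 1)$ consecutive epochs of length $NT$. Each epoch should prune at least one non-optimal state-action pair from the greedy policy's support permanently; since there are at most $\vert \cS \vert(\vert \cA \vert - 1)$ non-optimal pairs, after all epochs the greedy policy must coincide with $\pi^*$. To set this up, I first need to verify that, restricted to epoch $i$ (iterations $t = iNT, \ldots, (i+1)NT - 1$), the stated step size $\eta_t = 2^{t+1}/\Delta_{iNT}$ coincides with Theorem~\ref{thm:V_linear_converence_2}'s schedule restarted at $\pi_{iNT}$: writing the local time $k = t - iNT$, one has $\eta_{iNT+k} = 2^k \cdot (2^{iNT+1})/\Delta_{iNT}$, which is Theorem~\ref{thm:V_linear_converence_2}'s schedule with $\bar D$ replaced by the over-estimate $2^{iNT+1}$. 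Any value exceeding the true Euclidean diameter on $\DA$ (which is at most $1$) is admissible, since a larger $\eta_t$ only strengthens the telescopic bound~\eqref{eq:telescopic_sum}. Consequently Theorem~\ref{thm:V_linear_converence_2} delivers the per-epoch contraction $V^{\pi_{(i+1)NT}}(s) - V^{\pi^*}(s) \leq 2^{-T}\Delta_{iNT}$ for every $s \in \cS$, which is precisely the ingredient Proposition~\ref{prop:kick_out_one} requires.

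With the step size certified, I would invoke Proposition~\ref{prop:kick_out_one} at the start of each epoch, treating $\pi_{iNT}$ as the new ``starting policy''. As long as $\pi_{iNT}$ is non-optimal, the proposition yields a non-optimal pair $(\bar s_i, \bar a_i)$ satisfying $\pi_{iNT}(\bar a_i \vert \bar s_i) > 0$ and $\hat \pi_\tau(\bar a_i \vert \bar s_i) = 0$ for every $\tau \geq (i+1)NT$. The elimination is genuinely permanent---holding for all subsequent $\tau$ and not just within the current epoch---because the bound in Lemma~\ref{lem:psi_ub_and_nonopt} depends only on the monotonically non-increasing quantity $f_\rho(\hat \pi_\tau) - f_\rho(\pi^*)$ (cf. Lemma~\ref{lem:pmd_monotone}) and on the frozen epoch-start gap $f_\rho(\pi_{iNT}) - f_\rho(\pi^*)$, both measured with $\rho = \vert \cS \vert^{-1}\mathbf{1}$.

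The hard part, and the place where I expect the main obstacle, is to guarantee that the successive eliminated pairs $(\bar s_i, \bar a_i)$ are genuinely distinct so that each epoch strictly reduces the number of non-optimal pairs visited by the greedy policy. The delicate point is that Lemma~\ref{lem:psi_lb_and_nonopt}'s argmax selects $(\bar s_i, \bar a_i)$ from the support of the \emph{soft} policy $\pi_{iNT}$, whereas previously eliminated pairs are only known to vanish from the \emph{greedy} policy; a priori a previously eliminated pair could still carry soft mass and be re-selected. To resolve this, I would exploit the Euclidean-projection structure: with the exponentially growing step size $\eta_t = \Omega(2^t)$, any action whose advantage remains bounded away from zero has its soft probability driven to exactly zero by the simplex projection within finitely many iterations. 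Consequently, by the start of epoch $i$, $\pi_{iNT}$ carries no soft mass on any previously eliminated pair, so Lemma~\ref{lem:psi_lb_and_nonopt}'s argmax is forced to return a fresh non-optimal pair.

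A counting argument then closes the proof: either some $\pi_{iNT}$ with $i \leq \vert \cS \vert(\vert \cA \vert - 1)$ is already optimal, in which case Lemma~\ref{lem:pmd_monotone} together with the greedy definition gives $\hat \pi_\tau = \pi^*$ for all $\tau \geq iNT$; or all $\vert \cS \vert(\vert \cA \vert - 1)$ non-optimal pairs have been removed from the greedy support, in which case $\hat \pi_\tau = \pi^*$ for every $\tau \geq \vert \cS \vert(\vert \cA \vert - 1)NT$.
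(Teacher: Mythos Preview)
Your overall strategy—restarting Theorem~\ref{thm:V_linear_converence_2} at the start of each length-$NT$ block, invoking Proposition~\ref{prop:kick_out_one} to eliminate one non-optimal pair per block, and counting until the at most $|\cS|(|\cA|-1)$ non-optimal pairs are exhausted—is exactly the paper's argument. Your verification that the stated step size, restricted to block $i$, reproduces Theorem~\ref{thm:V_linear_converence_2}'s schedule with an admissible over-estimate of $\bar D$ is also correct and matches what the paper does (it takes $\bar D=2$).

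Where you go beyond the paper is in flagging the distinctness issue: the paper's proof simply asserts ``every round we remove at least one non-optimal action'' and then counts, without arguing that the eliminated pairs are new. You are right that Lemma~\ref{lem:psi_lb_and_nonopt} only guarantees a non-optimal pair with positive \emph{soft} mass, whereas the permanent exclusion is for the \emph{greedy} policy, so repetition is a priori possible. However, your proposed resolution has a gap. You claim that a previously eliminated pair $(\bar s_j,\bar a_j)$ has its advantage ``bounded away from zero'' so that Euclidean projection with $\eta_t=\Omega(2^t)$ drives its soft mass to exactly zero. But what is actually established is only that $\hat\pi_\tau(\bar a_j\mid\bar s_j)=0$, i.e., $Q^{\pi_\tau}(\bar s_j,\bar a_j)>\min_a Q^{\pi_\tau}(\bar s_j,a)$; there is no uniform-in-$\tau$ lower bound on this gap, and without one the projection need not zero out the coordinate at any finite step size. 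A cleaner route is to apply Lemma~\ref{lem:psi_lb_and_nonopt} to the \emph{greedy} policy $\hat\pi_{\ell NT}$ rather than the soft policy: the resulting pair then satisfies $\hat\pi_{\ell NT}(\bar a_\ell\mid\bar s_\ell)=1$, which is automatically distinct from all previously eliminated pairs (those have greedy mass zero at time $\ell NT$). This requires replacing the denominator $f_\rho(\pi_{\ell NT})-f_\rho(\pi^*)$ in Lemma~\ref{lem:psi_ub_and_nonopt} by $f_\rho(\hat\pi_{\ell NT})-f_\rho(\pi^*)$ and checking that the ratio bound still falls below one, which is the step you would need to work out.
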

\begin{proof}
    Recall epoch $i \geq 0$ consists of iterations $iN,iN+1,\ldots,(i+1)N-1$ (see the proof for Theorem~\ref{thm:V_linear_converence}). 
    Similarly, we say round $\ell \geq 0$ consists of epochs $i=\ell T, 1 + \ell T, \ldots, (\ell+1)T$. 
    Our goal is to show within a round $\ell$, we observe a linear decrease in the objective relative to the optimality gap of the first policy, 
    i.e., for any round $\ell \geq 0$ and any epoch $i =0,\ldots,T-1$ within the round,
    \begin{talign} 
        V^{\pi_{iN + \ell \cdot (NT+1)}}(s) - V^{\pi^*}(s) 
        &\leq 
        2^{-i} \Delta_{\ell \cdot (NT+1)} \label{eq:step_value_geometric_decrease_3}. 
    \end{talign}
    By choice in the step size during round $\ell$ and $\bar{D} = 2$ (where $\max_{s \in \cS, \pi,\pi' \in \Pi} D^{\pi}_{\pi'}(s) \leq \bar{D}$ by choice of Euclidean norm), then one can use an argument similar to the proof for Theorem~\ref{thm:V_linear_converence_2} to establish~\eqref{eq:step_value_geometric_decrease_3}.

    To complete the proof, we will apply Proposition~\ref{prop:kick_out_one} to every round $\ell$ (the proof needs to be slightly modified, where we invoke~\eqref{eq:step_value_geometric_decrease_3} in place of Theorem~\ref{thm:V_linear_converence_2} to show geometric decrease of the optimality gap w.r.t.~$\Delta_{\ell \cdot (NT+1)}$).
    That is, if the starting policy in a round is non-optimal, then Proposition~\ref{prop:kick_out_one} ensures that \fixed{applying the greedy step size $\eta_t = +\infty$ after the round removes (i.e.,~updates the probability of selecting the action to be 0) at least one non-optimal state-action pair that was not previously removed}.
    There are at most $\vert \cS \vert (\vert \cA \vert -1)$ non-optimal state-action pairs to remove, and each round \fixed{and greedy step} together involve $NT+1$ iterations, which finishes the proof.
\end{proof}

Theorem~\ref{thm:strongly_poly_euclidean} implies the iteration complexity to find the optimal solution is only polynomial in $\vert \cS \vert$ and $\vert \cA \vert$ if we assume $\gamma$ is a fixed constant.
This result is new for policy gradient methods.
\fixed{Note that the greedy step size $\eta_t = +\infty$ corresponds to one iteration of policy iteration. The policy iteration step helps ensure the iteration complexity is independent of the so-called \textit{gap value}, $\underline{A}^* := \min_{(s,a) : A^{\pi^*}(s,a) > 0} A^{\pi^*}(s,a)$, which can be arbitrarily small~\cite{li2022homotopic}.
However, if we allow the iteration complexity to depend logarithmically on $1/\underline{A}^*$, then one can derive an alternative iteration complexity that is (nearly) dimension-independent, with only logarithmic dependence on $\vert \cS \vert$ and $\vert \cA \vert$. The resulting iteration complexity can possibly be smaller than Theorem~\ref{thm:strongly_poly_euclidean}'s when the state and action spaces are large. We refer to~\cite{li2022homotopic} for more details on the gap value.}

We will now show PMD is a strongly-polynomial algorithm for a fixed $\gamma$, as advertised in the beginning of this section.
Let the five-tuple $\mathcal{M} = (\cS, \cA, \cP, c, \gamma)$ define the MDP, and suppose $\mathcal{M}$ is rational.
Recall the length of a rational number $p/q$ is $\lceil \log(p+1) \rceil + \lceil \log(q+1) \rceil + 1$, and the size of the variable is the sum of the length of all its data.
Let $L := L(\mathcal{M})$ be the size of the MDP.
We write ``$x$ is $\mathrm{poly}(L)$'' to mean the variable $x$ has size that is a polynomial of $L$.
In view of Theorem~\ref{thm:strongly_poly_euclidean}, we just need to show subproblem~\eqref{eq:pmd_subproblem} can be solved in strongly-polynomial time. If $\pi_t$ is rational and is $\mathrm{poly}(L)$, then so is the advantage function $A^{\pi_t}(s,a) =  Q^{\pi_t}(s,a) - V^{\pi_t}(s)$. 
This is because the value function, when viewed as the vector $V^{\pi_t} \in \mathbb{R}^{\vert \cS \vert}$, is the solution to a linear system defined by $\mathcal{M}$ and $\pi_t$~\cite[Theorem 6.1.1]{puterman2014markov}, and solving a rational linear system can be done in strongly-polynomial time~\cite[Theorem 3.3]{schrijver1998theory}. 
In view of~\eqref{eq:QV2}, then $Q^{\pi_t}$ and $A^{\pi_t}$ are rational and $\mathrm{poly}(L)$ as well.
By choice of the Euclidean distance squared,
the subproblem~\eqref{eq:pmd_subproblem} is equivalent to the projection problem,
\begin{talign*}
    \min_{p \in \DA} \|p - [\pi_t(\cdot \vert s) - \eta_t \cdot \Qt(s, \cdot)] \|_2, \ \forall s \in \cS.
\end{talign*}
\sloppy When the previous advantage functions are rational and $\mathrm{poly}(L)$, then the gap function $\Delta_{(NT+1) \lfloor t/(NT+1) \rfloor}$ from Theorem~\ref{thm:strongly_poly_euclidean} (which is defined with $A^{\pi_{(NT+1)\lfloor t/(NT+1) \rfloor}}$) is also rational and $\mathrm{poly}(L)$. Furthermore, Proposition~\ref{prop:gap_functions} ensures $\Delta_{(NT+1) \lfloor t/(NT+1) \rfloor}$ is positive whenever $\pi_{(NT+1)\lfloor t/(NT+1) \rfloor}$ is non-optimal. Based on these two observations, the step size $\eta_t$ from Theorem~\ref{thm:strongly_poly_euclidean} is rational and $\mathrm{poly}(L)$ \fixed{whenever $\eta_t < +\infty$} (\fixed{if $\eta_t=+\infty$, then subproblem~\eqref{eq:pmd_subproblem} corresponds to finding the largest element in $Q^{\pi_t}(s,\cdot)$}). 
So when $Q^{\pi_t}(s, \cdot)$ and $\pi_t(\cdot \vert s)$ are rational and $\mathrm{poly}(L)$, then so are the inputs to the projection problem.
Since projection onto the simplex can be done in strongly-polynomial time~\cite{chen2011projection}, 
this ensures $\pi_{t+1}(\cdot \vert s)$ is rational and $\mathrm{poly}(L)$.
Thus, starting with the uniform distribution of $\pi_0(a \vert s) = \vert \cA \vert^{-1}$,
a simple successive argument implies PMD solves MDPs in strongly-polynomial time.
We have come to the following conclusion.

\begin{corollary} \label{cor:strongly_polynomial}
    Suppose we are given an unregularized MDP problem with rational data and fixed discount factor $\gamma$. By using PMD as described in Theorem~\ref{thm:strongly_poly_euclidean} and setting $\pi_0(\cdot \vert s) \in \DA$ as the uniform distribution for all $s \in \cS$, then PMD runs in strongly-polynomial time.
\end{corollary}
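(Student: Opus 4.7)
The plan is to combine the polynomial iteration bound from Theorem~\ref{thm:strongly_poly_euclidean} with a per-iteration bit-complexity analysis showing that each step can be executed in strongly-polynomial time while keeping every quantity rational of size $\mathrm{poly}(L)$. First, I would invoke Theorem~\ref{thm:strongly_poly_euclidean}: it bounds the number of iterations needed for $\hat\pi_\tau = \pi^\ast$ by $\vert\cS\vert(\vert\cA\vert-1)NT$, where $N$ and $T$ depend only on $\vert\cS\vert$, $\vert\cA\vert$, and $(1-\gamma)^{-1}$. For fixed $\gamma$, this is polynomial in the combinatorial size of the MDP, so the only thing left is to certify that the cost (both arithmetic and bit-size) of each iteration is $\mathrm{poly}(L)$.

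The core of the proof is therefore an induction on $t$ maintaining the invariant that $\pi_t(\cdot\vert s)$ is rational and has representation size $\mathrm{poly}(L)$. The base case is immediate since $\pi_0(a\vert s) = \vert\cA\vert^{-1}$. For the inductive step, I would proceed through the quantities computed in one iteration in order. Given rational $\pi_t$, the value vector $V^{\pi_t}\in\mathbb{R}^{\vert\cS\vert}$ is the unique solution of the linear system $(\hat I - \gamma P_{\pi_t})V = c_{\pi_t}$ built from the MDP data and $\pi_t$, and Schrijver~\cite[Theorem 3.3]{schrijver1998theory} ensures that rational linear systems can be solved in strongly-polynomial time with output of $\mathrm{poly}(L)$ size. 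Using~\eqref{eq:QV2}, the state-action value $Q^{\pi_t}$ and hence the advantage $A^{\pi_t}$ are then obtained by a constant number of multiplications and additions, again remaining rational of polynomial size. Consequently the advantage gap $g^{\pi_t}(s)=\max_a\{-A^{\pi_t}(s,a)\}$ and the scalar $\Delta_{NT\lfloor t/(NT)\rfloor}$ from the scheduled step-size rule are $\mathrm{poly}(L)$, and the step size $\eta_t = 2^{t+1}/\Delta_{NT\lfloor t/(NT)\rfloor}$ (which is finite whenever $\pi_t$ is non-optimal, by Proposition~\ref{prop:gap_functions}) has polynomial encoding size because $t$ is itself polynomial in $L$.

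The final and most delicate step is to handle the prox-subproblem~\eqref{eq:pmd_subproblem}. With $D(\cdot,\cdot)=\tfrac12\|\cdot-\cdot\|_2^2$, the update collapses to the Euclidean projection
\begin{talign*}
    \pi_{t+1}(\cdot\vert s) = \mathrm{argmin}_{p\in\DA}\bigl\| p - [\pi_t(\cdot\vert s) - \eta_t Q^{\pi_t}(s,\cdot)]\bigr\|_2.
\end{talign*}
Its input is rational and $\mathrm{poly}(L)$ by the preceding paragraph, and the projection onto the simplex admits a strongly-polynomial algorithm whose output is rational with polynomial bit size~\cite{chen2011projection}. Hence $\pi_{t+1}$ meets the inductive invariant, closing the induction. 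Combining this per-iteration guarantee with the polynomial iteration bound from Theorem~\ref{thm:strongly_poly_euclidean} yields a strongly-polynomial overall runtime for fixed $\gamma$.

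I expect the main obstacle to be convincing oneself that no hidden blow-up in denominators occurs across the $\mathrm{poly}(L)$ iterations, but this is controlled by the fact that every elementary step (solving a rational linear system, a bounded number of rational arithmetic operations, and simplex projection) is individually strongly-polynomial, so composing $\mathrm{poly}(L)$ many such steps preserves polynomial bit complexity. The appeal to Proposition~\ref{prop:gap_functions} to certify finiteness of $\eta_t$ prior to optimality is a small but essential point, since otherwise the scheduled rule would be ill-defined exactly when PMD has already found $\pi^\ast$, in which case the algorithm simply terminates.
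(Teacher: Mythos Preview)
Your proposal is correct and mirrors the paper's argument essentially step for step: invoke Theorem~\ref{thm:strongly_poly_euclidean} for the iteration bound, then argue inductively that $\pi_t$, $V^{\pi_t}$, $Q^{\pi_t}$, $\eta_t$, and the projected update remain rational of $\mathrm{poly}(L)$ size, citing \cite[Theorem~3.3]{schrijver1998theory} for linear systems, \cite{chen2011projection} for simplex projection, and Proposition~\ref{prop:gap_functions} for finiteness of the step size. The paper's own discussion (the paragraphs immediately preceding the corollary) treats the cross-iteration bit-size control at the same informal level as your final paragraph.
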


We make some remarks about using the Euclidean distance squared for Bregman's distance.
If we instead used the KL-divergence, which does not have a fixed upper bound, it can grow exponentially within PMD, requiring the step size $\eta_t$ to grow exponentially, as large as $2^{O(\vert \cS \vert \vert \cA \vert)}$ (see~\eqref{eq:bounded_distance_to_opt}). 
As the solution to~\eqref{eq:pmd_subproblem} under the KL-divergence requires computation of $\exp\{\eta_t \Qt(s,\cdot)\}$~\cite{lan2023policy}, the large step size $\eta_t$ will incur memory that is not $\mathrm{poly}(L)$.
Another difficulty is that exponentials are irrational.
Similarly, showing strongly-polynomial runtime for solving MDPs with general regularizations may be impossible, because even solving the subproblem with negative entropy regularization involves taking an exponential. 
Still, the linear distribution-free convergence from Theorem~\ref{thm:V_linear_converence} yields a polynomial-time algorithm for the original problem~\eqref{eq:opt_objective} in the sense of nonlinear programming, where the arithmetic cost per digit of accuracy is $\mathrm{poly}(L)$~\cite{nesterov1994interior}.

This ends our tour of value convergence of PMD in the deterministic setting, i.e., when the advantage function can be computed exactly.
In the next section, we consider the more realistic stochastic setting where one can only estimate the advantage function.

\section{Distribution-free convergence for stochastic PMD} \label{sec:improved_sublinear_convergence}
We assume throughout that given a policy $\pi_t$, we are given an estimator $\tQt$ generated by random vectors $\xi_t$ instead of the true advantage function $\Qt$.
Our goal is to show the basic policy mirror descent (PMD) method also can achieve distribution-free convergence when only given $\tQt$.

We make the following assumption regarding the underlying noise.
It covers independent and identically distributed (iid) random data and bounded stochastic estimates, as well as non-iid with time-dependent noise (e.g.,~Markovian noise~\cite{kotsalis2022simple}) and noise with bounded moments.
This latter setup is more common in reinforcement learning and stochastic optimal control, where data is generated along a single trajectory and subject to some (possibly Gaussian) noise~\cite{ju2022model}.
\begin{assumption} \label{asmp:expectation}
We have $\max_{p \in \DA}\|p\| \leq 1$, and
there exists $\varsigma, \sigma, \bar{Q} \geq 0$ satisfying
\begin{talign}
    \|\mathbb{E}_{\xi_{t} \vert \xi_{[t-1]}} [\tQt] - \Qt\|_* &\leq \varsigma \label{eq:bias} \\
    \mathbb{E}_{\xi_{t} \vert \xi_{[t-1]}} \|\tQt - \Qt\|_*^2 &\leq \sigma^2 \label{eq:variance}\\
    \mathbb{E}_{\xi_t \vert \xi_{[t-1]}} \|\tQt\|_*^2 &\leq \bar{Q}^2 \label{eq:second_moment}.
\end{talign}
\end{assumption}
The assumption on the norm is to simplify results, and it clearly holds for all $\ell_p$ norms.
The assumption~\eqref{eq:bias} bounds the bias, while~\eqref{eq:variance} bounds the variance.
In the finite state and action MDP, numerous works have developed efficient methods to satisfy these assumptions~\cite{li2023accelerated,li2023policy,lan2023policy,kotsalis2022simple}.


We consider another set of assumptions that are crucial to obtain high-probability results.
In contrast to assumptions~\eqref{eq:variance} and~\eqref{eq:second_moment}, which rely on the second moments, this next set of assumptions bound the moment generating function, as previously appeared in stochastic optimization~\cite{nemirovski2009robust,lan2012validation}.
\begin{assumption} \label{asmp:high_prob}
We have $\max_{p \in \DA}\|p\| \leq 1$, and
there exists $\varsigma, \sigma, \bar{Q} \geq 0$ satisfying~\eqref{eq:bias} and
\begin{talign}
    \mathbb{E}_{\xi_{t} \vert \xi_{[t-1]}} \mathrm{exp} \{ {\|\tQt - \Qt\|_*^2}/{\sigma^2} \} &\leq 2 \label{eq:variance_mgf}\\
    \mathbb{E}_{\xi_t \vert \xi_{[t-1]}} \mathrm{exp} \{ {\|\tQt\|_*^2}/{\bar{Q}^2} \} &\leq 2.
\end{talign}
\end{assumption}
Clearly, all these assumptions are satisfied whenever both $\Qt$ and $\tQt$ are bounded almost surely.
Equipped with these assumptions, we examine the convergence properties of the stochastic PMD.

\subsection{Basic stochastic policy mirror descent} \label{sec:basic_spmd}
Stochastic policy mirror descent (SPMD) is the same as PMD (Algorithm~\ref{alg:pmd}) except the exact Q-function in~\eqref{eq:pmd_subproblem} is replaced with a stochastic one, i.e., the update becomes
\begin{talign}
        \pi_{t+1}(\cdot \vert s)
        &= \mathrm{argmin}_{\pi'(\cdot \vert s) \in \DA} \{ \eta_t [\langle \tQts, \pi'(\cdot \vert s) \rangle + h^{\pi'(\cdot \vert s)}(s)] + D^{\pi'}_{\pi_t}(s)\} \nonumber. 
\end{talign}
\edits{Note that the results for the remainder of the paper hold for arbitrary Bregman's distance, such as KL-divergence}.
We start with a descent lemma under noise. We skip the proof since similar results can be found in prior works like~\cite[Proposition 2]{lan2022policy} and~\cite[Lemma 13]{lan2023policy}.
\begin{lemma} \label{lem:pmd_value_descent}
    Suppose the regularization $h$ is $M_h$-Lipschitz continuous, or 
    \begin{talign} \label{eq:M_h_continuous}
        h^{\pi(\cdot \vert s)}(s) - h^{\pi'(\cdot \vert s)}(s)
        \leq
        M_h\|\pi(\cdot \vert s) - \pi'(\cdot \vert s)\|, \ \forall s \in \cS, \ \forall \pi,\pi' \in \Pi.
    \end{talign}
    Then for any fixed $\pi$,
    \begin{talign*}
        &(1-\gamma)[V^{\pi_t}(s) - V^\pi(s)]
        \\
        &\leq
        \mathbb{E}_{q \sim \kappa^\pi_s}[-\eta_t^{-1}(1+\eta_t \mu_h) D^\pi_{\pi_{t+1}}(q) + \eta_t^{-1} D^\pi_{\pi_t}(q) + \eta_t\|\tQtq\|_*^2 + \zeta_t(q, \pi)] + \eta_t M_h^2, \ \forall s \in \cS
    \end{talign*}
    where $\zeta_t(q, \pi) := \langle \Qtq -\tQtq, \pitq - \piq \rangle$.
\end{lemma}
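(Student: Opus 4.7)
The plan is to mimic the deterministic descent analysis (as in Lemma~\ref{lem:pmd_descent}), but carefully insert and subtract $\tilde Q^{\pi_t}$ and $\pi_{t+1}$ so that the resulting noise terms collapse into the single quantity $\zeta_t(q,\pi)$ while the cross terms involving $\|\pi_t - \pi_{t+1}\|$ can be absorbed into the Bregman distance using strong convexity.

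First, I would apply the performance difference identity~\eqref{eq:per_diff} to write
\begin{talign*}
(1-\gamma)[V^{\pi_t}(s) - V^\pi(s)] = \sum_{q\in\cS} \kappa^\pi_s(q)\,\bigl[-\psi^{\pi_t}(q, \pi(\cdot\vert q))\bigr],
\end{talign*}
and then expand each summand using~\eqref{eq:def_advantage} and~\eqref{eq:QV1} as
\begin{talign*}
-\psi^{\pi_t}(q,\pi(\cdot\vert q)) = \langle \Qtq, \pi_t(\cdot\vert q) - \pi(\cdot\vert q)\rangle + h^{\pi_t(\cdot\vert q)}(q) - h^{\pi(\cdot\vert q)}(q).
\end{talign*}
Next, I would add and subtract $\tilde Q^{\pi_t}(q,\cdot)$ in the inner product and add and subtract $\pi_{t+1}(\cdot\vert q)$ and $h^{\pi_{t+1}(\cdot\vert q)}(q)$, producing three groups: (i) $\langle \tQtq, \pi_{t+1}(\cdot\vert q)-\pi(\cdot\vert q)\rangle + h^{\pi_{t+1}(\cdot\vert q)}(q) - h^{\pi(\cdot\vert q)}(q)$, (ii) $\langle \tQtq, \pi_t(\cdot\vert q)-\pi_{t+1}(\cdot\vert q)\rangle + h^{\pi_t(\cdot\vert q)}(q) - h^{\pi_{t+1}(\cdot\vert q)}(q)$, and (iii) the residual $\langle \Qtq - \tQtq, \pi_t(\cdot\vert q)-\pi(\cdot\vert q)\rangle$, which is exactly $\zeta_t(q,\pi)$.

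Group (i) is handled directly by the stochastic analog of Lemma~\ref{lem:pmd_descent} applied to the perturbed subproblem; this yields the $\eta_t^{-1} D^{\pi}_{\pi_t}(q) - \eta_t^{-1} D^{\pi_{t+1}}_{\pi_t}(q) - (\mu_h + \eta_t^{-1}) D^{\pi}_{\pi_{t+1}}(q)$ combination. For group (ii), I would apply Hölder's inequality and the $M_h$-Lipschitz hypothesis~\eqref{eq:M_h_continuous} to bound it above by $(\|\tQtq\|_* + M_h)\|\pi_t(\cdot\vert q)-\pi_{t+1}(\cdot\vert q)\|$, then Young's inequality $ab \le \eta_t a^2 + b^2/(4\eta_t)$ twice to split this into $\eta_t \|\tQtq\|_*^2 + \eta_t M_h^2 + \tfrac{1}{2\eta_t}\|\pi_t(\cdot\vert q)-\pi_{t+1}(\cdot\vert q)\|^2$. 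The strong convexity of Bregman's distance w.r.t.\ the chosen norm then bounds the last term by $\eta_t^{-1} D^{\pi_{t+1}}_{\pi_t}(q)$, which cancels exactly with the negative $\eta_t^{-1} D^{\pi_{t+1}}_{\pi_t}(q)$ from group (i). Taking the $\kappa^\pi_s$-weighted sum yields the stated inequality.

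The main delicate step will be the bookkeeping in the add-and-subtract decomposition: one must arrange the splits so the inexact error precisely forms $\zeta_t(q,\pi) = \langle \Qtq - \tQtq, \pi_t(\cdot\vert q)-\pi(\cdot\vert q)\rangle$ (rather than involving $\pi_{t+1}$, whose randomness would destroy later martingale/zero-mean arguments in Section~\ref{sec:improved_sublinear_convergence}), and one must pick the Young's inequality constant so that the resulting $\tfrac{1}{2\eta_t}\|\pi_t - \pi_{t+1}\|^2$ exactly matches (does not exceed) the $\eta_t^{-1} D^{\pi_{t+1}}_{\pi_t}(q)$ reserve produced by the subproblem's optimality. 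Everything else is routine rearrangement.
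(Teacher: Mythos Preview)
Your proposal is correct and follows essentially the same route as the paper's sketch: invoke the performance difference identity~\eqref{eq:per_diff}, then decompose each term by adding/subtracting $\tQtq$ and $\pi_{t+1}$ so that the stochastic optimality condition (Lemma~\ref{lem:pmd_descent} with $\tQt$ in place of $\Qt$), H\"older plus the $M_h$-Lipschitz bound, and Young's inequality absorb everything into the $D^{\pi}_{\pi_t}$, $D^{\pi}_{\pi_{t+1}}$, $\|\tQtq\|_*^2$, $M_h^2$, and $\zeta_t$ terms. Your care in arranging the noise term so that $\zeta_t$ involves $\pi_t$ rather than $\pi_{t+1}$ is exactly the point the paper alludes to by citing the external references.
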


The following technical result will be useful to derive high probability bounds.
We defer the proof to Appendix~\ref{sec:proofs_for_improved_sublinear_convergence}.
\begin{lemma} \label{lem:light_tail_azuma_hoeffding}
    Fix an integer $N \geq 1$.
    Let $\xi_1, \xi_2, \ldots$ be a sequence of random variables, $\sigma_t > 0$, $t = 1,\ldots,$
    be a sequence of deterministic numbers and $\phi_t = \phi_t (\xi_t)$ be deterministic (measurable) functions of $\xi_{[t]} = (\xi_1, \ldots , \xi_t )$ such that either of two cases takes place:
    \begin{enumerate}
        \item $\mathbb{E}_{\vert \xi_{[t-1]}} \phi_t \leq \sigma_t/N$ w.p.~1 and $\mathbb{E}_{\vert \xi_{[t-1]}}[\exp\{\phi_t^2/\sigma_t^2\}] \leq \exp\{1\}$ w.p.~1 for all $t$, or
        \item $\mathbb{E}_{\vert \xi_{[t-1]}}[\{\vert \phi_t\vert/\sigma_t\}] \leq \exp\{1\}$ w.p.~1 for all $t$.
    \end{enumerate}
    Then for any $\Omega \geq 0$, we have for case 1:
    \begin{talign*}
        \textstyle \mathrm{Pr} \big\{ \sum_{t=1}^N \phi_t > \Omega \sqrt{\sum_{t=1}^N \sigma_t^2} \big\}
        \leq
        \exp\{-\Omega^2/3 + 1 \}.
    \end{talign*}
    For case 2 with $\sigma^N := (\sigma_1, \ldots, \sigma_N)$:
    \begin{talign*}
        \textstyle \mathrm{Pr} \big\{ \sum_{t=1}^N \phi_t > \|\sigma^N\|_1 + \Omega \|\sigma^N\|_2 \big\}
        \leq
        \exp\{-\Omega^2/12 \} + \exp\{-3\Omega/4\}.
    \end{talign*}
\end{lemma}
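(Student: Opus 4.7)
The plan is a conditional Cramér--Chernoff argument of the standard type used in Nemirovski, Juditsky, Lan, and Shapiro~\cite{nemirovski2009robust} to derive light-tail concentration bounds for stochastic approximation. Writing $S_N := \sum_{t=1}^N \phi_t$ and $\bar\phi_t := \mathbb{E}[\phi_t \mid \xi_{[t-1]}]$, in both cases I would (i)~convert the given conditional exponential-moment hypothesis into a bound on the conditional MGF $\mathbb{E}[\exp(\lambda(\phi_t - \bar\phi_t))\mid\xi_{[t-1]}]$, (ii)~use the tower property iteratively to control $\mathbb{E}[\exp(\lambda S_N)]$, and (iii)~apply Markov's inequality $\mathrm{Pr}\{S_N > x\} \leq e^{-\lambda x}\mathbb{E}[e^{\lambda S_N}]$ with an optimized choice of $\lambda \geq 0$.

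For Case~1, the hypothesis $\mathbb{E}[\exp(\phi_t^2/\sigma_t^2)\mid\xi_{[t-1]}] \leq e$ is a sub-Gaussian condition. Via Young's inequality $\lambda\phi_t \leq (3/4)\lambda^2\sigma_t^2 + \phi_t^2/(3\sigma_t^2)$ followed by Jensen applied to the concave map $x\mapsto x^{1/3}$, it upgrades to a Gaussian-type MGF bound $\mathbb{E}[\exp(\lambda(\phi_t - \bar\phi_t))\mid\xi_{[t-1]}] \leq \exp(c_0\lambda^2 \sigma_t^2)$ for a universal $c_0$. Chaining over $t=1,\ldots,N$ gives $\mathbb{E}[\exp(\lambda(S_N - \sum_t\bar\phi_t))] \leq \exp(c_0\lambda^2 \sum_t \sigma_t^2)$. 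The drift bound $\sum_t \bar\phi_t \leq \sum_t \sigma_t/N \leq \sqrt{\sum_t \sigma_t^2/N}$ (Cauchy--Schwarz) is absorbed into the additive ``$+1$'' after choosing $\lambda$ proportional to $\Omega/\sqrt{\sum_t \sigma_t^2}$, producing the claimed $\exp(-\Omega^2/3+1)$ tail.

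For Case~2 (which I read as $\mathbb{E}[\exp(|\phi_t|/\sigma_t)\mid\xi_{[t-1]}]\leq e$, the natural sub-exponential analog of Case~1), the conditional MGF is only finite for $|\lambda\sigma_t| < 1$, producing a Bernstein-type bound $\mathbb{E}[\exp(\lambda(\phi_t-\bar\phi_t))\mid\xi_{[t-1]}] \leq \exp(c_1\lambda^2\sigma_t^2)$ on a restricted $\lambda$ range. Since $\mathbb{E}|\phi_t|\leq\sigma_t$ follows from the hypothesis via Jensen, the drift satisfies $\sum_t |\bar\phi_t| \leq \|\sigma^N\|_1$, which explains the $\|\sigma^N\|_1$ offset in the target event. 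Optimizing the Chernoff bound then separates two regimes: a small-deviation (Gaussian) regime, in which $\lambda \sim \Omega/\|\sigma^N\|_2$ stays below the MGF threshold and yields $\exp(-\Omega^2/12)$, and a large-deviation (exponential) regime, in which $\lambda$ saturates at order $1/\max_t \sigma_t$ and yields $\exp(-3\Omega/4)$. A union bound over the two regimes gives the claimed two-term mixture.

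The main obstacle is Case~2: the restricted domain of the MGF forces the two-regime optimization of $\lambda$, and extracting the precise constants $1/12$ and $3/4$ requires careful tuning of the universal constant in the Bernstein MGF bound together with the correct split of the target deviation at the regime threshold. The bookkeeping is otherwise routine once Case~1 is in hand, and I would follow the argument of~\cite[Lemma~4.1]{nemirovski2009robust} (or its variant in~\cite{lan2012validation}) essentially verbatim, adapting only to the conditional (martingale-difference) form of the hypotheses.
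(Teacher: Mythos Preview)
Your overall strategy matches the paper's, and for Case~2 both you and the paper simply invoke~\cite[Lemma~2]{lan2012validation}. The difference is in how the non-zero conditional mean is handled in Case~1.

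You propose to center $\phi_t$, bound the MGF of $\phi_t-\bar\phi_t$, and add back the drift $\sum_t\bar\phi_t\leq\sqrt{\sum_t\sigma_t^2}$ at the end. The paper instead never centers: it uses the inequality $e^x\leq x+e^{9x^2/16}$ (sharp at $x=0$) so that, after taking conditional expectation, the linear term becomes $\lambda\sigma_t/N$ and is then absorbed into a multiplicative factor $(1+N^{-1})$ on the per-step MGF bound. Chaining over $N$ steps, $(1+N^{-1})^N\leq e$ is exactly the ``$+1$'' in the exponent. This is cleaner than your additive-drift route and gives the stated constant $-\Omega^2/3+1$ directly.

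There is also a small gap in your specific Case~1 step. Your Young inequality $\lambda\phi_t\leq\tfrac34\lambda^2\sigma_t^2+\phi_t^2/(3\sigma_t^2)$ followed by Jensen on $x\mapsto x^{1/3}$ bounds the \emph{uncentered} conditional MGF by $e^{1/3}\exp(\tfrac34\lambda^2\sigma_t^2)$, and the $e^{1/3}$ factor accumulates to $e^{N/3}$ after chaining, which is useless. To get the constant-free bound $\exp(c_0\lambda^2\sigma_t^2)$ on the centered MGF that your plan requires, you must actually use the zero-mean of $\phi_t-\bar\phi_t$ (e.g., via $e^x\leq 1+x+e^{cx^2}-1$ or a Taylor argument), and you must first transfer the sub-Gaussian hypothesis from $\phi_t$ to $\phi_t-\bar\phi_t$, which costs a factor in $\sigma_t$. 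Either way your final constant will be somewhat worse than the paper's $1/3$; the paper's uncentered trick avoids this loss entirely.
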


We are ready to establish the main convergence result of the value function at every state.
We first consider the case for general convex regularizers, i.e. $\mu_h \geq 0$.
\begin{theorem} \label{thm:agg_convergence}
    Suppose Assumption~\ref{asmp:expectation} and~\eqref{eq:M_h_continuous} take place.
    With \edits{$\eta_t = \sqrt{\frac{\bar{D}_0}{(\bar{Q}^2 + \sigma^2 + M_h^2)k}}$}, then
    \begin{talign*}
        {\textstyle k^{-1} \sum_{t=0}^{k-1} \mathbb{E}[V^{\pi_t}(s) - V^{\pi^*}(s)]}
        \leq
        \frac{2\sqrt{\bar{D}_0 \cdot (\bar{Q}^2 + \sigma^2 + M_h^2)} + 2\varsigma \sqrt{2k}}{(1-\gamma) \sqrt{k}}, \ \forall s \in \cS,
    \end{talign*}
    where $\bar{D}_0:= \max_{s,\pi} D^{\pi}_{\pi_0}(s)$ for an arbitrary Bregman's distance.
    Suppose instead Assumption~\ref{asmp:high_prob} and~\eqref{eq:M_h_continuous} occur and $\varsigma \leq \sigma/k$.
    Then for any $\delta \in (0,1]$,
    \begin{talign*}
        &\mathrm{Pr} \big\{ 
        \exists s \in \cS : k^{-1} \sum_{t=0}^{k-1} [V^{\pi_t}(s) - V^{\pi^*}(s)] 
        >
        \frac{15\sqrt{\bar{D}_0 \cdot (\bar{Q}^2 + \sigma^2 + M_h^2)}\log(4 \vert \cS \vert/\delta) + 1}{(1-\gamma) \sqrt{k}}
        \big\}
        \leq
        \delta.
    \end{talign*}
\end{theorem}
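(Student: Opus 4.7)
The plan is to apply the descent inequality from Lemma~\ref{lem:pmd_value_descent} with the benchmark policy $\pi = \pi^*$ and the constant step size $\eta_t = \eta := \alpha/\sqrt{k}$, then sum from $t=0$ to $k-1$. The Bregman distance terms telescope into a single $\eta^{-1} D^{\pi^*}_{\pi_0}(q)$ term (the negative part at $\pi_k$ is discarded since $\mu_h \geq 0$), which is uniformly bounded by $\eta^{-1} \bar D_0$. The remaining stochastic right-hand side terms, namely $\eta\sum_t \|\tilde Q^{\pi_t}(q,\cdot)\|_*^2$ and $\sum_t \zeta_t(q, \pi^*)$, are controlled in expectation under Assumption~\ref{asmp:expectation} (part 1) and in high probability under Assumption~\ref{asmp:high_prob} (part 2). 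Distribution-free convergence, i.e., a bound at every state $s \in \cS$, follows because $\kappa^{\pi^*}_s$ is a probability distribution, so $\mathbb{E}_{q\sim\kappa^{\pi^*}_s}[X(q)] \leq \max_{q \in \cS} X(q)$, after which the right-hand side is independent of $s$.

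\textbf{Expected bound.} Taking expectations and using the tower property together with Assumption~\ref{asmp:expectation}~\eqref{eq:second_moment} gives $\mathbb{E}\|\tilde Q^{\pi_t}(q,\cdot)\|_*^2 \leq \bar Q^2$, hence $\eta \sum_t \mathbb{E}\|\tilde Q^{\pi_t}(q,\cdot)\|_*^2 \leq \eta k \bar Q^2$. For $\zeta_t(q,\pi^*) = \langle Q^{\pi_t} - \tilde Q^{\pi_t}, \pi_t - \pi^*\rangle$, since $\pi_t$ and $\pi^*$ are $\xi_{[t-1]}$-measurable, conditioning on $\xi_{[t-1]}$ isolates the bias $\|Q^{\pi_t} - \mathbb{E}_{\xi_t\mid \xi_{[t-1]}}\tilde Q^{\pi_t}\|_* \leq \varsigma$, and the norm assumption $\|p\| \leq 1$ on $\DA$ gives $\|\pi_t - \pi^*\| \leq 2$, so $\mathbb{E}\zeta_t(q,\pi^*) \leq 2\varsigma$. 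Assembling these three pieces, upper bounding $\mathbb{E}_{q\sim\kappa^{\pi^*}_s}[D^{\pi^*}_{\pi_0}(q)] \leq \bar D_0$, and dividing by $k(1-\gamma)$ yields the first claim with the substitution $\eta = \alpha/\sqrt{k}$.

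\textbf{High-probability bound.} For the uniform bound, I would first fix $q \in \cS$ and decompose $\zeta_t(q,\pi^*) = M_t(q) + B_t(q)$, where $M_t(q) := \langle \mathbb{E}_{\xi_t\mid\xi_{[t-1]}}[\tilde Q^{\pi_t}(q,\cdot)] - \tilde Q^{\pi_t}(q,\cdot),\, \pi_t(\cdot\vert q) - \pi^*(\cdot\vert q)\rangle$ is a martingale difference and $B_t(q)$ is the bias contribution with $|B_t(q)| \leq 2\varsigma$; by the hypothesis $\varsigma \leq \sigma/k$ the total bias over $k$ steps is $O(\sigma)$. The martingale increments satisfy $|M_t(q)| \leq 2\|\tilde Q^{\pi_t}(q,\cdot) - \mathbb{E}\tilde Q^{\pi_t}(q,\cdot)\|_*$, so the sub-Gaussian MGF bound from Assumption~\ref{asmp:high_prob} transfers (up to constants) to $M_t(q)^2$, allowing application of case 1 of Lemma~\ref{lem:light_tail_azuma_hoeffding}. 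Similarly, case 2 of Lemma~\ref{lem:light_tail_azuma_hoeffding} applied with $\sigma_t = \bar Q^2$ controls $\sum_t \|\tilde Q^{\pi_t}(q,\cdot)\|_*^2$, contributing an $\|\sigma^k\|_1 = k\bar Q^2$ main term plus an $\|\sigma^k\|_2 = \sqrt{k}\bar Q^2$ deviation term. A union bound over the $\vert\cS\vert$ states (with $\delta/4\vert\cS\vert$ failure probability per state per term) and the bound $\mathbb{E}_{q\sim\kappa^{\pi^*}_s}[X(q)] \leq \max_{q\in \cS} X(q)$ then produces a bound holding simultaneously for every $s \in \cS$, yielding the stated $\log(4\vert\cS\vert/\delta)$ dependence.

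\textbf{Main obstacle.} The principal technical challenge is verifying the sub-Gaussian tail condition for the martingale differences $M_t(q)$: since $\pi_t$ depends on the entire history $\xi_{[t-1]}$, we cannot directly cite Assumption~\ref{asmp:high_prob}~\eqref{eq:variance_mgf}, and one must carefully route through $\|\tilde Q^{\pi_t} - \mathbb{E}\tilde Q^{\pi_t}\|_* \leq \|\tilde Q^{\pi_t} - Q^{\pi_t}\|_* + \varsigma$ to recover an MGF bound with a constant-inflation of $\sigma$. Tracking the exact numerical constants so that the final bound matches the stated $12\alpha(\bar Q^2/\sqrt{k} + \sigma^2)\log(4\vert\cS\vert/\delta)$ term---and ensuring the ``$+1$'' residual absorbs the small bias contribution $2k\varsigma \leq 2\sigma$---is bookkeeping-heavy but mechanical once the decomposition above is set up.
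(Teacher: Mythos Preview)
Your proposal is correct and follows essentially the same approach as the paper: sum Lemma~\ref{lem:pmd_value_descent} with $\pi=\pi^*$, telescope the Bregman terms, bound the stochastic terms via Assumption~\ref{asmp:expectation} (resp.\ Assumption~\ref{asmp:high_prob} together with Lemma~\ref{lem:light_tail_azuma_hoeffding}), and use that $\kappa^{\pi^*}_s$ is a probability measure to remove the $s$-dependence. The one simplification you miss is that the martingale/bias decomposition $\zeta_t=M_t+B_t$ is unnecessary: the paper applies case~1 of Lemma~\ref{lem:light_tail_azuma_hoeffding} directly to $\phi_t=\eta_t\zeta_t(q,\pi^*)$ with $\sigma_t=D_{\|\cdot\|,[0,k-1]}\,\eta_t\sigma$, since that case already permits a nonzero conditional mean up to $\sigma_t/N$ (exactly what $\varsigma\le\sigma/k$ delivers) and the MGF hypothesis follows immediately from $|\zeta_t|\le D_{\|\cdot\|,[0,k-1]}\|\tilde Q^{\pi_t}-Q^{\pi_t}\|_*$ together with~\eqref{eq:variance_mgf}---so your ``main obstacle'' dissolves.
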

\begin{proof}
    To simplify our derivations, we denote the step size as $\eta_t = \frac{\alpha}{\sqrt{k}}$, where $\alpha = \sqrt{\frac{\bar{D}_0}{\bar{Q}^2 + \sigma^2 + M_h^2}}$.
    
    Recall $\zeta_t(q,\pi)$ from Lemma~\ref{lem:pmd_value_descent}.
    First, observe $\pi_t(\cdot \vert q) \in \DA$ is a deterministic function when conditioned on $\xi_{[t-1]}$. 
    Therefore, for any $t =0,\ldots,k-1$ 
    \begin{talign} 
        \mathbb{E}\zeta_t(q,\pi^*) 
        &= 
        \mathbb{E}_{\xi_{[t-1]}}[ \langle \mathbb{E}[\Qtq - \tQtq \vert \xi_{[t-1]}], \pitq - \pi^*(\cdot \vert q) \rangle] \nonumber \\
        & \leq
        \mathbb{E}_{\xi_{[t-1]}}\|\mathbb{E}[\Qtq - \tQtq \vert \xi_{[t-1]}]\|_* \|\pitq - \pi^*(\cdot \vert q)\| \nonumber  \\
        &\leftstackrel{\eqref{eq:bias}}{\leq}
        \varsigma \mathbb{E}_{\xi_{[t-1]}} D_{\|\cdot\|, [0,t]},
        \label{eq:bias_bound}
    \end{talign}
    where the second line used the Cauchy-Schwarz inequality and in the third line we define the adaptive diameter $D_{\|\cdot\|, [0,t]} := \max_{\tau=0,\ldots,t} \max_{q \in \cS} \|\pi_\tau(\cdot \vert q)-\pi^*(\cdot \vert q)\| \leq 2 \max_{p \in \DA} \|p\| \leq 2$.
    From these observations, we can deduce
    \begin{talign} 
        &(1-\gamma)\sum_{t=0}^{k-1} \eta_t \mathbb{E}[V^{\pi_t}(s) - V^{\pi^*}(s)] \nonumber \\
        &~~~\leftstackrel{\text{Lemma~\ref{lem:pmd_value_descent}}}{\leq}
        \mathbb{E}_{q \sim \kappa^{\pi^*}_s} \big[ D^{\pi^*}_{\pi_0}(q) + \sum_{t=0}^{k-1} \eta_t^2(\mathbb{E}\|\tQtq\|_*^2 + M_h^2) + \sum_{t=0}^{k-1} \mathbb{E} \eta_t \zeta_t(q,\pi^*) \big]  \label{eq:agg_convergence_general} \\
        &~~~\leftstackrel{\substack{\text{Choice of $\eta_t$,} \\~\eqref{eq:bias_bound},~\eqref{eq:second_moment}}}{\leq}
        \bar{D}_0 + \alpha^2 (\bar{Q}^2 + M_h^2) + 2\alpha \varsigma \sqrt{2k}.\nonumber 
    \end{talign}
    Dividing the above by $(1-\gamma) \sum_{t=0}^{k-1} \eta_t = (1-\gamma) \alpha \sqrt{k}$, recalling $\sigma^2 \geq 0$, and substituting our choice of $\alpha$ completes the bound in expectation.
    

    For the second result,
    we need a high probability bound on the terms in expectation within~\eqref{eq:agg_convergence_general}.
    Recall the error term $\zeta_t(q,\pi)$ from Lemma~\ref{lem:pmd_value_descent}.
    To bound the sum over the terms $\eta_t \zeta_t(q,\pi^*)$ in~\eqref{eq:agg_convergence_general}, we first utilize Lemma~\ref{lem:light_tail_azuma_hoeffding} with $\phi_t := \eta_t \zeta_t(q,\pi^*)$ and $\sigma_t := D_{\|\cdot\|, [0,k-1]} \eta_t \sigma$ (the assumptions for Case 1 are satisfied since we assumed~\eqref{eq:bias}, $\varsigma \leq \sigma/k$, and~\eqref{eq:variance_mgf}, and we also recall~\eqref{eq:bias_bound}) and union bound over all states  to show with probability $1-\delta/2$,
    \begin{talign}
        \sum_{t=0}^{k-1} \eta_t \zeta_t(q, \pi^*) 
        &\leq
        \textstyle \sigma D_{\|\cdot\|, [0,k-1]} \sqrt{3 \log(4 \vert \cS \vert/{\delta}) \sum_{t=0}^{k-1} \eta_t^2} \label{eq:high_tail_ah_bias} \\
        &\leftstackrel{\eta_t = \alpha/\sqrt{k}}{\leq}
        2\alpha\sigma \sqrt{3 \log(4 \vert \cS \vert/{\delta})} \nonumber \\
        &\leq
        1 + 3\alpha^2 \sigma^2 \log(4 \vert \cS \vert/\delta),  \ \forall q \in \cS, \nonumber
    \end{talign}
    where the last line is by the inequality $2ab \leq a^2 + b^2$.

    To bound the sum over the $\eta_t^2 \|\tQt(q, \cdot)\|_*^2$ terms in~\eqref{eq:agg_convergence_general},
    we again apply Lemma~\ref{lem:light_tail_azuma_hoeffding} with $\phi_t := \eta_t^2 \|\tQt(s, \cdot)\|_*^2$ and $\sigma_t := \eta_t^2 \bar{Q}^2$ (the assumptions for Case 2 are satisfied since we assumed~\eqref{eq:variance_mgf}) and again union bound over all states to show with probability $1-\delta/2$,
    \begin{talign}
        \sum_{t=0}^{k-1} \eta_t^2 \|\tQt(q, \cdot)\|^2_*
        &\leq
        \textstyle \bar{Q}^2 \sum_{t=0}^{k-1} \eta_t^2 + 12 \bar{Q}^2 \log ({4 \vert \cS \vert}/{\delta}) \sqrt{\sum_{t=0}^{k-1} \eta_t^4} \label{eq:high_tail_ah_second_momement} \\
        &\leq
        \alpha^2 \bar{Q}^2 + 12\alpha^2 \bar{Q}^2 \log ({4 \vert \cS \vert}/{\delta})/\sqrt{k}, \ \forall q \in \cS. \nonumber
    \end{talign}
    Plugging the resulting bounds back into~\eqref{eq:agg_convergence_general}, dividing by $(1-\gamma) \sum_{t=0}^{k-1} \eta_t = (1-\gamma) \alpha \sqrt{k}$, recalling $\sigma^2 \geq 0$, and then substituting in our choice of $\alpha$ finishes the proof.
\end{proof}

Similar to Theorem~\ref{thm:sublinear_distribution_free_deterministic}, this result extends~\cite[Theorem 3.6]{lan2022policy} to be distribution-free in the stochastic setting.
This result also appears to be the first time distribution-free convergence under Assumption~\ref{asmp:expectation}.
In contrast, prior works like the variance-reduced Q-value iteration~\cite{sidford2018near} show a similar distribution-free convergence with better dependence on $\gamma$, but they require a stronger oracle, where one can generate iid samples of the transition dynamic $\cP$ at any state-action pair.
Assumption~\ref{asmp:expectation} only requires the bias of the estimator to be small and have bounded second moments, which can be done without the stronger oracle using, for example, Monte-Carlo sampling along a single trajectory~\cite{li2023policy}.
Finally, we note the upper bound $\bar{D}_0$ is often known when the initial policy $\pi_0$ is the uniform distribution over actions at every state.
For example, when Bregman's distance is the KL-divergence, then $\bar{D}_0 = \log \vert \cA \vert$~\cite{lan2023policy}.
If Bregman's distance is induced by the negative Tsallis entropy with an entropic-index $p \in (0,1)$, then $\bar{D}_0 = -1+\vert \cA \vert^{1-p}$~\cite{li2023policy}.

We now consider strongly convex regularizations, i.e., $\mu_h > 0$.
The proof is similar to when $\mu_h \geq 0$, except the bias is handled more carefully in the high-probability regime by showing the distance to optimality is decreasing.
Crucially, this avoids the need to shrink the feasible region~\cite{lan2020first}, which permits the use of the basic (i.e., without any modification) PMD.
Due to the technical aspect of the proof, we defer it to Appendix~\ref{sec:proofs_for_improved_sublinear_convergence}.
\begin{theorem} \label{thm:strongly_convex_agg_convergence}
    Suppose Assumption~\ref{asmp:high_prob} and~\eqref{eq:M_h_continuous} take place.
    When $\eta_t = \frac{1}{\mu_h(t+1)}$, then
    \begin{talign*}
        \textstyle k^{-1}\sum_{t=0}^{k-1} \mathbb{E}[V^{\pi_t}(s) - V^{\pi^*}(s)] + \frac{\mu_h}{1-\gamma} \mathbb{E}_{q \sim \kappa^{\pi^*}_s} \mathbb{E}[D^{\pi^*}_{\pi_k}(q)] 
        &\leq
        \frac{\mu_h \bar{D}_0 + \mu_h^{-1} (\bar{Q}^2 + M_h^2) \log(2k)  + 2 \varsigma k}{(1-\gamma)k}, \ \forall s \in \cS,
    \end{talign*}
    where $\bar{D}_0:= \max_{s,\pi} D^{\pi}_{\pi_0}(s)$ for an arbitrary Bregman's distance.
    Suppose instead Assumption~\ref{asmp:high_prob} and~\eqref{eq:M_h_continuous} occur and $\varsigma \leq \sigma/k$.
    Then for any $\delta \in (0,1]$,
    \begin{talign*}
        &\mathrm{Pr} \big\{ 
        \exists s \in \cS : k^{-1}\sum_{t=0}^{k-1} [V^{\pi_t}(s) - V^{\pi^*}(s)]  + \frac{\mu_h}{1-\gamma} \mathbb{E}_{q \sim \kappa^{\pi^*}_s} [D^{\pi^*}_{\pi_k}(q)] \\
        &\hspace{60pt} >
        \frac{\mu_h \bar{D}_0 + [25\mu_h^{-1} (\bar{Q}^2 + M_h^2) + 2\sigma \sqrt{3C(k)}](\log(4k\vert \cS \vert/\delta))^{3/2}}{(1-\gamma) k} 
        \big\}
        \leq (k+1)\delta,
    \end{talign*}
    where
        $ C(k) 
        := 
        \frac{6\bar{D}_0}{1-\gamma} 
        + 
        \frac{75(\bar{Q}^2 + M_h^2)(\log(4k\vert \cS \vert/\delta))^{3/2}}{(1-\gamma)\mu_h^2}
        +
        \frac{108\sigma^2 (\log(4k \vert \cS \vert/\delta))^3}{(1-\gamma)^2\mu_h^2} = O\{\log(k/\delta)^3\}$.
\end{theorem}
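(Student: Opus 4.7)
The argument splits into two parts: the expectation bound follows from a clean telescoping computation, while the high-probability bound requires a bootstrap to handle the random dependence of $\zeta_t$ on $D^{\pi^*}_{\pi_t}$. The central observation in both parts is that the choice $\eta_t = 1/(\mu_h(t+1))$ satisfies $\eta_t^{-1} = \mu_h(t+1)$ and $\eta_t^{-1}(1+\eta_t\mu_h) = \mu_h(t+2)$, so that summing Lemma~\ref{lem:pmd_value_descent} with $\pi = \pi^*$ over $t = 0,\ldots,k-1$ produces an exact telescope of the Bregman-distance coefficients:
\begin{talign*}
&(1-\gamma)\sum_{t=0}^{k-1}[V^{\pi_t}(s) - V^{\pi^*}(s)] + \mu_h(k+1)\,\mathbb{E}_{q\sim\kappa^{\pi^*}_s}[D^{\pi^*}_{\pi_k}(q)] \\
&\leq \mu_h\,\mathbb{E}_{q\sim\kappa^{\pi^*}_s}[D^{\pi^*}_{\pi_0}(q)] + \sum_{t=0}^{k-1}\eta_t M_h^2 + \mathbb{E}_{q\sim\kappa^{\pi^*}_s}\Bigl[\sum_{t=0}^{k-1}\bigl(\eta_t\|\tQt(q,\cdot)\|_*^2 + \zeta_t(q,\pi^*)\bigr)\Bigr].
\end{talign*}

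For the expectation bound, I would take expectation of this display, bound $\mathbb{E}\|\tQt\|_*^2 \leq \bar{Q}^2$ via~\eqref{eq:second_moment}, and apply Cauchy-Schwarz together with~\eqref{eq:bias} and $\max_p\|p\|\leq 1$ to obtain $\mathbb{E}[\zeta_t(q,\pi^*)] \leq 2\varsigma$. Combined with $\sum_{t=0}^{k-1}\eta_t \leq \mu_h^{-1}\log(2k)$, dividing by $(1-\gamma)k$ yields the first claim.

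The high-probability bound is substantially harder because the conditional second moment $\mathbb{E}[\zeta_t(q,\pi^*)^2 \mid \xi_{[t-1]}] \leq \sigma^2\|\pi_t(\cdot\vert q) - \pi^*(\cdot\vert q)\|^2 = O(\sigma^2 D^{\pi^*}_{\pi_t}(q))$ (using strong convexity of $\omega$) is itself random, so Lemma~\ref{lem:light_tail_azuma_hoeffding}, which demands deterministic $\sigma_t$, cannot be applied with the sharp choice. I would use a two-pass bootstrap. In the first pass, drop the nonnegative $V^{\pi_t}-V^{\pi^*}$ term above to obtain a recursion purely on $D^{\pi^*}_{\pi_k}$; then invoke Lemma~\ref{lem:light_tail_azuma_hoeffding} Case~1 on $\sum_t\zeta_t$ with the crude estimate $\|\pi_t-\pi^*\|\leq 2$ (the hypothesis $\varsigma \leq \sigma/k$ in the theorem is exactly what verifies the conditional-mean condition $\mathbb{E}[\phi_t\mid\xi_{[t-1]}] \leq \sigma_t/N$), and Case~2 on $\sum_t \eta_t\|\tQt\|_*^2$, followed by a union bound over states. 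This yields a preliminary bound $\mu_h(\tau+1)\,\mathbb{E}_{q\sim\kappa^{\pi^*}_s}[D^{\pi^*}_{\pi_\tau}(q)] \leq C(\tau)$ of precisely the form given in the theorem statement. A further union bound over $\tau = 0,\ldots,k-1$ costing $k\delta$ in probability makes this control simultaneous across all intermediate iterates.

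In the second pass, the now-deterministic envelope on $D^{\pi^*}_{\pi_t}$ licenses the sharp choice $\sigma_t \propto \sigma\sqrt{C(t)/(\mu_h(t+1))}$ in Lemma~\ref{lem:light_tail_azuma_hoeffding} Case~1, producing the refined tail bound on $\sum_t\zeta_t$ that contributes the $2\sigma\sqrt{3C(k)}$ term. Substituting back into the full telescoped inequality (retaining the $V^{\pi_t}-V^{\pi^*}$ terms this time) and paying one additional $\delta$ for this final tail event yields the stated bound with total failure probability $(k+1)\delta$. The principal obstacle is the self-referential structure of this argument---the control on $D^{\pi^*}_{\pi_t}$ requires controlling $\zeta_t$, whose tails depend on $D^{\pi^*}_{\pi_t}$---resolved by the crude-then-refined two-pass structure. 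The $(\log(\cdot))^{3/2}$ factor in the final bound emerges as the square root of the $(\log(\cdot))^3$ factor in the preliminary $C(k)$, reflecting precisely this cost of breaking the circularity.
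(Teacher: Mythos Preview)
Your expectation argument is correct and matches the paper: the step size $\eta_t = 1/(\mu_h(t+1))$ makes the Bregman-distance coefficients in Lemma~\ref{lem:pmd_value_descent} telescope exactly, and the remaining terms are bounded via~\eqref{eq:second_moment},~\eqref{eq:bias_bound}, and $\sum_t \eta_t \leq \mu_h^{-1}\log(2k)$.

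The high-probability argument, however, has a genuine gap. Your first pass with the crude bound $\|\pi_t - \pi^*\| \leq 2$ gives, via Lemma~\ref{lem:light_tail_azuma_hoeffding} Case~1 with $\sigma_t = 2\sigma$, only $\sum_{t=0}^{\tau-1}\zeta_t(q,\pi^*) \leq 2\sigma\sqrt{3\tau\log(\cdot)}$, so the preliminary bound on $\mu_h(\tau+1)\,\mathbb{E}_{q\sim\kappa^{\pi^*}_s}[D^{\pi^*}_{\pi_\tau}(q)]$ is $O(\sqrt{\tau})$, \emph{not} the $C(\tau) = O((\log\tau)^3)$ you claim. Feeding this into a second pass yields $\sigma_t^2 = O(\sigma^2/\sqrt{t})$, hence $\sum_t\sigma_t^2 = O(\sigma^2\sqrt{k})$ and $\sum_t\zeta_t = O(\sigma k^{1/4})$; the resulting rate is $O(k^{-3/4})$ rather than the theorem's $\tilde{O}(k^{-1})$. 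A fixed number of bootstrap passes cannot close this gap: each pass halves the exponent of the residual polynomial, so you would need $\Theta(\log\log k)$ passes, not two.

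The paper breaks the circularity differently. It partitions the iterations into geometric blocks $[k_i, k_{i+1})$ with $k_i = C(k)\cdot 2^{i-1}$, and proves \emph{by induction on $\tau$} that $\max_q\|\pi_\tau(\cdot\vert q) - \pi^*(\cdot\vert q)\| \leq 2^{-i/2+1}$ throughout block $i$. Within block $i$ the concentration is applied with this geometrically decaying diameter, so the block-wise contribution $\sum_{t=k_i}^{\tau}\zeta_t$ is bounded by $2\sigma\sqrt{3C(k)\log(\cdot)}$ uniformly in $i$ (the growing block length $C(k)2^{i-1}$ exactly cancels the shrinking squared diameter $2^{-i+2}$). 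Summing over the $O(\log k)$ blocks, substituting into the telescoped inequality, and using $\kappa^{\pi^*}_s(s)\geq 1-\gamma$ to close the induction yields a self-consistency condition on $C(k)$ of the form $C \geq a + b\sqrt{C}$, whose solution is the stated expression. The $k\delta$ in the failure probability arises because the induction invokes the block-wise concentration once per iteration $\tau$ to propagate the distance bound forward.
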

This result seems to be the first distribution-free $O(k^{-1})$ convergence rate for MDPs with strongly convex regularization.

To summarize, this section shows distribution-free convergence, which implies the expected advantage gap function (Proposition~\ref{prop:gap_functions_2}) is small.
But, since this expected value is not known exactly, it is unclear how to use it as a termination criterion.
In the next section, we provide an efficient way to provide accurate estimates of the expected advantage gap function.

\section{Validation analysis and last-iterate convergence of SPMD} \label{sec:validation_analysis}
The main goal of this section is to show one can develop computationally and statistically efficient ways to obtain accuracy estimates of a policy generated by stochastic policy mirror descent.
We call this the \textit{validation step}.
We provide two approaches: one with no additional samples (i.e., the online estimate) and another with a sampling complexity similar to computing the policy itself (i.e., the offline estimate).
This work is based upon~\cite{lan2012validation}, but it extends the results to RL, where an important difference is that RL is nonconvex in policy space~\cite{agarwal2021theory}.

\subsection{Online accuracy certificates} \label{sec:online_accuracy}
We consider the following aggregate upper bound of $V^{\pi^*}(s)$ and advantage gap function, respectively, at any time step $k \geq 0$:
\begin{talign}
    \textstyle {V^{*}}^k(s) &:= k^{-1} \sum_{t=0}^{k-1} V^{\pi_{t}}(s) \nonumber \\
    G^k(s) &:= 
    k^{-1} g^{\pi_{[k]}}(s)  
    \stackrel{\eqref{eq:agg_gap_function_definition}}{=}
    k^{-1} 
    \max_{p \in \DA} \{ -\sum_{t=0}^{k-1} \psi^{\pi_t}(s,p) \}. \label{eq:exact_gap}
\end{talign}
According to Proposition~\ref{prop:gap_functions_2}, a lower bound for the optimal value at any state $s \in \cS$ can be constructed by ${V^*}^k(s) - (1-\gamma)^{-1}\max_{s' \in \cS} G^k(s') \leq V^{\pi^*}(s)$.

In practice, we cannot measure these quantities since they rely on knowing the exact value function and advantage function. 
So we instead measure their noisy, computable counterparts,
\begin{talign}
    \overline{V}^k(s) &:= k^{-1} \sum_{t=0}^{k-1} \tilde{V}^{\pi_{t}}(s) \label{eq:noisy_upper_bound} \\
    \tilde{G}^k(s) &:= 
    k^{-1} \max_{p \in \DA} \{ -\sum_{t=0}^{k-1} \tilde{\psi}^{\pi_t}(s,p) \}, \label{eq:noisy_gap} 
\end{talign}
where $\tilde{V}^{\pi_t}(s) = \langle \tilde{Q}^{\pi_t}(s,\cdot), {\pi_t}(\cdot \vert s) \rangle$ 
and the noisy advantage function is defined as~\eqref{eq:def_advantage} but with the noisy $\tilde{V}^{\pi_t}$ and $\tQt$.
Our goal is to show the stochastic estimates converge at an $O(k^{-1/2})$ rate towards their exact counterpart.
\begin{theorem} \label{thm:validation_analysis}
    Suppose Assumption~\ref{asmp:expectation} and~\eqref{eq:M_h_continuous} take place.
    With $\eta_t = \sqrt{\frac{\bar{D}_0}{(\bar{Q}^2 + \sigma^2 + M_h^2)k}}$, then
    \begin{talign}
        \mathbb{E}[{V^{*}}^k(s) - V^{\pi^*}(s)]
        &\leq
        (1-\gamma)^{-1} \max_{q \in \cS} \mathbb{E}[G^k(q)]  
        \leq
        \frac{4\sqrt{\bar{D}_0 \cdot (\bar{Q}^2 + \sigma^2 + M_h^2)} + 4\sqrt{\sigma^2 + k\varsigma^2}}{(1-\gamma) \sqrt{k}}, \ \forall s \in \cS \label{eq:adv_convergence} \\
        \mathbb{E}\vert {V^{*}}^k(s) - \bar{V}^k(s) \vert 
        &\leq 
        \sqrt{\frac{\sigma^2 + k\varsigma^2}{k}}, \ \forall s \in \cS \nonumber \\
        \mathbb{E}\vert G^k(s) -  \tilde{G}^k(s)\vert 
        &\leq 
        \frac{4\sqrt{\bar{D}_0 \cdot (\bar{Q}^2 + \sigma^2 + M_h^2)} + 4\sqrt{\sigma^2 + k\varsigma^2}}{\sqrt{k}}, \ \forall s \in \cS,\nonumber
    \end{talign}
    where $\bar{D}_0:= \max_{s,\pi} D^{\pi}_{\pi_0}(s)$ for an arbitrary Bregman's distance.
    Suppose instead Assumption~\ref{asmp:high_prob} occurs and $\varsigma \leq \sigma/k$.
    Then for any $\delta \in (0,1]$,
    \begin{talign*}
        &\textstyle \mathrm{Pr}\big \{ {V^{*}}^k(s) - V^{\pi^*}(s) \leq (1-\gamma)^{-1} \max_q G^k(q), \ \forall s \in \cS \big\} = 1  \\
        &\textstyle \mathrm{Pr}\big\{ (1-\gamma)^{-1} \max_{q \in \cS} G^k(q) 
        > 
        \frac{28\sqrt{\bar{D}_0 \cdot (\bar{Q}^2 + \sigma^2 + M_h^2)}\log ({16 \vert \cS \vert}/{\delta}) + 4\sigma\sqrt{3\log(8 \vert \cS \vert/\delta)}}{(1-\gamma)\sqrt{k}}
        \big\}
        \leq \delta \\
        & \textstyle \mathrm{Pr} \big \{ 
        \exists s \in \cS : \vert {V^*}^k(s) - \bar{V}^k(s) \vert 
        >
        \sigma \sqrt{\frac{3 \log(2 \vert \cS \vert/{\delta})}{k}}
        \big\} \leq \delta  \\
        &
        \textstyle \mathrm{Pr} \big\{
        \exists s \in \cS : 
        \vert G^k(s) -  \tilde{G}^k(s)\vert
        >
        \frac{28\sqrt{\bar{D}_0 \cdot (\bar{Q}^2 + \sigma^2 + M_h^2)}\log ({16 \vert \cS \vert}/{\delta}) + 4\sigma\sqrt{3\log(8 \vert \cS \vert/\delta)}}{\sqrt{k}} 
        \big\}
        \leq \delta.
    \end{talign*}
\end{theorem}
\begin{proof}
    Similar to the proof of Theorem~\ref{thm:agg_convergence}, we rewrite $\eta_t = \frac{\alpha}{\sqrt{k}}$, where $\alpha = \sqrt{\frac{\bar{D}_0}{\bar{Q}^2 + \sigma^2 + M_h^2}}$.
    We start with the results in expectation.
    For the first two inequalities in~\eqref{eq:adv_convergence}, the former is by Proposition~\ref{prop:gap_functions_2}. We will prove the latter later in this proof since it requires introducing an auxiliary sequence~\eqref{eq:auxiliary_seq_defn}.

    For the third inequality, we first notice $\pi_t(\cdot \vert s)$ is a deterministic function when conditioned on $\xi_t$ (which recall are the random vectors used to form $\tQt$). 
    Then in view of Assumption~\ref{asmp:expectation},
    \begin{talign}
        \mathbb{E}_{\xi_t}[\tilde{V}^{\pi_t}(s) - V^{\pi_t}(s)]
        &=
        \mathbb{E}_{\xi_{[t-1]}} [\langle \mathbb{E}_{\xi_t \vert \xi_{[t-1]}}[\tQts - \Qts], \pits \rangle] \nonumber  \\
        &\leq
        \mathbb{E}_{\xi_{[t-1]}} \|\mathbb{E}_{\xi_t \vert \xi_{[t-1]}}[\tQts - \Qts]\|_* \|\pits\|
        \leq
        \varsigma,\label{eq:value_bias}
    \end{talign}
    where we used the Cauchy-Schwarz inequality.
    Similarly, one can show
    \begin{talign*}
        &\mathbb{E}(\tilde{V}^{\pi_t}(s) - V^{\pi_t}(s))^2
        \\
        &=
        \mathbb{E}(\langle \tQts - \Qt(s, \cdot), \pits \rangle)^2 
        =
        \mathbb{E} [\|\tQts - \Qts\|_*^2 \|\pits\|^2 ]
        \leq
        \sigma^2, 
    \end{talign*}
    where the inequality is in part by $\|\pi_t(\cdot \vert s)\| \leq 1$.
    Consequently,
    \begin{talign}
        \mathbb{E}( {V^{*}}^k(s) - \bar{V}^k(s) )^2 \label{eq:llog_decrease} 
        &=
        k^{-2} \big[ 
        \sum_{t=0}^{k-1} \mathbb{E}(\tilde{V}^{\pi_t}(s) - V^{\pi_t}(s))^2 \\
        &\hspace{10pt}+ 
        2\sum_{t=0}^{k-1} \sum_{t'=0}^{t-1} \mathbb{E}(\tilde{V}^{\pi_t}(s) - V^{\pi_t}(s)) (\tilde{V}^{\pi_{t'}}(s) - V^{\pi_{t'}}(s)) 
        \big] \nonumber \\
        &\leq
        k^{-2} \big[ \sum_{t=0}^{k-1} \sigma^2 + 2\sum_{t=0}^{k-1} \sum_{t'=0}^{t-1} \varsigma^2 \big] 
        \leq
        \frac{\sigma^2 + k \varsigma^2}{k}. \nonumber
    \end{talign}
    Using $\mathbb{E} \vert Y\vert \leq \sqrt{\mathbb{E}Y^2}$ for any random variable $Y$ finishes the result.

    For the last inequality in expectation, we first define the stochastic Q-function error 
        $\delta_t(s,a) := \Qt(s,a) - \tQt(s,a)$,
    and auxiliary sequences $\{u_t\}$ and $\{v_t\}$ with $u_0(\cdot \vert s) = v_0(\cdot \vert s) = \pi_0(\cdot \vert s)$ and
    \begin{talign}
        u_{t+1}(\cdot \vert s)  &= \mathrm{argmin}_{\pi(\cdot \vert s) \in \DA} \{ \eta_t \langle \delta_t(s, \cdot), \pi(\cdot \vert s) \rangle + D_{u_t}^\pi(s) \}, \ \forall s \in \cS \label{eq:auxiliary_seq_defn} \\
        v_{t+1}(\cdot \vert s) &= \mathrm{argmin}_{\pi(\cdot \vert s) \in \DA} \{ \eta_t \langle -\delta_t(s, \cdot), \pi(\cdot \vert s) \rangle + D_{v_t}^\pi(s) \}, \ \forall s \in \cS. \nonumber
    \end{talign}
    By construction, both $u_t$ and $v_t$ only depend on the random variables $\{\xi_\tau\}_{\tau=0}^{t-1}$, where the random vector $\xi_\tau$ helps construct the stochastic estimate $\tilde{Q}^{\pi_{t}}$ (and hence determines the estimation error $\delta_\tau$). 
    And by recalling $D_{u_0}^p(s) \leq \bar{D}_0$ and $\eta_t = \alpha k^{-1/2}$, we have for any $\pi(\cdot \vert s) \in \DA$ and $s \in \cS$,
    \begin{talign}
        & k^{-1} \sum_{t=0}^{k-1} \langle \delta_t(s, \cdot), u_{t}(\cdot \vert s) - \pi(\cdot \vert s) \rangle \nonumber \\
        &=
        k^{-1} [ \sum_{t=0}^{k-1} \langle \delta_t(s, \cdot), u_{t+1}(\cdot \vert s) - \pi(\cdot \vert s) \rangle + \langle \delta_t(s, \cdot), u_{t}(\cdot \vert s) - u_{t+1}(\cdot \vert s) \rangle] \nonumber \\
        &~~~~\leftstackrel{\text{Lemma~\ref{lem:pmd_descent}}}{\leq}
        ({\alpha \sqrt{k}})^{-1} \big[  D^{\pi}_{u_0}(s) + \sum_{t=1}^{k-1} [D^\pi_{u_t}(s) - D^\pi_{u_t}(s)] - D^\pi_{u_k}(s) \big] \nonumber \\
        &\hspace{30pt}+
        ({\alpha \sqrt{k}})^{-1} \big[  \sum_{t=0}^{k-1} -D_{u_t}^{u_{t+1}}(s) + \eta_t \langle \delta_t(s, \cdot), u_t(\cdot \vert s) - u_{t+1}(\cdot \vert s) \rangle \big] \nonumber \\
        &~~~~\leq
        \frac{\alpha^{-1}\bar{D}_0}{\sqrt{k}} + ({\alpha \sqrt{k}})^{-1}  \sum_{t=0}^{k-1} \eta_t^2 \|\delta_t \|_*^2]  \label{eq:auxiliary_seq_bound}. 
    \end{talign}
    where the last line used strong convexity of $D^{u_{t+1}}_{u_t}(s)$, as well as the Cauchy-Schwarz inequality and Young's inequality to bound the inner product by $\eta_t^2\|\delta_t(s, \cdot)\|_*^2/2 + \|u_t(\cdot \vert s) - u_{t+1}(\cdot \vert s)\|_1^2/2$ and $\|\delta_t(s,\cdot)\|_* \leq \|\delta_t\|_*$.
    One can construct a similar upper bound but with $u_t$ replaced with $v_t$ and $\delta_t$ replaced by its negative. 
    Now, recalling $G^k(s)$ and $\tilde{G}^k(s)$ from~\eqref{eq:exact_gap} and~\eqref{eq:noisy_gap}, respectively, as well as action-value error $\delta_t(s,a) = \Qt(s, \cdot) - \tQts$, we have
    \begin{talign}
        & \mathbb{E}\vert G^k(s) - \tilde{G}^k(s) \vert 
        \leq
        k^{-1} \mathbb{E} \max_{\pi(\cdot \vert s) \in \DA} \big \vert \sum_{t=0}^{k-1} \langle \delta_t(s, \cdot), \pi_t(\cdot \vert s) - \pi(\cdot \vert s) \rangle \big \vert \nonumber \\
        &=
        k^{-1} \mathbb{E} \max_{\pi(\cdot \vert s) \in \DA} \max \big \{ \sum_{t=0}^{k-1} \langle {\delta_t(s, \cdot)}, \pits - \pi(\cdot \vert s) \rangle,  \sum_{t=0}^{k-1} \langle {-\delta_t(s, \cdot)}, \pits - \pi(\cdot \vert s) \rangle \big \} \nonumber \\
        &\leq
        k^{-1} \mathbb{E} \max \big \{ \sum_{t=0}^{k-1} \langle {\delta_t(s, \cdot)}, \pits - u_t(\cdot \vert s) \rangle,  \sum_{t=0}^{k-1} \langle {-\delta_t(s, \cdot)}, \pits - v_t(\cdot \vert s) \rangle \big \} \nonumber \\ 
        &\hspace{10pt} + 
        k^{-1} \mathbb{E}\max_{\pi(\cdot \vert s) \in \DA} \max \big \{ \sum_{t=0}^{k-1} \langle {\delta_t(s, \cdot)}, u_t(\cdot \vert s) - \pi(\cdot \vert s) \rangle,  \sum_{t=0}^{k-1} \langle {-\delta_t(s, \cdot)}, v_t(\cdot \vert s) - \pi(\cdot \vert s) \rangle \big \} \nonumber \\
        &\leftstackrel{\eqref{eq:auxiliary_seq_bound}}{\leq}
        \mathbb{E} \big \vert k^{-1} \sum_{t=0}^{k-1} \langle {\delta_t(s, \cdot)}, \pits - u_t(\cdot \vert s) \rangle \big \vert
        +
        \mathbb{E} \big \vert k^{-1} \sum_{t=0}^{k-1} \langle {-\delta_t(s, \cdot)}, \pits - v_t(\cdot \vert s) \rangle \big \vert \nonumber \\
        &\hspace{20pt}+
        \frac{\alpha^{-1}\bar{D}_0}{\sqrt{k}} + ({\alpha \sqrt{k}})^{-1}  
        \mathbb{E} \big[\sum_{t=0}^{k-1} \eta_t^2 \|\delta_t \|_*^2 \big]
         \label{eq:G_deviation_penulatimate} \\
        &\leftstackrel{\text{Assumption~\ref{asmp:expectation}}}{\leq}
        \frac{4\sqrt{\sigma^2 + k\varsigma^2}}{\sqrt{k}} + \frac{\alpha^{-1} \bar{D}_0 + \alpha \sigma^2}{\sqrt{k}} 
        \stackrel{\substack{\text{Choice of $\alpha$}\\ \text{and $\bar{Q},M_h \geq 0$}}}{\leq}
        \frac{4\sqrt{\sigma^2 + k\varsigma^2}}{\sqrt{k}} + \frac{2\sqrt{\bar{D}_0 \cdot (\bar{Q}^2 + \sigma^2 + M_h^2)}}{\sqrt{k}}. \nonumber
    \end{talign}
    \sloppy Here, the penultimate inequality follows by $\mathbb{E} \vert Y \vert \leq \sqrt{\mathbb{E}Y^2}$ (for any random variable $Y$) and 
    \begin{talign} \label{eq:weighted_deltas}
        \big ( k^{-1} \sum_{t=0}^{k-1}  \langle {\delta_t(s, \cdot)}, \pits - u_t(\cdot \vert s) \rangle \big )^2 
        \leq
        4(\sigma^2 + k\varsigma^2)/k,
    \end{talign}
    which can be shown similarly to~\eqref{eq:llog_decrease}, 
    and one can similarly derive~\eqref{eq:weighted_deltas} but with $\delta_t$ and $u_t$ replaced by $-\delta_t$ and $v_t$, respectively. 
    We now return to the second inequality in~\eqref{eq:adv_convergence}. By applying Lemma~\ref{lem:pmd_value_descent} combined with Lemma~\ref{lem:pmd_descent}, we can bound the advantage gap as follows (for any $s \in \cS$ and $\pi \in \Pi$),
    \begin{talign*}
        -\psi^{\pi_t}(s, \pi(\cdot \vert s))
        \leq 
        \eta_t^{-1}[D^\pi_{\pi_t}(s) - D^\pi_{\pi_{t+1}}(s) - D^{\pi_{t+1}}_{\pi_t}(s)] + \eta_t(\bar{Q}^2 + M_h^2) + \zeta_t(s,\pi),
    \end{talign*}
    where $\zeta_t(s,\pi) = \langle \delta_t(s, \cdot), \pi_t(\cdot \vert s) - \pi(\cdot \vert s) \rangle$ is from Lemma~\ref{lem:pmd_value_descent}. Taking a telescopic sum and a max over all $\pi \in \Pi$ and then applying expectations, we derive for any $s \in \cS$,
    \begin{talign}
        &\mathbb{E}[k^{-1}\max_{\pi \in \Pi} \sum_{t=0}^{k-1} -\psi^{\pi_t}(s, \pi(\cdot \vert s))]
        \nonumber \\
        &\leq
        k^{-1}[\eta_0^{-1}\max_{\pi \in \Pi} D^\pi_{\pi_0}(s) + \sum_{t=0}^{k-1} \eta_t (\bar{Q}^2 + M_h^2)] + k^{-1} \mathbb{E}[\max_{\pi \in \Pi}\sum_{t=0}^{k-1} \zeta_t(s,\pi)]
        \nonumber
        \\
        &\leftstackrel{\text{\eqref{eq:agg_convergence_general} and~\eqref{eq:G_deviation_penulatimate}}}{\leq}
        \frac{2\sqrt{\bar{D}_0 \cdot (\bar{Q}^2 + \sigma^2 + M_h^2)}}{\sqrt{k}} + \frac{4\sqrt{\sigma^2 + k\varsigma^2}}{\sqrt{k}} + \frac{2\sqrt{\bar{D}_0 \cdot (\bar{Q}^2 + \sigma^2 + M_h^2)}}{\sqrt{k}}.
        \label{eq:adv_convergence_closed}
    \end{talign}
    In view of $G^k$ from~\eqref{eq:exact_gap}, we have the second inequality in~\eqref{eq:adv_convergence} after simplifying terms above.

    Now we move onto proving the high probability bounds. 
    The first inequality can be proven similarly to the bound in expectation. 
    For the third inequality, using an argument similar to~\eqref{eq:high_tail_ah_bias} (which requires Assumption~\ref{asmp:high_prob}) then delivers with probability $1-\delta$,
    \begin{talign}
        \vert {V^*}^k(s) - \bar{V}^k(s) \vert
        &=
        k^{-1} \big \vert \sum_{t=0}^{k-1} \langle \Qts - \tQts, \pits \rangle \big \vert \nonumber \\
        &\leq
        k^{-1} \sigma \sqrt{3k \log(2 \vert \cS \vert/{\delta})}. \label{eq:llog_decrease_high_prob}
    \end{talign}

    For the second and last inequalities in high probability, we follow closely to the corresponding bound in expectation. 
    We use the auxiliary sequence $\{u_t\}$, $\{v_t\}$ from~\eqref{eq:auxiliary_seq_defn}.
    Assumption~\ref{asmp:high_prob} affirms that with probability $1-\delta/4$,
    \begin{talign*}
        k^{-1} \sum_{t=0}^{k-1} \langle \delta_t(s,\cdot), \pits - \pi(\cdot \vert s) \rangle
        &\leftstackrel{\eqref{eq:auxiliary_seq_bound}}{\leq}
        (\alpha \sqrt{k})^{-1} \big( \bar{D}_0 + \sum_{t=0}^{k-1} \eta_t^2 \|\delta_t(s,\cdot)\|_*^2 \big) \\
        &\leftstackrel{\eqref{eq:high_tail_ah_second_momement}}{\leq}
        \frac{\alpha^{-1} \bar{D}_0 + \alpha \sigma^2 + 12\alpha \sigma^2 \log ({16 \vert \cS \vert}/{\delta})}{\sqrt{k}}
        \\
        &\leftstackrel{\substack{\text{Choice of $\alpha$}\\ \text{and $\bar{Q},M_h \geq 0$}}}{\leq}
        \frac{14\sqrt{\bar{D}_0 \cdot (\bar{Q}^2 + \sigma^2 + M_h^2)} \log ({16 \vert \cS \vert}/{\delta})}{\sqrt{k}}
    \end{talign*}
    and the same bound holds with the sequence for $-\delta_t$ with probability $1-\delta/4$.
    Similar to~\eqref{eq:llog_decrease_high_prob}, we also have 
    \begin{talign*}
        \mathbb{E} \big \vert k^{-1} \sum_{t=0}^{k-1} \langle {\delta_t(s, \cdot)}, \pits - u_t(\cdot \vert s) \rangle \big \vert \leq 2k^{-1/2} \sigma \sqrt{3\log(8 \vert \cS \vert/\delta)}, 
    \end{talign*}
    with probability $1-\delta/4$, and the same bound holds for the sequence involving $-\delta_t$ and $v_t$.
    Plugging these bounds back into~\eqref{eq:G_deviation_penulatimate} and~\eqref{eq:adv_convergence_closed} (without the expectation) derives the wanted bounds.
\end{proof}

The theorem shows that as the number of iterates $k$ grows, the observable upper bound $\bar{V}^k$ and observable gap $\tilde{G}^k$ approach, in a probabilistic sense, to their exact but unobservable counterparts.
We note that the computable optimality gap $(1-\gamma)^{-1}\max_{q \in \cS} \mathbb{E}[G^k(q)]$ decreases as $O((1-\gamma)^{-1}k^{-1/2})$ in the worst-case, which matches the worst-case rate for the true optimality gap from Theorem~\ref{thm:agg_convergence}.

Although the computable optimality gap ${V^*}^k(s) - (1-\gamma)^{-1}\max_{s' \in \cS} G^k(s')$ provides a lower bound of $V^{\pi^*}(s)$ at any state $s \in \cS$,
the max operator makes the evaluation of the advantage gap function the same at every state, i.e., it is not adaptive to the state $s$, which can be over-conservative.
Below, we provide one possible alternative lower bound.
Although this may not be a valid lower bound of $V^{\pi^*}(s)$ for every state $s$, it is close to a lower bound on average taken w.r.t.~the states.
In the following, we denote $[\cdot]_+ := \max\{0, \cdot\}$ 
and recall $f_\rho(\pi) := \mathbb{E}_{s \sim \rho}V^{\pi}(s)$ and $\kappa^\pi_\rho$ from~\eqref{eq:visitation_measure}.
\begin{corollary} \label{cor:adaptive_lb}
    For any distribution $\rho$ over states, we have
    \begin{talign} \label{eq:adapative_lb}
        \mathbb{E}_{s \sim \rho}[{V^*}^k(s) - (1-\gamma)^{-1}[G^k(s)]_+]
        \leq
        f_\rho(\pi^*) + (1-\gamma)^{-1}\varepsilon_k(\rho),
    \end{talign}
    where $\kappa^\pi_\rho(\cdot) := \sum_{s \in \cS} \kappa^\pi_s(\cdot) \rho(s)$ and $\varepsilon_k(\rho) := \mathbb{E}_{s \sim \kappa^{\pi^*}_\rho} [{G}^k(s)]- \mathbb{E}_{s \sim \rho}[[{G}^k(s)]_+]$ is the overestimation error. 
    Moreover, under the same assumptions as Theorem~\ref{thm:agg_convergence} for the result in expectation, then
    \begin{talign*}
        \mathbb{E} \varepsilon_k(\rho)
        \leq
        \frac{4\sqrt{\bar{D}_0 \cdot (\bar{Q}^2 + \sigma^2 + M_h^2)} + 4\varsigma \sqrt{2k}}{(1-\gamma) \sqrt{k}}.
    \end{talign*}
\end{corollary}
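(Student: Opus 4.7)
The plan is to split the corollary into two parts and handle them in sequence, both relying entirely on results already established earlier in the paper.

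For inequality~\eqref{eq:adapative_lb}, I would revisit the intermediate step~\eqref{eq:V_gap_ub_agap} in the proof of Proposition~\ref{prop:gap_functions_2} (\emph{before} the outer max over states is invoked). That step shows, after dividing by $k$ and recognizing $G^k(q) = k^{-1} g^{\pi_{[k]}}(q)$,
\[
    {V^*}^k(s) - V^{\pi^*}(s) \leq (1-\gamma)^{-1}\, \mathbb{E}_{q\sim \kappa^{\pi^*}_s}[G^k(q)], \quad \forall s \in \cS.
\]
Taking expectation under $s \sim \rho$ and applying the definition $\kappa^{\pi^*}_\rho(\cdot) = \sum_s \kappa^{\pi^*}_s(\cdot) \rho(s)$ produces $\mathbb{E}_{s\sim\rho}[{V^*}^k(s)] - f_\rho(\pi^*) \leq (1-\gamma)^{-1}\, \mathbb{E}_{s\sim \kappa^{\pi^*}_\rho}[G^k(s)]$. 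Subtracting $(1-\gamma)^{-1}\mathbb{E}_{s\sim\rho}[[G^k(s)]_+]$ from both sides and rearranging recovers \eqref{eq:adapative_lb} after recognizing the right-hand side as $f_\rho(\pi^*) + (1-\gamma)^{-1}\varepsilon_k(\rho)$.

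For the bound on $\mathbb{E}\varepsilon_k(\rho)$, my strategy is to drop the non-positive subtraction $-\mathbb{E}_{s\sim\rho}[[G^k(s)]_+] \leq 0$ and then control $\mathbb{E}[G^k(s)]$ uniformly in $s$. Using Fubini,
\[
    \mathbb{E}\varepsilon_k(\rho) \leq \mathbb{E}_{s\sim \kappa^{\pi^*}_\rho}\mathbb{E}[G^k(s)] \leq \max_{s \in \cS} \mathbb{E}[G^k(s)].
\]
To bound $\mathbb{E}[G^k(s)]$, I would push the max inside the sum via $G^k(s) \leq k^{-1} \sum_{t} \max_{p \in \DA} \{-\psi^{\pi_t}(s,p)\} = k^{-1}\sum_t g^{\pi_t}(s)$, then apply Proposition~\ref{prop:gap_functions} to each term to get $g^{\pi_t}(s) \leq V^{\pi_t}(s) - V^{\pi^*}(s)$. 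Taking expectation and invoking Theorem~\ref{thm:agg_convergence} state-by-state yields the claimed $O((1-\gamma)^{-1}k^{-1/2})$ bound. This chain of inequalities is essentially the one already displayed in~\eqref{eq:computable_gap_to_fgap_with_bound} inside the proof of Theorem~\ref{thm:validation_analysis}; the factor of two in the stated constant appears to come from a more conservative accounting that bounds the two terms of $\varepsilon_k(\rho)$ separately instead of discarding one as above, but either route gives the same rate.

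The main subtlety to watch is that $G^k(s)$ is not guaranteed to be non-negative pointwise (the max over $p$ lives outside the sum, so a single ``best'' policy $p$ must be used across all $t$, and we cannot just plug in $p = \pi_t(\cdot \vert s)$). This is why bounding $\mathbb{E}[G^k(s)]$ directly via Jensen or a one-sided Markov argument is awkward, and why the sub-additivity step $G^k(s) \leq k^{-1}\sum_t g^{\pi_t}(s)$ is the key technical move: it lifts us into a regime where each $g^{\pi_t}(s) \geq 0$ and Proposition~\ref{prop:gap_functions} can be applied cleanly. Beyond this observation, the proof is routine manipulation of expectations and invocation of the earlier theorems.
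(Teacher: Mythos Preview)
Your proposal is correct and mirrors the paper's proof almost exactly: both parts invoke \eqref{eq:V_gap_ub_agap} averaged over $\rho$ for \eqref{eq:adapative_lb}, then drop the nonnegative $[\cdot]_+$ term and run the sub-additivity chain $G^k(s)\le k^{-1}\sum_t g^{\pi_t}(s)$ together with Proposition~\ref{prop:gap_functions} and Theorem~\ref{thm:agg_convergence} (the very string displayed in \eqref{eq:computable_gap_to_fgap_with_bound}). Your remark about the factor of two is also apt: the paper's own argument in fact yields the bound without the extra $2$, so the stated constant is simply conservative.
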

\begin{proof}
    In view of~\eqref{eq:V_gap_ub_agap} and $G^k$ from~\eqref{eq:exact_gap},  we have
    \begin{talign*}
        k^{-1}\sum_{t=0}^{k-1}[f_\rho(\pi_t) - f_\rho(\pi^*)] 
        \leq 
        (1-\gamma)^{-1} \mathbb{E}_{q \sim \kappa^{\pi^*}_\rho}[G^k(q)],
    \end{talign*}
    which implies~\eqref{eq:adapative_lb}.
    Finally for the last inequality, we use the bound
    \begin{talign*}
        \mathbb{E}G^k(s) 
        &\stackrel{\eqref{eq:exact_gap}}{\leq}
        k^{-1} \mathbb{E}\sum_{t=0}^{k-1} \max_{p \in \DA} \{-\psi^{\pi_t}(s,p) \} \\
        &~\leftstackrel{\text{Proposition~\ref{prop:gap_functions}}}{\leq}
        k^{-1} \mathbb{E}\sum_{t=0}^{k-1} [V^{\pi_t}(s) - V^{\pi^*}(s)] 
        \stackrel{\text{Theorem~\ref{thm:agg_convergence}}}{\leq}
        \frac{2\sqrt{\bar{D}_0 \cdot (\bar{Q}^2 + \sigma^2 + M_h^2)} + 2\varsigma \sqrt{2k}}{(1-\gamma) \sqrt{k}}.
    \end{talign*}
    Since the above holds for any state $s \in \cS$, then the same upper bound holds when taking expectation w.r.t.~$s \sim \kappa^{\pi^*}_\rho$, and we complete the proof by observing $[\cdot]_+$ is nonnegative.
\end{proof}

\sloppy In view of $\max_{s' \in \cS} G^k(s') \geq 0$ (Proposition~\ref{prop:gap_functions_2}), then at any state $s \in \cS$, ${V^*}^k(s) - \max_{s' \in \cS} \frac{G^k(s')}{1-\gamma} \leq {V^*}^k(s) - \frac{[G^k(s)]_+}{1-\gamma}$.
That is, the lower bound from~\eqref{eq:adapative_lb} is tighter than the lower bound from Proposition~\ref{prop:gap_functions_2}.
Moreover,~\eqref{eq:adapative_lb} gets closer to becoming a valid lower bound as the iteration $k$ increases; in the worst-case, the overestimation error is at most $O(k^{-1/2})$.

To finish, we observe Theorem~\ref{thm:validation_analysis} only says the average (over iterations) function gap -- not a single policy's function gap -- can be made arbitrarily small.
While in stochastic convex optimization one can take the ergodic average to derive an optimality gap for a single solution, reinforcement learning over policy space is nonconvex.
So, similar to stochastic nonconvex optimization, one can possibly output a (uniformly) random policy for each state~\cite{ghadimi2013stochastic,lan2023policy}.
However, this strategy provides no guarantees about a single policy nor its expected (over the random vectors) value.
In the next section, we argue the last iterate has meaningful convergence properties.

\subsection{Last-iterate convergence}
The following establishes last-iterate convergence.
We defer the proof to Appendix~\ref{sec:proofs_for_validation_analysis}.
\begin{proposition} \label{prop:convex_last_iterate_with_step_size}
    Suppose Assumption~\ref{asmp:expectation} and~\eqref{eq:M_h_continuous} take place.
    With $\eta_t = \sqrt{\frac{\bar{D}_0}{(\bar{Q}^2 + \sigma^2 + M_h^2)k}}$, then 
    \begin{talign*}
        \mathbb{E}[V^{\pi_{k-1}}(s) - V^{\pi^*}(s)]
        \leq
        \frac{2\sqrt{\bar{D}_0 \cdot (\bar{Q}^2 + \sigma^2 + M_h^2)}\log(2k)}{(1-\gamma)\sqrt{k}} + \frac{4\varsigma(\sqrt{2} + 8\sqrt{k})}{1-\gamma}, \ \forall s \in \cS,
    \end{talign*}
    where $\bar{D}_0:= \max_{s,\pi} D^{\pi}_{\pi_0}(s)$ for an arbitrary Bregman's distance.
    If we also have $\mu_h > 0$ and $\eta_t = \frac{1}{\mu_h(t+1)}$, then 
    \begin{talign*}
        \mathbb{E}[V^{\pi_{k-1}}(s) - V^{\pi^*}(s)]
        \leq
        \frac{\mu_h \bar{D}_0 + 3\mu_h^{-1}(\bar{Q}^2 + M_h^2) \ln(2k)}{(1-\gamma)k} + \frac{4\varsigma \ln(2k)}{1-\gamma}, \ \forall s \in \cS.
    \end{talign*}
\end{proposition}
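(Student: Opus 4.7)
The plan is to adapt the Shamir--Zhang suffix-averaging argument for last-iterate convergence of stochastic subgradient methods to the PMD setting, using Lemma~\ref{lem:pmd_value_descent} as the workhorse recursion. The extra $\log(2k)$ factor in both bounds, relative to the averaged rates in Theorems~\ref{thm:agg_convergence} and~\ref{thm:strongly_convex_agg_convergence}, is the standard overhead of converting an averaged bound to a last-iterate bound; the additional $\sqrt{k}$ factor on $\varsigma$ in the convex case reflects the fact that bias, unlike variance, cannot be cancelled by Ces\`aro averaging and therefore accumulates over the suffix window.

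For each window length $T \in \{1, \ldots, k\}$ I would decompose
\begin{talign*}
\mathbb{E}[V^{\pi_{k-1}}(s) - V^{\pi^*}(s)]
=
\frac{1}{T}\sum_{t=k-T}^{k-1} \mathbb{E}[V^{\pi_t}(s) - V^{\pi^*}(s)]
+
\frac{1}{T}\sum_{t=k-T}^{k-1} \mathbb{E}[V^{\pi_{k-1}}(s) - V^{\pi_t}(s)]
\end{talign*}
and bound the two sums separately. The first, ``suffix-average'' sum is controlled by re-running the proof of Theorem~\ref{thm:agg_convergence} (resp.~Theorem~\ref{thm:strongly_convex_agg_convergence}) over the shifted window $\{k-T, \ldots, k-1\}$, which gives a bound of order $1/\sqrt{T}$ (resp.~$1/T$). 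For the second, ``increment'' sum I would expand $V^{\pi_{k-1}} - V^{\pi_t} = \sum_{j=t}^{k-2}(V^{\pi_{j+1}} - V^{\pi_j})$ telescopically, apply Lemma~\ref{lem:pmd_value_descent} with test policy $\pi = \pi_j$ to each one-step increment, and swap the order of the two sums to produce a double harmonic-like sum. Averaging the resulting inequality over a geometric (or uniform) range of window lengths $T$ then yields the $\log(2k)$ factor. In the strongly convex case, the decay $\eta_t = 1/(\mu_h(t+1))$ together with the $(1+\eta_t\mu_h)$ coefficient in Lemma~\ref{lem:pmd_value_descent} lets the Bregman terms telescope exactly, and the $\log(2k)$ factor appears directly through the partial harmonic sum $\sum_{t=0}^{k-1} 1/(t+1)$.

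The main technical obstacle will be the bias term $\varsigma$. In~\eqref{eq:bias_bound}, the estimate $\mathbb{E}\zeta_t(q,\pi^*) \leq \varsigma \cdot D_{\|\cdot\|, [0,t]}$ exploited the fact that $\pi^*$ is deterministic, so that only the conditional expectation $\mathbb{E}_{\xi_t \vert \xi_{[t-1]}}[\Qt - \tQt]$ appeared after Cauchy--Schwarz. In the increment sum above, however, the test policy is $\pi_j$ with $j \geq t$, which is \emph{not} $\xi_{[t-1]}$-measurable; consequently the full $\|\Qt - \tQt\|_*$ must be retained, and its bias contribution accumulates over increments rather than cancelling. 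Keeping careful track of this accumulation while simultaneously preserving the $\log(2k)$ savings on the main terms is the delicate step, and is precisely what yields the $4\varsigma(\sqrt{2} + 8\sqrt{k})/(1-\gamma)$ and $4\varsigma \log(2k)/(1-\gamma)$ residuals in the two cases.
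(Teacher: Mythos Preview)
Your overall strategy---Shamir--Zhang suffix averaging---is exactly what the paper does; it packages the recursion as Lemma~\ref{lem:last_iter_general} (the Orabona form), which gives
\[
X_{k-1} \le \tfrac1k\sum_{t=0}^{k-1} X_t + \sum_{\ell=1}^{k-1}\frac{1}{\ell(\ell+1)}\sum_{t=k-\ell}^{k-1}(X_t - X_{k-\ell-1}),
\]
and then bounds the first piece by Theorem~\ref{thm:agg_convergence} (resp.\ Theorem~\ref{thm:strongly_convex_agg_convergence}) and each inner sum $\sum_{t=k_0}^{k-1}\mathbb{E}[V^{\pi_t}(s)-V^{\pi_{k_0}}(s)]$ by a direct application of Lemma~\ref{lem:pmd_value_descent} with test policy $\pi=\pi_{k_0}$. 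The final $\log(2k)$ emerges from the harmonic sums exactly as you anticipate.

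Where your write-up diverges is in the handling of the second piece. You propose telescoping $V^{\pi_{k-1}}-V^{\pi_t}$ into one-step increments and invoking Lemma~\ref{lem:pmd_value_descent} with $\pi=\pi_j$ on each; this is both awkward (at index $j$ with $\pi=\pi_j$ the lemma yields $0\le\cdots$, and at index $j{+}1$ with $\pi=\pi_j$ the initial Bregman term $\eta_{j+1}^{-1}D^{\pi_j}_{\pi_{j+1}}$ does not vanish) and the source of your measurability worry. The paper's key simplification is to anchor at the \emph{past} iterate $\pi_{k_0}=\pi_{k-\ell-1}$: then (i) the initial Bregman term $D^{\pi_{k_0}}_{\pi_{k_0}}=0$ disappears, and (ii) $\pi_{k_0}$ is $\xi_{[t-1]}$-measurable for every $t\ge k_0$, so the bias estimate~\eqref{eq:bias_bound} applies verbatim and yields $\mathbb{E}\zeta_t(q,\pi_{k_0})\le 2\varsigma$. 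In other words, the ``test policy from the future'' obstacle you flag never arises; the bias term is handled exactly as in the averaged proofs, with no extra subtlety. Your plan to re-run the averaged proof on a suffix window $[k-T,k-1]$ also runs into the issue that only $D^{\pi}_{\pi_0}\le\bar D_0$ is assumed, not $D^{\pi}_{\pi_{k-T}}$; the paper sidesteps this by keeping the \emph{full} average as the first term.
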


The rates of convergence match those in Theorem~\ref{thm:agg_convergence} and Theorem~\ref{thm:strongly_convex_agg_convergence} up to log factors.
The major advantage is we identified single policy -- not the average over iterations -- that achieves distribution-free convergence.
A limitation with last-iterate convergence is that we cannot provide the same validation analysis as with the average iterate.
This is because unless we generate more samples w.r.t.~the last iterate $\pi_{k-1}$, we cannot reduce the estimation error of $\tilde{Q}^{\pi_{k-1}}$ by averaging.
Nevertheless, the proposition serves a valuable purpose. It gives guarantees on the last-iterate, from a statistical perspective. 
In the next section, we show how to enhance the accuracy estimate of any chosen policy, e.g.~the last-iterate $\pi_{k-1}$.

\subsection{Offline accuracy certificates} \label{sec:offline_step}
Consider some policy $\hat \pi$ (e.g., a random policy or the last-iterate), and we want to assess its solution quality.
Given a set of $N$ random samples $\{\xi_t\}_{t=0}^{N-1}$ (separate from the samples used to obtain $\hat \pi$), we form the estimate $\tilde{Q}^{\hat{\pi}}_t$ from each $\xi_t$. 
The empirical value and advantage function can be defined as, respectively, $\tilde{V}^{\hat{\pi}}_t = \langle \tilde{Q}^{\hat{\pi}}_t(s,\cdot), \hat{\pi}(\cdot \vert s) \rangle$, and $\tilde{\psi}^{\hat{\pi}}_t$ is defined similarly to~\eqref{eq:def_advantage} but with the noisy estimates.
We refer to these quantities and samples as offline, since they are computed after a policy is found.
In contrast, the online ones from subsection~\ref{sec:online_accuracy} are generated on-the-fly as one improves the policy.

If for example Assumption~\ref{asmp:expectation} takes place with some small bias $\varsigma$,
then the ergodic average 
\begin{talign}
    \tilde{V}_N(s) := N^{-1} \sum_{t=0}^{N-1}\tilde{V}^{\hat{\pi}}_t(s) \label{eq:offline_noisy_value}
\end{talign}
provides an estimate of $V^{\hat{\pi}}(s)$.
In particular, the expected value can be bounded by $\mathbb{E}[\tilde{V}_N(s) - V^{\hat \pi}(s)] \leq \varsigma$ (see~\eqref{eq:value_bias}), and the deviation converges as $\mathbb{E} \vert \tilde{V}_N(s) - V^{\hat{\pi}}(s) \vert  \leq \sigma N^{-1/2} + \varsigma$ for all states, which can be shown similar to the proof for Theorem~\ref{thm:validation_analysis}.
The main difference between these aforementioned bounds and the ones from Theorem~\ref{thm:validation_analysis} is that the former are taken w.r.t.~the value function for a single policy while the latter are w.r.t.~the averaged (over policies) value function $\bar{V}^k(s)$ from~\eqref{eq:noisy_upper_bound}.
Thus, one can obtain better performance estimates on, e.g., the last-iterate policy.
Likewise, one can use the offline samples $\{\xi_t\}_{t=0}^{N-1}$ to estimate the advantage gap function:
\begin{talign}
    \tilde{G}_N(s) :=
    N^{-1}
    \max_{p \in \DA} \{ -\sum_{t=0}^{N-1} \tilde{\psi}^{\hat{\pi}}(s,p) \}. \label{eq:offline_noisy_gap}
\end{talign}
Similar to $\tilde{V}_N(s)$, the function $\tilde{G}_N(s)$ is the noisy advantage gap function w.r.t.~a single policy rather than the averaged (over policies) advantage gap $\tilde{G}^k(s)$ from~\eqref{eq:noisy_gap}.

We will now show by combining both the estimated value function $\tilde{V}_N(s)$ and advantage gap function $\tilde{G}_N(s)$, one can obtain estimates of the optimal value function.
\begin{proposition} \label{prop:offline_estimates}
    Suppose Assumption~\ref{asmp:expectation} and Assumption~\ref{asmp:high_prob} take place, as well as $\varsigma \leq \sigma/(2N)$.
    Denoting $[\cdot]_+ = \max\{0,\cdot\}$, we then have
    \begin{talign*}
        &\mathbb{E} \Big[ \big[
        \big(\tilde{V}_N(s) - (1-\gamma)^{-1} \max_{s' \in \cS} \tilde{G}_N(s')\big) - V^{\pi^*}(s)
        \big]_+\Big]  \\
        &\hspace{20pt} \leq
        \sqrt{\frac{\sigma^2 + N\varsigma^2}{N}}
        +
        \frac{2\sqrt{\bar{D}_0 \cdot (\bar{Q}^2 + \sigma^2 + M_h^2)}}{(1-\gamma)\sqrt{N}}
        +
        \frac{8C\sigma(\log \vert \cS \vert + 1) + 4\sqrt{\sigma^2 + N\varsigma^2}}{(1-\gamma)\sqrt N}, ~~~~\forall s \in \cS,
    \end{talign*}
    for some absolute constant $C > 0$ and $\bar{D}_0:= \max_{s,\pi} D^{\pi}_{\pi_0}(s)$ for an arbitrary Bregman's distance.
    Suppose instead Assumption~\ref{asmp:high_prob} occurs and $\varsigma \leq \sigma/N$.
    Then for any $\delta \in (0,1]$,
    \begin{talign*}
        \mathrm{Pr}\big\{ 
        \exists s \in \cS : 
        \big(\tilde{V}_N(s) - (1-\gamma)^{-1} \max_{s' \in \cS} \tilde{G}_N(s')\big) - V^{\pi^*}(s)
        >
        A_N + B_N 
        \big\}
        \leq
        \delta,
    \end{talign*}
    where
    $A_N := \sigma \sqrt{\frac{3 \log(4 \vert \cS \vert/{\delta})}{N}}$ 
    and 
    $B_N := \frac{4\sqrt{\sigma^2 \bar{D}_0} + 4\sigma\sqrt{3\log(16 \vert \cS \vert/\delta)} + 24 \sqrt{\sigma^2 \bar{D}_0} \log ({32 \vert \cS \vert}/{\delta})}{(1-\gamma)\sqrt{N}}$.
\end{proposition}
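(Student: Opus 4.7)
The plan is to reduce the problem via Proposition~\ref{prop:gap_functions}, which gives $V^{\pi^*}(s) \geq V^{\hat\pi}(s) - (1-\gamma)^{-1}\max_{s'\in\cS}g^{\hat\pi}(s')$. Substituting this into the bracketed expression, the quantity inside $[\,\cdot\,]_+$ is upper bounded by $(\tilde{V}_N(s) - V^{\hat\pi}(s)) + (1-\gamma)^{-1}(\max_{s'} g^{\hat\pi}(s') - \max_{s'} \tilde{G}_N(s'))$. Applying $[a+b]_+ \leq |a| + [b]_+$ together with the elementary identity $\max_{s'} g^{\hat\pi}(s') - \max_{s'} \tilde{G}_N(s') \leq \max_{s'}(g^{\hat\pi}(s') - \tilde{G}_N(s'))$, the target reduces to bounding $\mathbb{E}|\tilde{V}_N(s) - V^{\hat\pi}(s)|$ and $\mathbb{E}\max_{s'\in\cS}(g^{\hat\pi}(s') - \tilde{G}_N(s'))_+$ separately.

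\textbf{Value estimation term.} For $|\tilde{V}_N(s) - V^{\hat\pi}(s)| = |N^{-1}\sum_t \langle \tilde{Q}_t^{\hat\pi}(s,\cdot) - Q^{\hat\pi}(s,\cdot), \hat\pi(\cdot\vert s)\rangle|$ I would mimic~\eqref{eq:llog_decrease} from the proof of Theorem~\ref{thm:validation_analysis}: Assumption~\ref{asmp:expectation} bounds each summand's second moment by $\sigma^2$ and the cross-covariances by $\varsigma^2$ via the Cauchy--Schwarz argument used in~\eqref{eq:value_bias}, yielding $\mathbb{E}(\tilde{V}_N(s)-V^{\hat\pi}(s))^2 \leq (\sigma^2 + N\varsigma^2)/N$; then $\mathbb{E}|Y|\leq \sqrt{\mathbb{E}Y^2}$ closes the in-expectation bound. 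The high-probability analogue follows from Lemma~\ref{lem:light_tail_azuma_hoeffding} (Case~1) plus a union bound over $|\cS|$ states, exactly as in the derivation of~\eqref{eq:llog_decrease_high_prob}.

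\textbf{Gap estimation term.} Using $V^{\hat\pi}(s) = \langle Q^{\hat\pi}(s,\cdot), \hat\pi(\cdot\vert s)\rangle$, the key identity $\tilde{\psi}^{\hat\pi}_t(s,p) - \psi^{\hat\pi}(s,p) = -\langle \delta_t(s,\cdot), p - \hat\pi(\cdot\vert s)\rangle$ with $\delta_t := Q^{\hat\pi} - \tilde{Q}_t^{\hat\pi}$ emerges cleanly (the regularization and value terms cancel). I would then introduce, for each state $s$, the auxiliary mirror-descent sequence $u_{t+1}(\cdot\vert s) \in \mathrm{argmin}_{p\in\DA}\{\eta \langle -\delta_t(s,\cdot), p\rangle + D^p_{u_t}(s)\}$ with $u_0 = \pi_0$, mirroring~\eqref{eq:auxiliary_seq_defn}, and split $g^{\hat\pi}(s) - \tilde{G}_N(s) \leq N^{-1}\sum_t \langle \delta_t(s,\cdot), u_t(\cdot\vert s) - \hat\pi(\cdot\vert s)\rangle + \max_p N^{-1}\sum_t \langle \delta_t(s,\cdot), p - u_t(\cdot\vert s)\rangle$. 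The second piece is pathwise controlled by the mirror-descent regret $\bar{D}_0/(\alpha N) + \alpha\sigma^2$ (cf.~\eqref{eq:auxiliary_seq_bound}), which after tuning $\alpha = \sqrt{\bar{D}_0/(N\sigma^2)}$ gives exactly the $2\sqrt{\sigma^2 \bar{D}_0}/((1-\gamma)\sqrt{N})$ term. The first piece is an empirical process over the predictable sequence $u_t$; since $u_t$ is measurable w.r.t.~$\xi_{[t-1]}$, I would reuse the $L^2$/Jensen argument from Term~1 in expectation, and then apply a sub-Gaussian/sub-exponential maximal inequality over the $|\cS|$ states (the same tool behind the $\log|\cS|$ factor appearing in similar stochastic-optimization validation bounds in~\cite{lan2012validation}) to produce the $\sigma(\log|\cS|+1)/\sqrt{N}$ piece. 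The high-probability version again follows from Lemma~\ref{lem:light_tail_azuma_hoeffding} with union bounds at confidence $\delta/8$ per ingredient.

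\textbf{Main obstacle.} The delicate step is simultaneous uniform control over the two inner ``max''es---over $p \in \DA$ and over $s \in \cS$. The $p$-direction is handled by the auxiliary-sequence regret trick inherited from the proof of Theorem~\ref{thm:validation_analysis}, which linearizes the inner maximization at the cost of the $\sqrt{\sigma^2\bar{D}_0}$ dependence. The $s$-direction then requires a maximal inequality: the appearance of $\log|\cS|$ rather than $\sqrt{\log|\cS|}$ in the expectation bound reflects sub-exponential tails coming from the $\|\delta_t\|_*^2$ contributions in the mirror-descent regret, whose maxima over $|\cS|$ coordinates scale logarithmically. Since $\alpha$ is tuned a priori from known quantities $\bar{D}_0$ and $\sigma$, no data-dependent construction is needed, and the offline samples $\{\xi_t\}$ being independent of $\hat\pi$ ensures the auxiliary sequence $\{u_t\}$ is well-behaved.
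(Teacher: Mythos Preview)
Your proposal is correct and follows the paper's proof: reduce via Proposition~\ref{prop:gap_functions}, bound $|\tilde V_N-V^{\hat\pi}|$ by the $L^2$ argument of~\eqref{eq:llog_decrease}, control the gap deviation through the auxiliary mirror-descent sequence~\eqref{eq:auxiliary_seq_defn} (then optimize $\alpha$), and handle $\max_{s'\in\cS}$ by a sub-exponential maximal inequality---the paper invokes \cite[Lemma~25]{lan2023policy} after deriving $\|U(s)\|_{\psi_1}\le 4\sigma/\sqrt N$ from Assumption~\ref{asmp:high_prob} and the hypothesis $\varsigma\le\sigma/(2N)$. The only cosmetic differences are a sign slip in your split of $g^{\hat\pi}-\tilde G_N$ (it should read $\max_p N^{-1}\sum_t\langle\delta_t,\hat\pi-p\rangle$, so the auxiliary sequence should be driven by $+\delta_t$ rather than $-\delta_t$) and that the paper bounds the two-sided $|\tilde G_N-g^{\hat\pi}|$ using both $u_t$ and $v_t$ sequences from~\eqref{eq:auxiliary_seq_defn}, whereas your one-sided $[g^{\hat\pi}-\tilde G_N]_+$ needs only one.
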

\begin{proof}
    We start by bounding
    \begin{talign}
        &\big(\tilde{V}_N(s) - (1-\gamma)^{-1} \max_{s' \in \cS} \tilde{G}_N(s')\big) - V^{\pi^*}(s) \nonumber \\
        &\leftstackrel{\text{Proposition~\ref{prop:gap_functions}}}{\leq}
        \big(\tilde{V}_N(s) - (1-\gamma)^{-1} \max_{s' \in \cS} \tilde{G}_N(s')\big) - \big(V^{\hat{\pi}}(s) - (1-\gamma)^{-1} \max_{s' \in \cS} g^{\hat{\pi}}(s') \big) \nonumber \\
        &\leq
        \vert \tilde{V}_N(s) - V^{\hat{\pi}}(s) \vert + (1-\gamma)^{-1} \max_{s' \in \cS} \vert  \tilde{G}_N(s') - g^{\hat{\pi}}(s') \vert \nonumber.
    \end{talign}
    Recall the Q-function error $\delta_t(s,a) = \Qt(s,a) - \tQt(s,a)$.
    Since all the terms in the last line above are nonnegative,
    \begin{talign}
        &\mathbb{E} \Big[ \big[
        \big(\tilde{V}_N(s) - (1-\gamma)^{-1} \max_{s' \in \cS} \tilde{G}_N(s')\big) - V^{\pi^*}(s)
        \big]_+\Big] \nonumber \\
        &\leq
        \mathbb{E}\vert \tilde{V}_N(s) - V^{\hat{\pi}}(s) \vert + (1-\gamma)^{-1} \mathbb{E}\max_{s' \in \cS} \vert  \tilde{G}_N(s') - g^{\hat{\pi}}(s') \vert \nonumber \\
        &\leftstackrel{\substack{\text{Theorem~\ref{thm:validation_analysis}}\\ \text{and}~\eqref{eq:G_deviation_penulatimate}}}{\leq}
        \frac{\sqrt{\sigma^2 + N \varsigma^2}}{\sqrt{N}}
        +
        \frac{\mathbb{E} \max_{s \in \cS} \vert N^{-1} \sum_{t=0}^{N-1} \langle {\delta_t(s, \cdot)}, \hat{\pi}(\cdot \vert s) - u_t(\cdot \vert s) \rangle \vert}{1-\gamma} \nonumber \\
        &\hspace{10pt}+
        \frac{\mathbb{E} \max_{s \in \cS} \vert N^{-1} \sum_{t=0}^{N-1} \langle {-\delta_t(s, \cdot)}, \hat{\pi}(\cdot \vert s) - v_t(\cdot \vert s) \rangle \vert}{1-\gamma}
        +
        \frac{2\sqrt{\bar{D}_0 \cdot (\bar{Q}^2 + \sigma^2 + M_h^2)}}{(1-\gamma)\sqrt{N}}
        \label{eq:advgap_dev}
    \end{talign}
    where the auxiliary variables $\{u_t\}$ and $\{v_t\}$ are from~\eqref{eq:auxiliary_seq_defn}.
    It remains to upper bound the second and third terms from the last line above.
    Consider the random variable $U(s) := N^{-1} \sum_{t=0}^{N-1}  \langle {\delta_t(s, \cdot)}, \hat{\pi}(\cdot \vert s) - u_t(\cdot \vert s) \rangle$. 
    By triangle inequality and the assumption $\max_{p \in \DA} \|p\| \leq 1$, $\vert U(s) \vert \leq 2N^{-1} \sum_{t=0}^{N-1} \|\delta_t(s,\cdot)\|_*$.
    Then we have for any $\theta > 0$,
    \begin{talign*}
        &\mathbb{E}\exp\{\theta^{-1} \cdot\vert U(s) \vert\} \\
        &\leq
        \mathbb{E}\big[
            \exp \{ 2(N \theta)^{-1} \sum_{t=0}^{N-1}  \|\delta_t(s, \cdot)\|_*
            \}
        \big] \\
        &=
        \mathbb{E}_{\xi_{[N-2]}} \big[
            \mathbb{E}_{\vert \xi_{[N-2]}} [\exp \{ 2(N \theta)^{-1}  \|\delta_{N-1}(s, \cdot)\|_*
            \} ]
            \cdot
            \exp \{ 2(N \theta)^{-1} \sum_{t=0}^{N-2}  \|\delta_t(s, \cdot)\|_*
            \}
        \big] \\
        &\leftstackrel{\substack{\text{Assumption~\ref{asmp:high_prob},~\eqref{eq:mgf_quadratic_ub},}\\ \text{and}~\varsigma \leq \sigma/(2N)}}{\leq}
        (1 + 1/(2N)) \exp\{ \frac{3\sigma^2}{\theta^2 N^2}\}
        \cdot
        \mathbb{E}_{\xi_{[N-2]}} \big[
            \exp \{ 2(N \theta)^{-1} \sum_{t=0}^{N-2}  \|\delta_t(s, \cdot)\|_*
            \} \big] \\
        &\leftstackrel{\substack{\text{Repeatedly}\\ \text{apply}~\eqref{eq:mgf_quadratic_ub}}}{\leq}
        (1 + 1/(2N))^{N} \exp\{ \frac{3\sigma^2}{\theta^2 N}\} \\
        &\leq
        \sqrt{e} \cdot \exp\{ \frac{3\sigma^2}{\theta^2 N}\},
    \end{talign*}
    where the third line used the law of total expectation and the fact the sample $\xi_{t-1}$ is observed after $\xi_{[t-2]}$.
    Fixing $\theta = 4\sigma/\sqrt{N}$, we get $\mathbb{E}\exp\{\vert U(s) \vert/\theta\} \leq 2$.
    Equivalently, $\|U(s)\|_{\psi_1} \leq 4\sigma/\sqrt{N}$, where the sub-exponential norm of a random variable $X$ is $\|X\|_{\psi_1} := \inf\{t > 0 : \mathbb{E}\mathrm{exp}\{\vert X \vert/t\} \leq 2\}$~\cite[Section 2.7]{vershynin2018high}.
    Moreover, in view of Assumption~\ref{asmp:expectation},
    one can use~\eqref{eq:llog_decrease} to show $\mathbb{E}\vert U(s) \vert\leq \sqrt{4(\sigma^2 + N\varsigma^2)/N}$ for all $s \in \cS$.
    Therefore, 
    it follows from~\cite[Lemma 25]{lan2023policy} that
    \begin{talign*}
        \mathbb{E} \max_{s \in \cS} \big \vert N^{-1} \sum_{t=0}^{N-1} \langle {\delta_t(s, \cdot)}, \hat{\pi}(\cdot \vert s) - u_t(\cdot \vert s) \rangle \big \vert
        &=
        \mathbb{E}\max_{s \in \cS} U(s)
        \leq
        \frac{4C\sigma(\log \vert \cS \vert + 1) + 2\sqrt{\sigma^2 + N\varsigma^2}}{\sqrt N},
    \end{talign*}
    for some absolute constant $C$.
    And a similar bound can be shown with $\delta_t$ and $u_t$ replaced by $-\delta_t$ and $v_t$, respectively.
    Applying the just-derived inequalities back into~\eqref{eq:advgap_dev}, we arrive at 
    \begin{talign*}
        &\mathbb{E} \Big[ \big[
        \big(\tilde{V}_N(s) - (1-\gamma)^{-1} \max_{s' \in \cS} \tilde{G}_N(s')\big) - V^{\pi^*}(s)
        \big]_+\Big] \\
        &\leq
        \frac{\sqrt{\sigma^2 + N\varsigma^2}}{\sqrt{N}}
        +
        \frac{2\sqrt{\bar{D}_0 \cdot (\bar{Q}^2 + \sigma^2 + M_h^2)}}{(1-\gamma)\sqrt{N}}
        +
        \frac{8C\sigma(\log \vert \cS \vert + 1) + 4\sqrt{\sigma^2 + N\varsigma^2}}{(1-\gamma)\sqrt N},
    \end{talign*}
    which yields the first bound from the proposition we are trying to prove.

    The high probability bound can derived by applying the high-probability bounds from Theorem~\ref{thm:validation_analysis} (which still holds when fixing a single policy rather than taking the average over policies) into~\eqref{eq:advgap_dev}.
\end{proof}

A couple remarks are in order.
First, the offline estimate can be applied to a policy $\hat{\pi}$ from any RL algorithm.
Second, the above result only bounds the expected nonnegative component because we want to ensure $\tilde{V}_N(s) - (1-\gamma)^{-1} \max_{s' \in \cS} \tilde{G}_N(s')$ is a lower bound of $V^{\pi^*}(s)$.
Third, the above proposition requires the additional Assumption~\ref{asmp:high_prob} for the result in expectation, while Theorem~\ref{thm:validation_analysis} does not.
Without Assumption~\ref{asmp:high_prob}, there will be an additional $\vert \cS \vert$ dependence, whereas the above result has a milder $\log \vert \cS \vert$ dependence.
Next, we conduct experiments to examine the effectiveness of the online and offline validation steps for (stochastic) PMD.

\section{Numerical experiments} \label{sec:experiments}
We conduct preliminary numerical experiments for deterministic and stochastic PMD.
The source code can be found in \url{https://github.com/jucaleb4/pg-termination},
which contains additional information on step size tuning and the MDP environments.

\subsection{Exact solutions for deterministic MDPs} \label{sec:det_mdp_exp}
\begin{table}[t]
\caption{Number of iterations to find optimal solution. Methods that cannot find a solution within the iteration limit of 20,000 are marked with a ``-''. 
}
\centering
\label{tab:exact_results}
\begin{tabular}{@{}llrrr@{}} \toprule
Alg & Env & Iters ($\gamma=0.9$) & Iters ($\gamma=0.99$) & Iters ($\gamma=0.999$) \\ \midrule
PI & GridWorld & $8 $ & $9$ & $8$  \\
PMD (Euc-Agg) & GridWorld & $18$ & $20$ & $24$ \\
PMD (Euc) & GridWorld & $208$ & $2002$ & $18122$ \\
REINFORCE & GridWorld & - & - & - \\
TRPO & GridWorld & - & - & - \\
\addlinespace[2.5pt]
PI & Taxi & $16$ & $20$ & $17$ \\ 
PMD (Euc-Agg) & Taxi & $4$ & $33$ & $20$ \\
PMD (Euc) & Taxi & $4$ & $52$ & $68$ \\
REINFORCE & Taxi & $4265$ & $7278$ & - \\
TRPO & Taxi & $2$ & $2$ & $3$ \\
\bottomrule
\end{tabular}
\end{table}

We consider two environments: the GridWorld~\cite{dann2014policy} and Taxi environment. 
In GridWorld, there is a $20 \times 20$ 2-D grid with a single target with a large (desirable) negative cost and multiple traps with large (undesirable) positive costs. 
The agent moves in one of the four cardinal directions and stays within the grid, and each step incurs a cost of +1.
The environment has a random action rule, where the chosen action has a 95\% chance of being applied and a 5\% chance of another random action being applied instead.
Once the target is reached the agent moves to a random non-trap space.
The Taxi environment is a similar $5 \times 5$ 2D grid, where the agent must first pick up the passenger 
followed by dropping off the passenger at a pre-specified destination.
See \url{https://gymnasium.farama.org/environments/toy_text/taxi/} for more details.

\edits{We apply five algorithms: policy iteration (labeled PI), PMD with Euclidean distance squared and an aggressive step size from Theorem~\ref{thm:strongly_poly_euclidean} (labeled PMD (Euc-Agg)), a similar PMD but with a more conservative step size from  Theorem~\ref{thm:V_linear_converence} (labeled PMD (Euc)), a policy gradient method called REINFORCE~\cite{williams1992simple} with a softmax parameterization~\cite{agarwal2021theory}, and trust region policy optimization (labeled TRPO)~\cite{schulman2015trust}. The latter  two are popular policy gradient methods used in modern RL solvers. In our implementation of TRPO, the trust region subproblem is approximately solved via a PMD update with the KL-divergence as suggested by~\cite{neu2017unified} and a backtracking line search~\cite{schulman2015trust}.}

To assess when the optimal policy has been found, we check whether the advantage gap function from~\eqref{eq:gap_function_definition} of the policy $\pi_t$ or its greedy counterpart $\hat{\pi}_t$ (i.e., select actions with highest probability in $\pi_t$) are at most $(1-\gamma)^{-1} \times 10^{-14}$ (due to numerical errors) or if two consecutive greedy policies match, the latter being similar to the termination rule for policy iteration~\cite{puterman2014markov}.

Let us now discuss the results from Table~\ref{tab:exact_results}.
First, PI often performs the best, while PMD (Euc-Agg) performs similarly. In some cases (e.g., Taxi with $\gamma=0.9$), we even see PMD (Euc-Agg) outperforms PI.
Second, PMD (Euc) converges in all cases but has inferior performance to PMD (Euc-Agg). Specifically to GridWorld, the performance of PMD (Euc) becomes significantly worse compared to PMD (Euc-Agg) as $\gamma$ gets closer to 1. One possible reason is that the step size for PMD (Euc) is less aggressive compared to PMD (Euc-Agg) and PI, so the empirical performance may be more sensitive to the discount factor.
\edits{Third, REINFORCE has the worst performance, while TRPO only has competitive performance for Taxi but fails to converge within the iteration limit for GridWorld. The lack of robust performance from existing policy gradient method confirms the importance of supporting theory for strongly-polynomial runtime from our PMD methods}.

\subsubsection{Scaling of strongly-polynomial runtimes}
\begin{figure}[h]
   \centering
   \begin{subfigure}{.29\textwidth}
     \centering
     \includegraphics[width=0.99\linewidth,valign=b]{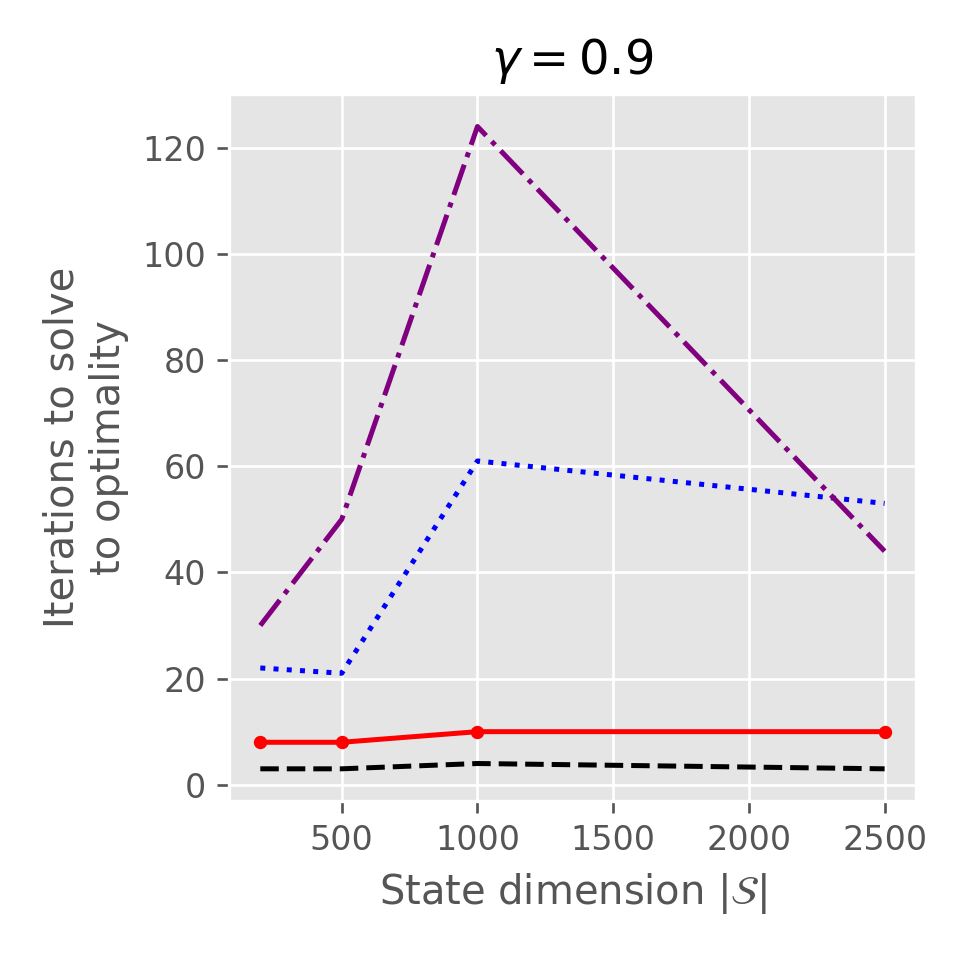}
   \end{subfigure}
   \begin{subfigure}{.29\textwidth}
     \centering
     \includegraphics[width=0.99\linewidth,valign=b]{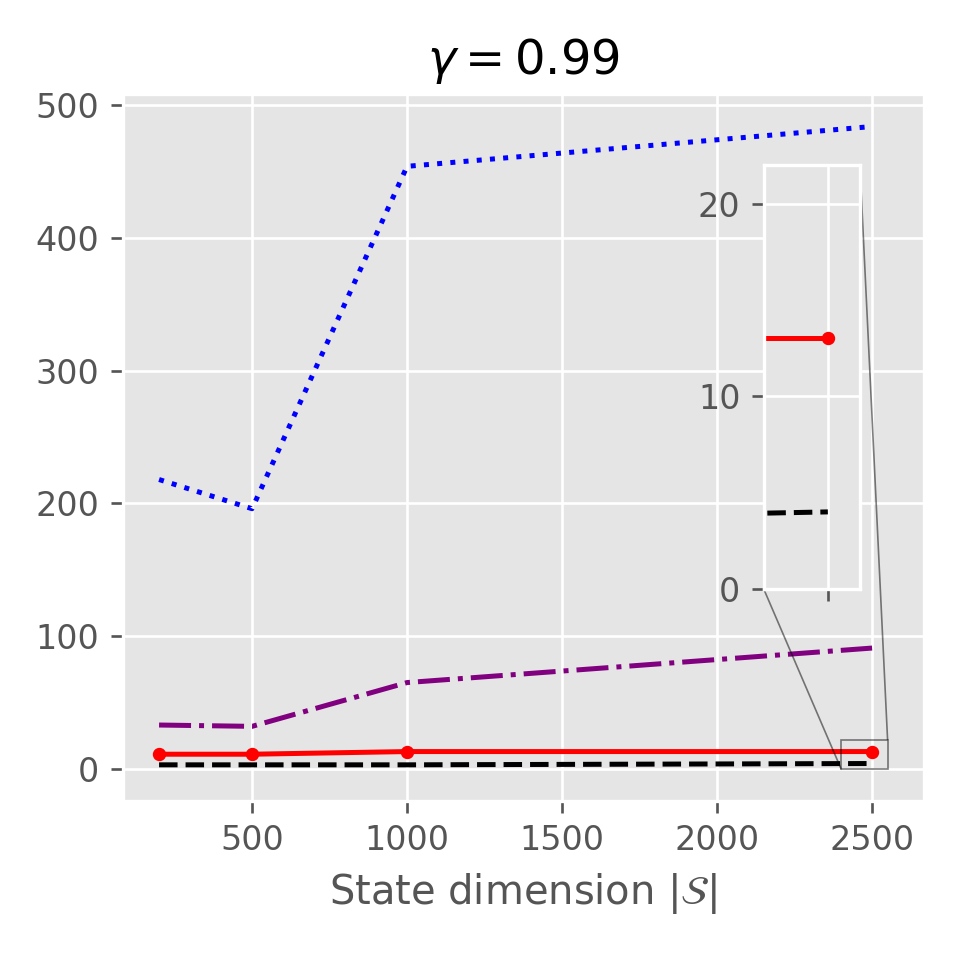}
   \end{subfigure}
   \begin{subfigure}{.36\textwidth}
     \centering
     \includegraphics[width=1.0\linewidth,valign=b]{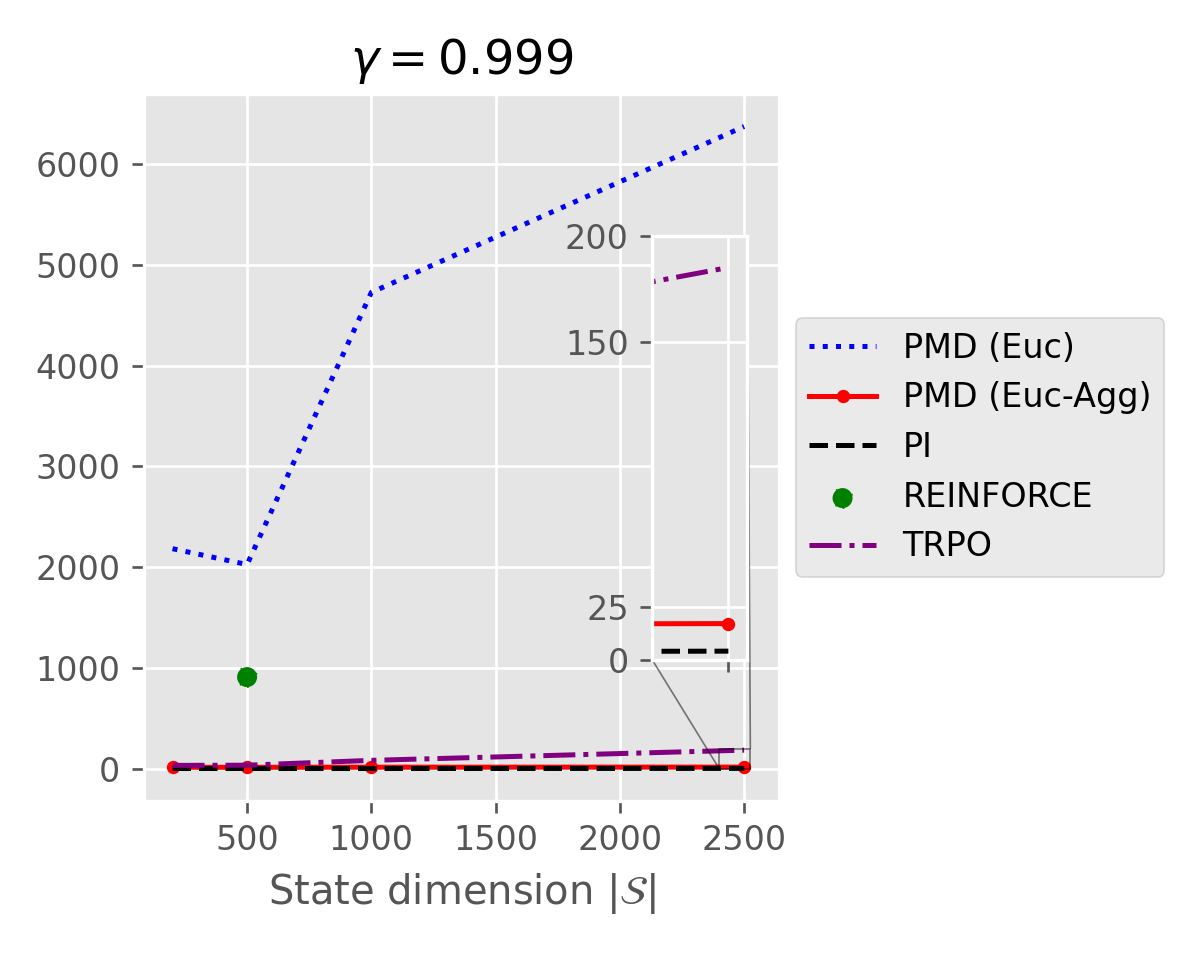}
   \end{subfigure}\\
   \caption{Number of iterations to solve GARNET MDP with $\vert \cA \vert = 30$ and increasing $\vert \cS \vert$. Most plots do not contain REINFORCE since it cannot solve to optimality within the iteration limit of 10000. The latter two plots have zoom-in plots for $\vert \cS \vert = 2500$.}
   \label{fig:garnet}
   \vspace{-5mm}
\end{figure}

\edits{Next, we compare the number of iterations to solve to optimality with increasing state space sizes and discount factors $\gamma$.  Using the same five methods, we solve different random Garnet MDPs with a fixed action space of $\vert \cA \vert = 30$. 
See~\cite{tarbouriech2019active} and references therein for more details. The results for different state space sizes $\vert \cS \vert = 500, 1000, 2000, 2500$ and discount factors $\gamma=0.9, 0.99, 0.999$ are displayed in Fig.~\ref{fig:garnet}.}

\edits{Similar to the results found in Table~\ref{tab:exact_results}, PMD (Euc-Agg) closely matches PI. Moreover, both methods seem to have mild dependence on larger $\vert \cS \vert$ and $\gamma$. On the other hand, REINFORCE has the worst performance, as there is only one instance where the problem is solved within the iteration limit, while PMD (Euc) is the second worst. On the other hand, TRPO has comparable but worse performance to PMD (Euc-Agg) and PI. This suggests that the line search procedure in TRPO can possibly improve the performance of policy gradient methods. However, as we saw in subsection~\ref{sec:det_mdp_exp}, its performance is not robust across all environments, while PMD (Euc-Agg) efficiently finds optimal solutions in all our tested environments. Again, this is supported by PMD's theory for strongly-polynomial runtime.}

\subsection{Termination and evaluation of stochastic PMD to RL}
In this section, we progress to the stochastic setting.
We use the same two environments from the previous section but now we assume the underlying MDP model is not known.
Instead, for simplicity, we assume a generative model.
We start with the online validation step.

\subsubsection{Online validation analysis with various lower bound estimates}  \label{sec:online_exp}
\begin{table}[h]
\vspace{-5mm}
\caption{Various lower bound estimates, where the stochastic estimates of the aggregate value function $\bar{V}^k$ and advantage gap function $\tilde{G}^k$ are from~\eqref{eq:noisy_upper_bound} and~\eqref{eq:noisy_gap}, respectively, and $[\cdot]=\max\{0, \cdot\}$.
The overestimation error $\varepsilon_k(\rho)$ is defined in Corollary~\ref{cor:adaptive_lb}.
}
\centering
\label{tab:lb_estimates}
\begin{tabular}{@{}lrrr@{}} \toprule
Name & Estimator & Lower bound of & Note \\ \midrule
universal aggregate & $\bar{V}^k(s) - \frac{1}{1-\gamma}\max_{s' \in \cS} \tilde{G}^k(s')$ & $V^{\pi^*}(s)$ &  Proposition~\ref{prop:gap_functions_2} \\
adaptive aggregate & $\mathbb{E}_{s \sim \rho}[\bar{V}^k(s) - \frac{1}{1-\gamma} [\tilde{G}^k(s)]_+]$ & $f_\rho(\pi^*) + \frac{\varepsilon_k(\rho)}{1-\gamma} $ & Corollary~\ref{cor:adaptive_lb} \\
worst-case & $\bar{V}^k(s) - \frac{2\sqrt{\log(\vert \cA \vert)(\bar{Q}^2 + M_h^2)}}{(1-\gamma) \sqrt{k}}$ & $V^{\pi^*}(s)$ & Theorem~\ref{thm:agg_convergence}\\
a priori & problem-dependent & $V^{\pi^*}(s)$ & Heuristic \\ \bottomrule
\end{tabular}
\end{table}

\begin{figure}[h]
   \centering
   \caption{Mean and confidence interval for estimates of the average value function $k^{-1}\sum_{t=0}^{k-1}f_\rho(\pi_t)$ and the optimal value $f_\rho(\pi^*)$, where $f_\rho(\pi) := \mathbb{E}_{s \sim \rho}V^{\pi}(s)$ and $\rho$ is the uniform distribution over states.
   Experiments are repeated over 10 seeds on the same environment.
   For the top right plot, the worst-case lower bound is not shown since it is smaller than the minimum of -200.}
   \label{fig:online_validation}
   \begin{subfigure}{.4\textwidth}
     \centering
     \includegraphics[width=\linewidth,valign=b]{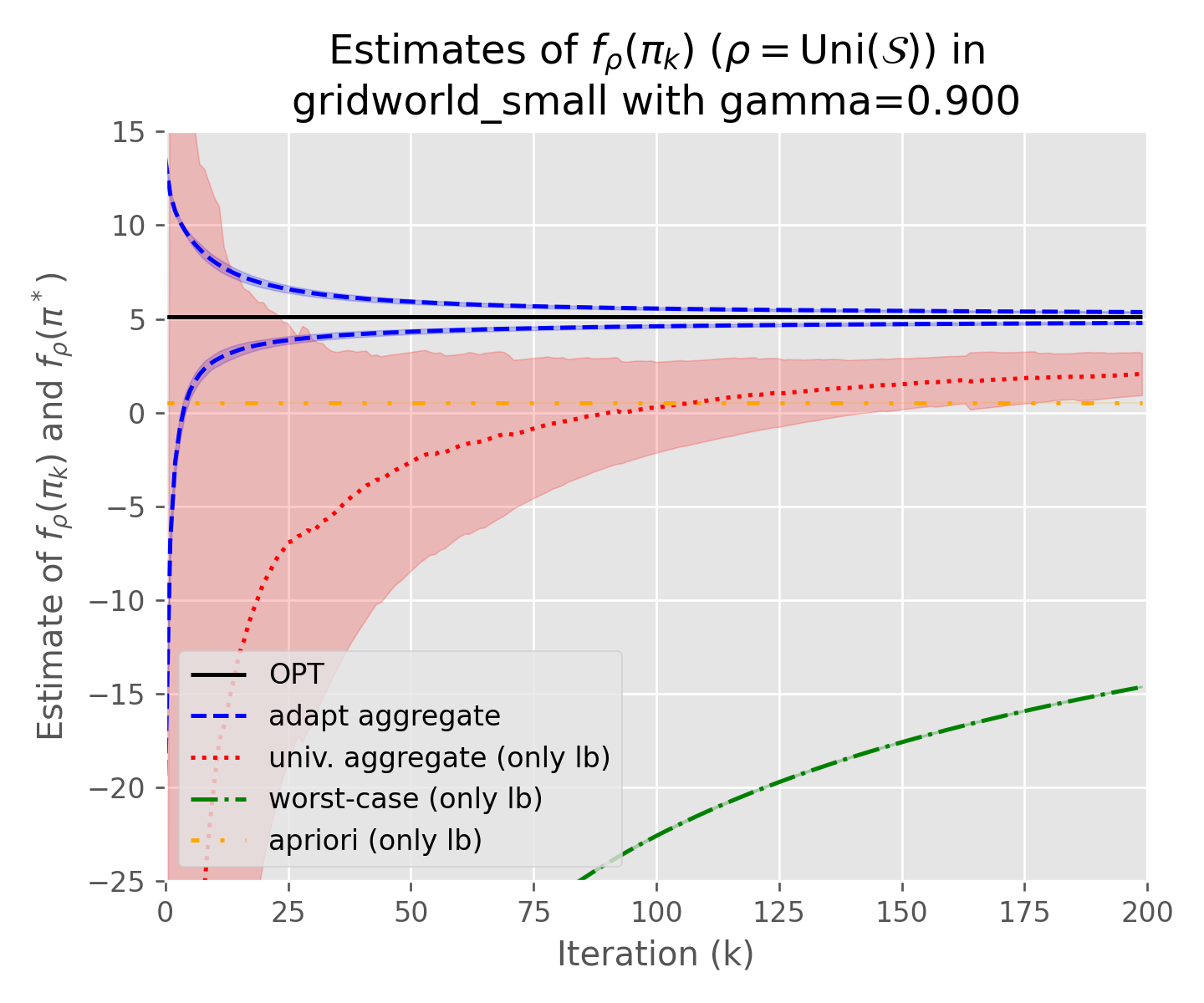}
   \end{subfigure}%
   \begin{subfigure}{.4\textwidth}
     \centering
     \includegraphics[width=\linewidth,valign=b]{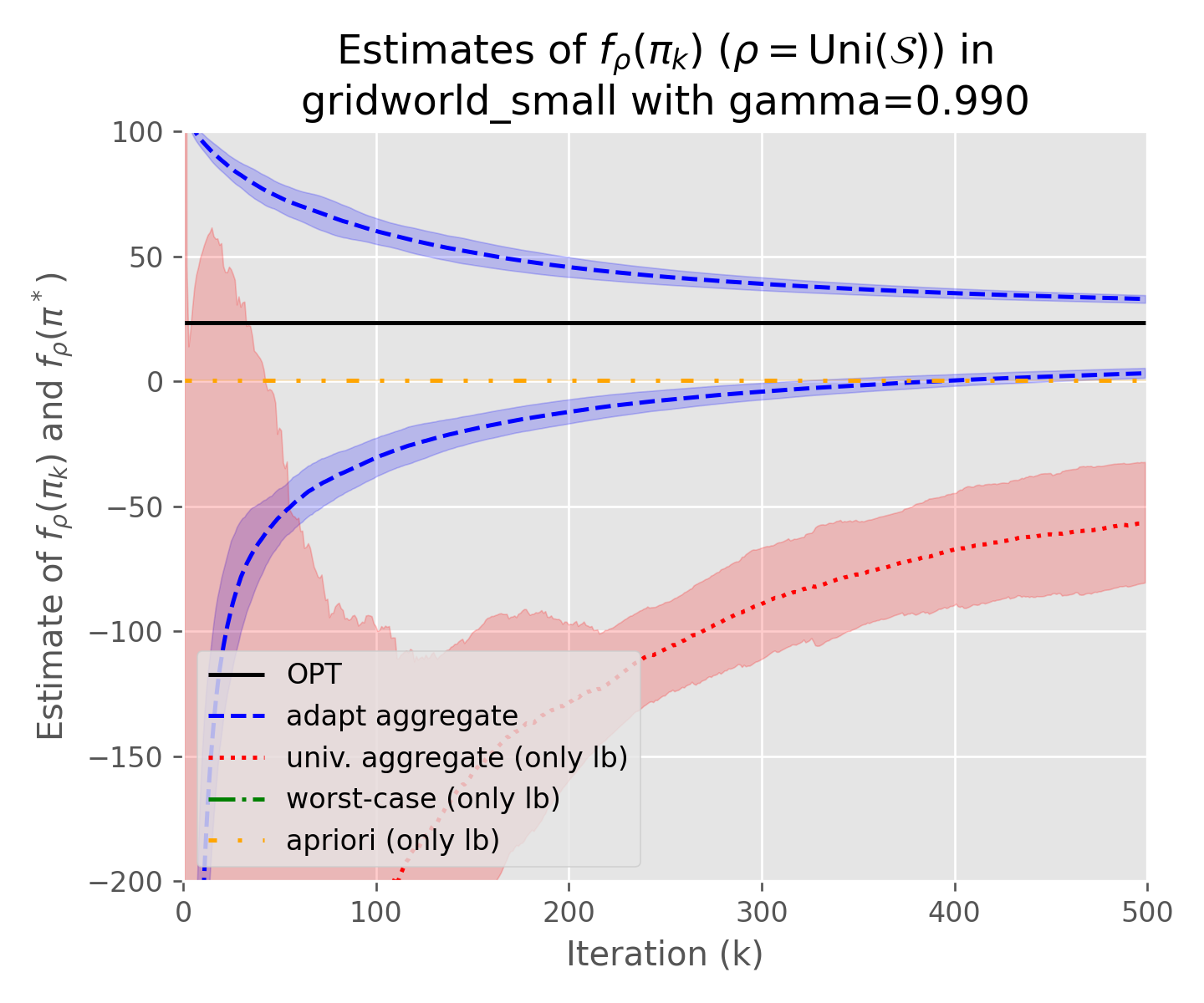}
   \end{subfigure}\\
   \begin{subfigure}{.4\textwidth}
     \centering
     \includegraphics[width=\linewidth,valign=b]{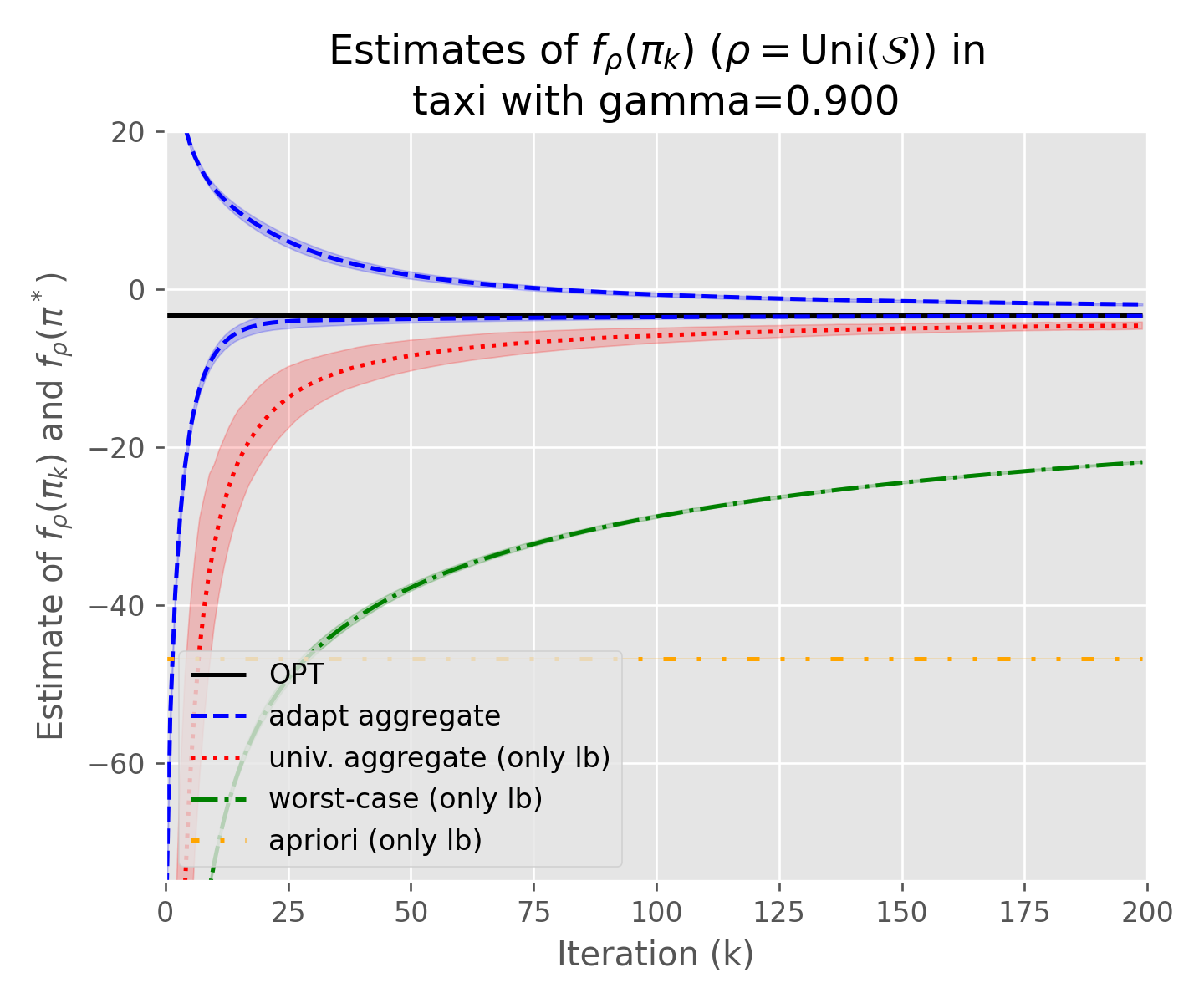}
   \end{subfigure}%
   \begin{subfigure}{.4\textwidth}
     \centering
     \includegraphics[width=\linewidth,valign=b]{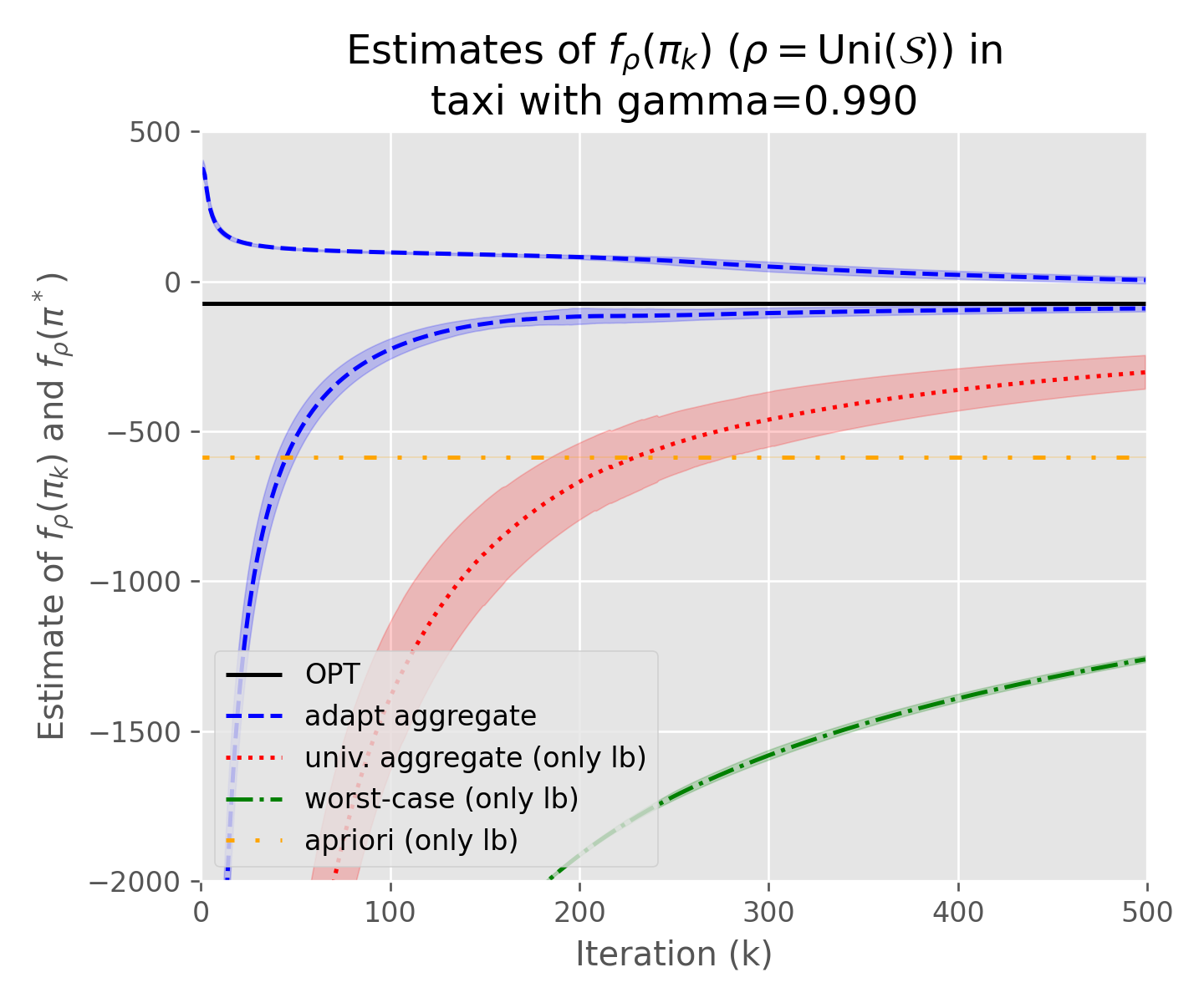}
   \end{subfigure}
\end{figure}

We consider both the noisy estimate of the aggregate (across iterations) value function $\bar{V}^k(s)$ from~\eqref{eq:noisy_upper_bound} and aggregate (across iterations) advantage gap function $\tilde{G}^k(s)$ from~\eqref{eq:noisy_upper_bound}.
Various lower bounds on the optimal value are shown in Table~\ref{tab:lb_estimates}, which we briefly discuss here.
As mentioned after Corollary~\ref{cor:adaptive_lb}, 
the adaptive aggregate is tighter than the universal aggregate since the latter takes the universal norm (i.e., max) of the advantage gap function, while the former adaptively evaluates the function at each state.
While Corollary~\ref{cor:adaptive_lb} says the adaptive aggregate may overestimate $f_\rho(\pi^*) \equiv \mathbb{E}_{s \sim \rho}V^{\pi^*}(s)$, the overestimation $\varepsilon_k(\rho)$ will decrease with the iteration count $k$.
The worst-case lower bound is directly from the worst-case convergence analysis of stochastic PMD.
Meanwhile, the {a priori} estimate is a heuristic, where we use a priori knowledge of the MDP (i.e.,~the cost function $c$ and structural properties of the state-action space) to derive a worst-case lower bound of $V^{\pi^*}(s)$.


The estimate of the aggregate value function $\mathbb{E}_{s \sim \rho}[\bar{V}^k(s)]$ and various lower bounds of $f_\rho(\pi^*)$ are shown in Figure~\ref{fig:online_validation}.
The value $\mathbb{E}_{s \sim \rho}[\bar{V}^k(s)]$ seems to converge at a sublinear rate towards the optimal function value, as justified in Theorem~\ref{thm:agg_convergence}.
As expected, the adaptive aggregate outputs the tightest lower bound, the universal aggregate often yields the second best, while the worst-case is the least tight.
It seems both the adaptive and universal aggregate yield more informative bounds than the worst-case one.
Interestingly, the adaptive aggregate lower bound is always a valid lower bound of the optimal value,
suggesting the overestimation error $\varepsilon_k(\rho)$ from Corollary~\ref{cor:adaptive_lb} may not be too large.

\subsubsection{Comparisons between online and offline validation analysis}
We consider the offline validation step, which is run after SPMD terminates, as described in subsection~\ref{sec:offline_step}.
We only examine GridWorld since the online bounds are already tight for the Taxi environment.


\begin{table}[h]
\caption{Online vs. offline validation analysis. 
The true upper (ub) of the last iterate and lower bound (lb) of the optimal value are shown, alongside the estimated (labeled ``Est'') mean and difference between the true and estimated quantity averaged across 10 seeded run on GridWorld.
We used 50, 125, and 250 additional samples $\xi_t$ for the offline validation with $\gamma=0.9$, $0.95$, and $0.99$, respectively (the online validation used 200, 350, and 500 samples, respectively).}
\centering
\label{tab:offline_validation}
\begin{tabular}{@{}lrrrrrr@{}} \toprule
& \multicolumn{2}{c}{$\gamma=0.9$} & \multicolumn{2}{c}{$\gamma=0.95$}& \multicolumn{2}{c}{$\gamma=0.99$} \\ \cmidrule(r){2-3} \cmidrule(r){4-5} \cmidrule(r){6-7}
Metric & Est Mean & $\vert \text{Est - True}\vert$ & Est Mean & $\vert \text{Est - True} \vert$ & Est Mean & $\vert \text{Est - True} \vert$ \\ \midrule
True ub & 5.169 & - & 7.629 & - & 25.039 & - \\
Online ub & 5.364 & 0.195 & 8.148 & 0.519 & 32.992 & 7.952 \\
Offline ub & 5.161 & 0.008 & 7.623 & 0.006 & 23.325 &  1.714 \\
True lb & 5.070 & - & 7.344 & - & 20.756 & - \\ 
Online lb & 4.786 & 0.284 & 6.414 & 0.930 & 3.247 & 17.520 \\
Offline lb & 4.830 & 0.240 & 6.598 & 0.746 & 7.303 & 13.452 \\ \bottomrule
\end{tabular}
\end{table}

Table~\ref{tab:offline_validation} depicts the true value of the last-iterate $V^{\pi_k}$ and its corresponding lower bound on $V^{\pi^*}$ (via Proposition~\ref{prop:gap_functions}), as well as upper and lower bound estimates from the last iteration of the online validation step and the offline validation step.
We add experiments for $\gamma=0.95$ to better understand the impact of the discount factor.
Both the online and offline validation step use the ``adaptive aggregate'' lower bound as shown in Table~\ref{tab:lb_estimates}.
In addition, we modified the offline advantage gap function in~\eqref{eq:offline_noisy_gap} to include all samples from the online and offline validation step,
because empirically we found this outputs tighter lower bounds.
Table~\ref{tab:offline_validation} also shows the difference between the true and estimated values from the online/offline validation step.
This difference captures both estimation errors and variation in performance between the average iterate $\pi_t$ and the last-iterate $\pi_k$.
The variation arises because the online estimates average across iteration (see subsection~\ref{sec:online_exp}), 
while the offline validation step only measures w.r.t.~the last-iterate.

We can make a couple observations from Table~\ref{tab:offline_validation}.
First, upper bound estimates have less estimation errors than the lower bound.
This is because Theorem~\ref{thm:validation_analysis} says the upper bound error does not have an explicit dependence on $(1-\gamma)^{-1}$ while the lower bound does.
Second, the offline validation step produces metrics closer to the true value. 
Since both the online and offline validation step's estimation errors decrease at similar rates (see Theorem~\ref{thm:validation_analysis} and Proposition~\ref{prop:offline_estimates}), this difference may be due to the variation in performance between the average and last iterate, as described in the previous paragraph.
So, we recommend the use of the offline validation step since it provides accurate estimates of the last-iterate policy, which tends to have superior performance (c.f.~Figure~\ref{fig:online_validation}).
Third, as $\gamma$ gets closer to one, the lower bound estimates get further from the true value, as supported by Proposition~\ref{prop:offline_estimates}.
Therefore, when $\gamma$ is close to 1 and the lower and upper bound gap is large, our suggestion is to only use the upper bound as a performance metric.
We finish with an experiment when the state space is infinitely large.

\section{Conclusion} \label{sec:conclusion}
We provide new convergence guarantees and validation analysis for policy mirror descent (PMD).
For the deterministic case, we introduce a novel step size that allows PMD to obtain the stronger distribution-free linear convergence.
Moreover, by incorporating our proposed advantage gap function into the step size, we improve PMD so it achieves, for the first time, strongly polynomial runtime to get the optimal solution.
For the stochastic setting, we show the stochastic PMD can also achieve the stronger distribution-free sublinear convergence, and it does so with the same constant step size as in previous developments~\cite{lan2023policy}.
We also pair this convergence analysis with a novel validation analysis,
which can be used to possibly terminate policy gradient methods sooner.
This extends the validation analysis for stochastic convex optimization~\cite{lan2012validation} to the challenging nonconvex landscape of policy optimization~\cite{agarwal2021theory}.
An important future work can be to extend the advantage gap function and its analysis to more realistic general state and action spaces that appear in RL~\cite{lan2022policy}.\newline

\appendix

\section{Proofs from Section~\ref{sec:prob_interest}} \label{sec:missing_pfs_from_prob_interest}
\begin{proof}[Proof of~\cref{lem:fpi_to_fx}]
    Similar to Lemma~\ref{lem:performance_diff_deter},
    \begin{talign*}
        V^\pi(s) 
        &=
        \mathbb{E}\big[ \sum_{t=0}^\infty \gamma^t [c(s_t, \pi(\cdot \vert s_t)) + h^{\pi(\cdot \vert s_t)}(s_t)] \vert s_0=s, a_t \sim \pi(\cdot \vert s_t), s_{t+1} \sim \mathcal{P}(\cdot \vert s_t,a_t) \big] \\
        &=
        \sum_{t=0}^\infty \gamma^t \mathbb{E}\big[c(s_t, \pi(\cdot \vert s_t)) + h^{\pi(\cdot \vert s_t)}(s_t) \vert s_0=s, a_t \sim \pi(\cdot \vert s_t), s_{t+1} \sim \mathcal{P}(\cdot \vert s_t,a_t) \big] \\
        &=
        \sum_{t=0}^\infty \gamma^t \sum_{q \in \cS} \mathrm{Pr}\{s_t=q \vert s_0=s\} \cdot \\
        &\hspace{10pt} 
        \mathbb{E}[c(s_t, \pi(\cdot \vert s_t)) + h^{\pi(\cdot \vert s_t)}(s_t)] \vert s_t=q, s_0=s, a_t \sim \pi(\cdot \vert s_t), s_{t+1} \sim \mathcal{P}(\cdot \vert s_t,a_t) \big] \\
        &=
        \sum_{q \in \cS} \sum_{t=0}^\infty \gamma^t \mathrm{Pr}^\pi\{s_t=q \vert s_0=s\} \cdot [c(q, \pi(\cdot \vert q)) + h^{\pi(\cdot \vert q)}(q)].
    \end{talign*}
    Noticing $\sum_{t=0}^\infty \gamma^t \mathrm{Pr}^\pi\{s_t=q \vert s_0=s\} = (1-\gamma)^{-1} \kappa_s^\pi(q)$, 
    we have
    \begin{talign*}
        f_\rho(\pi)
        &=
        \sum_{q \in \cS} [c(q, \pi(\cdot \vert q)) + h^{\pi(\cdot \vert q)}(q)] \sum_{s \in \cS} (1-\gamma)^{-1} \rho(s) \kappa^\pi_{s}(q).
    \end{talign*}
    The proof is complete by observing the last term $\sum_{s \in \cS} (1-\gamma)^{-1} \rho(s) \kappa^\pi_{s}(q) = \eta^\pi_\rho(q)$.
\end{proof}

\section{Proofs from Section~\ref{sec:improved_sublinear_convergence}} \label{sec:proofs_for_improved_sublinear_convergence}
\begin{proof}[Proof for Lemma~\ref{lem:light_tail_azuma_hoeffding}]
    
    To start, Case 2 is the same as~\cite[Lemma 2]{lan2012validation}. Only Case 1 differs since we do not have zero mean, i.e., $\mathbb{E}_{\vert \xi_{[t-1]}} \phi_t = 0$.
    Our proof below shows how to handle this.

    Let $\bar{\phi}_t := \phi_t/\sigma_t$. 
    \sloppy By the given assumptions on $\phi_t$ and Jensen's inequality, $\mathbb{E}_{\vert \xi_{[t-1]}} [\exp\{a \bar{\phi_t}^2\}] \leq \exp\{a\}$ for any $a \in [0,1]$ (see~\cite[Lemma 2]{lan2012validation}).
    Now, we enter the step that differs from~\cite[Lemma 2]{lan2012validation}, since we have a nonzero expected value.
    Using the fact $\exp\{x\} \leq x + \exp\{9x^2/16\}$ for any $x$ and the assumption $\mathbb{E}_{\vert \xi_{[t-1]}}[\phi_t] \leq \sigma_t/N$, we deduce 
    \begin{talign*}
        \mathbb{E}_{\vert \xi_{[t-1]}} [\exp \{\lambda \bar{\phi_t} \}]
        \leq
        {\lambda} \mathbb{E}_{\vert \xi_{[t-1]}}[ \bar{\phi}_t] + \mathbb{E}_{\vert \xi_{[t-1]}}[\exp\{(9\lambda^2/16)\bar{\phi}_t^2 \}] 
        \leq
        {\lambda}/N
        +
        \exp\{9\lambda^2/16\}, ~~ \forall \lambda \in [0,4/3]. 
    \end{talign*}
    Then similar to~\cite[Lemma 2]{lan2012validation}, it can be shown that (with the help of $x \leq \mathrm{exp}\{3x^2/4\}$ for any $x$)
    \begin{talign*}
        \mathbb{E}_{\vert \xi_{[t-1]}} [\exp \{\lambda \bar{\phi_t} \}]
        \leq
        {\lambda}/N
        +
        \exp\{3\lambda^2/4\} 
        \leq
        (1+N^{-1}) \mathrm{exp}\{3\lambda^2/4\}, 
        ~~\forall \lambda \geq 0.
    \end{talign*}
    By a change of variables, we equivalently have
    \begin{talign} \label{eq:mgf_quadratic_ub}
        \mathbb{E}_{\vert \xi_{[t-1]}}[\exp \{\kappa \phi_t\}]
        \leq 
        (1+N^{-1})\exp\{3\kappa^2 \sigma_t^2/4\}, \ \forall \kappa > 0.
    \end{talign}
    The rest of the proof follows similarly to~\cite[Lemma 2]{lan2012validation}, with the main difference being our bound incurs an addition factor of $\exp\{1\}$ since $(1+c \cdot N^{-1})^{N} \leq \exp\{c\}$ for any $c \geq 0$ and $N \geq 0$.
\end{proof}

\begin{proof}[Proof of Theorem~\ref{thm:strongly_convex_agg_convergence}]
    First, note $\sum_{t=1}^k t^{-1} \leq \log(2k)$.
    In view of the step size $\eta_t = (\mu_h(t+1))^{-1}$ and $\|\pi'(\cdot \vert s) - \pi(\cdot \vert s)\| \leq 2$ for any two policies $\pi'$ and $\pi$ (since we assumed $\max_{p \in \DA} \|p\|\leq 1$), then
    \begin{talign} 
        &(1-\gamma)\sum_{t=0}^{k-1} \mathbb{E}[V^{\pi_t}(s) - V^{\pi^*}(s)]  + \eta_{k-1}^{-1} \mathbb{E}\mathbb{E}_{q \sim \kappa^{\pi^*}_s} [D^{\pi^*}_{\pi_k}(q)] \nonumber \\
        &~~~\leftstackrel{\text{Lemma~\ref{lem:pmd_value_descent}}}{\leq}
        \mathbb{E}_{q \sim \kappa^{\pi^*}_s} \big[ \eta_0^{-1} D^{\pi^*}_{\pi_0}(q) + \sum_{t=1}^{k-1} (-\eta_{t-1}^{-1}(1+\eta_{t-1} \mu_h)  + \eta_t^{-1}) D^\pi_{\pi_t}(q) \big] \nonumber \\
        &\hspace{30pt}+
        \mathbb{E}_{q \sim \kappa^{\pi^*}_s} \big[
        \sum_{t=0}^{k-1} \eta_t(\mathbb{E}\|\tQtq\|_*^2 + M_h^2) + \sum_{t=0}^{k-1} \mathbb{E} \zeta_t(q,\pi^*) \big]  \label{eq:agg_convergence_strong_convex} \\
        &~~~\leftstackrel{\substack{\text{Choice of $\eta_t$,} \\~\eqref{eq:second_moment},~\eqref{eq:bias_bound}}}{\leq}
        \mu_h \bar{D}_0 + \mu_h^{-1} (\bar{Q}^2 + M_h^2) \log(2k)  + 2 \varsigma k,\nonumber 
    \end{talign}
    where $\zeta_t(q,\pi)$ is from Lemma~\ref{lem:pmd_value_descent}.
    Dividing by $(1-\gamma)k$ establishes the bound in expectation.

    To prove the second result, we need to decompose the bias terms differently.
    Our decomposition breaks the iterations into \textit{partitions} of consecutive iterations. 
    The i-th partition (starting with $i=0$) consists of iterations $[k_i,k_{i+1})$, where
    $k_0=0$ and $k_i = C \cdot 2^{i-1}$ for $i\geq 1$ and 
    $C \equiv C(k)$ (see Theorem~\ref{thm:strongly_convex_agg_convergence} for the definition of $C(k)$).
    To identify which partition iteration $t$ belongs to, we define the mapping $i(t) = \mathrm{argmax}_{i \in \mathbb{Z}_+} \{ k_{i} \leq t < k_{i+1}\}$.
    Since $C(k) \geq 1$, then $i(\tau) \leq i(k) \leq \log_2(k)$ for any $\tau \leq k$.

    First, for any $\tau \leq k$, we can use an argument similar to~\eqref{eq:high_tail_ah_second_momement} 
    \begin{talign} \label{eq:controlled_variance_recursive}
        \mathbb{E}_{q \sim \kappa^{\pi^*}_s} \big[ \sum_{t=0}^{\tau-1} \eta_t(\|\tQtq\|_*^2 + M_h^2)]
        &\leq
        \mathbb{E}_{q \sim \kappa^{\pi^*}_s} \big[ \sum_{t=0}^{k-1} \eta_t(\|\tQtq\|_*^2 + M_h^2)] \nonumber \\
        &\leq
        25\mu_h^{-1} (\bar{Q}^2 + M_h^2) \log(4k\vert \cS \vert/\delta), \ \forall s \in \cS,
    \end{talign}
    where the second inequality holds with probability $1-\delta$, 
    and we used the bounds $\sum_{t=0}^{\tau-1} t^{-1} \leq \log(2\tau) \leq \log(2k)$ and $\sum_{t=1}^{\tau} t^{-2} \leq 4$ to simplify the inequality.
    
    Second, we claim that for any $\tau$ where $\tau \in [k_i, k_{i+1}-1]$, then
    \begin{talign} \label{eq:monotone_l1_distance}
        D_{\|\cdot\|, [k_i,\tau]} 
        := 
        \max_{t' = k_i,\ldots,\tau} \max_{s' \in \cS} \|\pi_{t'}(\cdot \vert s')-\pi^*(\cdot \vert s')\|
        \leq
        2^{-i/2+1}, \ \forall s' \in \cS.
    \end{talign}
    The above says the distance to optimal solution is decreasing.
    Taking the above for granted as being true, then using an argument similar to~\eqref{eq:high_tail_ah_bias}, we have at iteration $\tau$ (recall $\tau < k_{i+1})$,
    \begin{talign}
        \sum_{t=k_i}^{\tau} \zeta_t(q, \pi^*) 
        &\leq
        \sigma D_{\|\cdot\|, [k_i,\tau]} \sqrt{3(\tau-k_i+1) \log(2 \vert \cS \vert/\delta)} \nonumber \\
        &\leftstackrel{\substack{k_i = C \cdot 2^{i-1}\\ \text{and~\eqref{eq:monotone_l1_distance}}}}{\leq}
        \sigma \sqrt{3C \log(2 \vert \cS \vert/\delta)} \cdot 2, \ \forall q \in \cS, \label{eq:bias_recursive_bound}
    \end{talign}
    where the first inequality holds with probability $1-\delta$.
    Then assuming the above bound holds for all iterations less than or equal to $\tau-1$, we arrive at
    \begin{talign} 
        &(1-\gamma)\sum_{t=0}^{\tau-1} [V^{\pi_t}(s) - V^{\pi^*}(s)]  + \eta_{\tau-1}^{-1}\mathbb{E}_{q \sim \kappa^{\pi^*}_s} [D^{\pi^*}_{\pi_\tau}(q)] \nonumber \\
        &\leftstackrel{\eqref{eq:agg_convergence_strong_convex}}{\leq}
        \mu_h \bar{D}_0 + \mathbb{E}_{q \sim \kappa^{\pi^*}_s} \big[ \sum_{t=0}^{\tau-1} \eta_t(\|\tQtq\|_*^2 + M_h^2) + \sum_{i=0}^{i(\tau)-1} \sum_{t=k_i}^{\min\{\tau, k_{i+1}\}-1 } \zeta_t(q, \pi^*) \big] \nonumber \\
        &\leftstackrel{\substack{\eqref{eq:controlled_variance_recursive}~\text{and}~\eqref{eq:bias_recursive_bound}\\
        \text{and}~i(\tau) \leq \log(k)}}{\leq}
        \mu_h \bar{D}_0 + 25\mu_h^{-1} (\bar{Q}^2 + M_h^2) \log(4k\vert \cS \vert/\delta) 
        + 2\log(k) \sigma \sqrt{3C \log(2 \vert \cS \vert/\delta)}. \label{eq:reindexed_agg_convergence_strong_convex} 
    \end{talign}
    Notice in the last line above, we removed all dependence on $\tau$ as long as $\tau \leq k$.
    
    So, if we can show~\eqref{eq:monotone_l1_distance} for all $\tau \leq k-1$, then we get~\eqref{eq:reindexed_agg_convergence_strong_convex} with $\tau=k$, which is what we wanted to show.
    So to prove~\eqref{eq:monotone_l1_distance} for $\tau \leq k-1$, we will use mathematical induction on $\tau$.
    The base case of $\tau=0,1,\ldots,k_1-1$ (i.e., $\tau \in [k_i,k_{i+1}-1]$ with $i=0$) is true because $\|\pi_{\tau}(\cdot \vert s) - \pi^*(\cdot \vert s)\| \leq 2$ (recall $\max_{p \in \DA}\|p\| \leq 1$).
    For the inductive hypothesis case,
    we have for $\tau = k_1$ and any state $s' \in \cS$,
    \begin{talign}
        \|\pi_\tau(\cdot \vert s') - \pi^*(\cdot \vert s')\|^2 
        &\leq D^{\pi^*}_{\pi_\tau}(s') \nonumber \\
        &\leftstackrel{\eqref{eq:visitation_measure}}{\leq}
        \frac{1}{1-\gamma} \mathbb{E}_{q \sim \kappa^{\pi^*}_{s'}} [D^{\pi^*}_{\pi_\tau}(q)] \nonumber \\
        &\leftstackrel{\substack{\eqref{eq:reindexed_agg_convergence_strong_convex}~ \text{and} \\
        \eta_{\tau-1} = 1/(\mu_h \tau)}}{\leq}
        \frac{\mu_h \bar{D}_0 + 25\mu_h^{-1} (\bar{Q}^2 + M_h^2) \log(4k\vert \cS \vert/\delta) 
        + 2\log(k) \sigma \sqrt{3C \log(2 \vert \cS \vert/\delta)}}{(1-\gamma) \mu_h \tau} \nonumber \\
        &\leftstackrel{\substack{{\tau \geq} k_{i}=C \cdot 2^{i-1}}}{\leq}
        2^{-i+1},  \label{eq:l1_geometric_decrease_recursion}
    \end{talign}
    where in the last line we recall the definition of $C \equiv C(k)$ as defined in the statement of Theorem~\ref{thm:strongly_convex_agg_convergence}.
    Together with the inductive hypothesis, this establishes~\eqref{eq:monotone_l1_distance} for any $\tau \leq k_1$.
    Since the step size $\eta_t$ is decreasing, then one can similarly show~\eqref{eq:l1_geometric_decrease_recursion} for $\tau=k_{i}+1$, and therefore we can establish~\eqref{eq:monotone_l1_distance} for $\tau=k_{i}+1$.
    Successively repeating this argument for $\tau=k_i+2,\ldots,k_{i+1}-1,k_{i+1},\ldots,k-1$ will complete the proof by induction of~\eqref{eq:monotone_l1_distance} for all $\tau=0,\ldots,k-1$, as desired.

    The total failure rate is $(1+k)\delta$, where a failure of $\delta$ is from applying~\eqref{eq:high_tail_ah_second_momement} in~\eqref{eq:controlled_variance_recursive} and (by union bound) a failure rate of $k\delta$ from applying~\eqref{eq:high_tail_ah_bias} within~\eqref{eq:bias_recursive_bound} at most $k$ times.
%
\end{proof}

\section{Proofs from Section~\ref{sec:validation_analysis}} \label{sec:proofs_for_validation_analysis}
To prove last-iterate convergence, we need a technical result (see~\cite[Lemma 10]{orabona2021parameter} for a proof).
\begin{lemma} \label{lem:last_iter_general}
    For non-increasing constants $\alpha_t > 0$ and a nonnegative sequence $X_t$, 
    \begin{talign*}
        \alpha_{k-1} X_{k-1}
        &\leq
        k^{-1} \sum_{t=0}^{k-1} \alpha_t X_t
        +
        \sum_{\ell=1}^{k-1} \frac{1}{\ell(\ell+1)} \sum_{t=k-\ell}^{k-1} \alpha_t (X_t - X_{k-\ell-1}).
    \end{talign*}
\end{lemma}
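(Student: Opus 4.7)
The plan is to introduce the tail‑averaged quantity
$$T_\ell := \frac{1}{\ell}\sum_{t=k-\ell}^{k-1} \alpha_t X_t, \qquad \ell = 1,\ldots,k,$$
so that the left‑hand side is $T_1 = \alpha_{k-1} X_{k-1}$ and the first term on the right is $T_k = k^{-1}\sum_{t=0}^{k-1}\alpha_t X_t$. With this notation, the desired inequality reduces to showing
$$T_1 - T_k \;\leq\; \sum_{\ell=1}^{k-1} \frac{1}{\ell(\ell+1)} \sum_{t=k-\ell}^{k-1} \alpha_t (X_t - X_{k-\ell-1}),$$
which I would obtain by proving a one‑step bound on $T_\ell - T_{\ell+1}$ and telescoping.

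For the one‑step bound, I would first compute the difference explicitly. Splitting off the extra term in $T_{\ell+1}$ and using $\tfrac{1}{\ell}-\tfrac{1}{\ell+1} = \tfrac{1}{\ell(\ell+1)}$ gives
$$T_\ell - T_{\ell+1} \;=\; \frac{1}{\ell(\ell+1)}\sum_{t=k-\ell}^{k-1}\alpha_t X_t \;-\; \frac{\alpha_{k-\ell-1} X_{k-\ell-1}}{\ell+1}.$$
Separately, expanding the target summand yields
$$\frac{1}{\ell(\ell+1)}\sum_{t=k-\ell}^{k-1}\alpha_t(X_t - X_{k-\ell-1}) \;=\; \frac{1}{\ell(\ell+1)}\sum_{t=k-\ell}^{k-1}\alpha_t X_t \;-\; \frac{X_{k-\ell-1}}{\ell(\ell+1)}\sum_{t=k-\ell}^{k-1}\alpha_t.$$
The two right‑hand sides agree on the first piece, so matching the second pieces reduces to proving
$$\frac{X_{k-\ell-1}}{\ell(\ell+1)}\sum_{t=k-\ell}^{k-1}\alpha_t \;\leq\; \frac{\alpha_{k-\ell-1} X_{k-\ell-1}}{\ell+1},$$
i.e.\ $\tfrac{1}{\ell}\sum_{t=k-\ell}^{k-1}\alpha_t \leq \alpha_{k-\ell-1}$ after cancelling the nonnegative factor $X_{k-\ell-1}$. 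This is exactly where the two hypotheses of the lemma come in: nonnegativity of $X_{k-\ell-1}$ lets us cancel without flipping the inequality, and $\alpha_t$ being non‑increasing gives $\alpha_t \leq \alpha_{k-\ell-1}$ for every $t \geq k-\ell$, so the average over $t \in [k-\ell,k-1]$ is bounded by $\alpha_{k-\ell-1}$.

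Combining these two observations yields
$$T_\ell - T_{\ell+1} \;\leq\; \frac{1}{\ell(\ell+1)}\sum_{t=k-\ell}^{k-1}\alpha_t(X_t - X_{k-\ell-1}), \qquad \ell = 1,\ldots,k-1,$$
and telescoping $\sum_{\ell=1}^{k-1}(T_\ell - T_{\ell+1}) = T_1 - T_k$ produces the claimed inequality. The only genuinely delicate step is this one‑step bound, and the delicacy is purely bookkeeping: one must be careful to cancel $X_{k-\ell-1}$ only after verifying it is nonnegative (so that the direction of the inequality is preserved) and to apply the monotonicity of $\alpha_t$ on exactly the range $t \geq k-\ell$ where $\alpha_t \leq \alpha_{k-\ell-1}$.
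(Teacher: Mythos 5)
Your proof is correct: the one-step identity for $T_\ell - T_{\ell+1}$, the use of nonnegativity of $X_{k-\ell-1}$ and monotonicity of $\alpha_t$ to bound the average $\tfrac{1}{\ell}\sum_{t=k-\ell}^{k-1}\alpha_t$ by $\alpha_{k-\ell-1}$, and the telescoping all check out. The paper does not prove this lemma itself but defers to \cite[Lemma 10]{orabona2021parameter}, and your tail-averaging argument is essentially the same standard (Shamir--Zhang/Orabona) proof given there.
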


\begin{proof}[Proof of Proposition~\ref{prop:convex_last_iterate_with_step_size}]
    We focus on the strongly convex regularization ($\mu_h > 0$), since the case for $\mu_h \geq 0$ is similar and simpler.
    We define $X_t := \mathbb{E}[V^{\pi_t}(s) - V^{\pi^*}(s)] \geq 0$ in Lemma~\ref{lem:last_iter_general} to show
    \begin{talign*} 
        &(1-\gamma) \mathbb{E}[V^{\pi_{k-1}}(s) - V^{\pi^*}(s)] \\
        &\leq
        k^{-1} \sum_{t=0}^{k-1} \mathbb{E}[V^{\pi_t}(s) - V^{\pi^*}(s)] + 
        \sum_{\ell=1}^{k-1} \frac{1}{\ell(\ell+1)} \sum_{t=k-\ell}^{k-1} \alpha_t \mathbb{E}[V^{\pi_t}(s) - V^{\pi_{k-\ell-1}}(s)].
    \end{talign*}
    The first summand can be bounded by Theorem~\ref{thm:strongly_convex_agg_convergence}. 
    For the second summand (over index $\ell$), 
    we have the following auxiliary result for any $k_0 \in [0,k-1]$, which can be shown similarly to~\eqref{eq:agg_convergence_general},
    \begin{talign*}
        &(1-\gamma)\sum_{t=k_0}^{k-1} \mathbb{E}[V^{\pi_t}(s) - V^{\pi_{k_0}}(s)] \\
        &~~~\leftstackrel{\substack{\text{Lemma~\ref{lem:pmd_value_descent}} \\ \text{w/ $\pi=\pi_{t_0}$}}}{\leq}
        \mathbb{E}_{q \sim \kappa^{\pi^*}_s} \big[ D^{\pi_{t_0}}_{\pi_{t_0}}(q) + \sum_{t=t_0}^{k-1} \eta_t^2(\mathbb{E}\|\tQtq\|_*^2 + M_h^2) + \sum_{t=t_0}^{k-1} \mathbb{E} \eta_t \zeta_t(q,\pi_{t_0}) \big]   \\
        &\leq
        \mu_h^{-1} (\bar{Q}^2 + M_h^2) \sum_{t=k_0}^{k-1} \frac{1}{t+1} + 2\varsigma (k-k_0),
    \end{talign*}
    where $\zeta_t(q,\pi)$ is from Lemma~\ref{lem:pmd_value_descent}.
    Putting everything we have established so far together, we derive
    \begin{talign*} 
        &(1-\gamma) \mathbb{E}[V^{\pi_{k-1}}(s) - V^{\pi^*}(s)] \\
        &\leq
        \frac{\mu_h \bar{D}_0 + \mu_h^{-1} (\bar{Q}^2 + M_h^2) \log(2k)  + 2 \varsigma k}{k} 
        +
        \mu_h^{-1} (\bar{Q}^2 + M_h^2)\sum_{\ell=1}^{k-1} \frac{1}{\ell(\ell+1)}\sum_{t=k-\ell}^{k-1} \frac{1}{t+1} + 2\varsigma\sum_{\ell=1}^{k-1} \frac{\ell+1}{\ell(\ell+1)} \\
        &\leq
        \frac{\mu_h \bar{D}_0 + \mu_h^{-1} (\bar{Q}^2 + M_h^2) \log(2k)  + 2 \varsigma k}{k}
        +
        \frac{2\mu_h^{-1} (\bar{Q}^2 + M_h^2)\log(2k)}{k} + 2\varsigma \log(2k),
    \end{talign*}
    where the last line uses the bounds $\sum_{t=0}^{k-1} (t+1)^{-1} \leq \ln(2k)$ and
    \begin{talign*}
        \sum_{\ell=1}^{k-1} \frac{1}{\ell(\ell+1)} \sum_{t=k-\ell}^{k-1} \frac{1}{t+1}
        \leq
        \sum_{\ell=1}^{k-1} \frac{1}{\ell(\ell+1)} \cdot \frac{\ell+1}{k-\ell} 
        =
        \textstyle \sum_{\ell=1}^{k-1} \big( \frac{1}{\ell k} + \frac{1}{k(k-\ell)} \big)  
        =
        \frac{2}{k} \sum_{\ell=1}^{k-1} \frac{1}{\ell} 
        &\leq
        \frac{2\log(2k)}{k}. 
     \end{talign*}
\end{proof}

\bibliographystyle{siamplain}
\bibliography{references}
\end{document}